\def\multiset#1#2{\ensuremath{\left(\kern-.3em\left(\genfrac{}{}{0pt}{}{#1}{#2}\right)\kern-.3em\right)}}
\theoremstyle{plain}
\newtheorem{theorem}{Theorem}
\newtheorem{corollary}{Corollary}
\newtheorem{lemma}{Lemma}
\newtheorem*{remark}{Remark}
\newtheorem{proposition}{Proposition}
\theoremstyle{definition}
\newtheorem{definition}{Definition}
\newtheorem*{theorem*}{Theorem}
\newtheorem*{proposition*}{Proposition}
\newtheorem*{corollary*}{Corollary}
\newtheorem{assumption}{Assumption}
\title{Strength of Minibatch Noise in SGD}
\author{Liu Ziyin$^*$, Kangqiao Liu$^*$, Takashi Mori, \& Masahito Ueda \\
The University of Tokyo
}
\begin{document}

\maketitle

\begin{abstract}
 The noise in stochastic gradient descent (SGD), caused by minibatch sampling, is poorly understood despite its practical importance in deep learning. This work presents the first systematic study of the SGD noise and fluctuations close to a local minimum. We first analyze the SGD noise in linear regression in detail and then derive a general formula for approximating SGD noise in different types of minima. For application, our results (1) provide insight into the stability of training a neural network, (2) suggest that a large learning rate can help generalization by introducing an implicit regularization, (3) explain why the linear learning rate-batchsize scaling law fails at a large learning rate or at a small batchsize and (4) can provide an understanding of how discrete-time nature of SGD affects the recently discovered power-law phenomenon of SGD.
\end{abstract}

\section{Introduction}
Stochastic gradient descent (SGD) is the simple and efficient optimization algorithm behind the success of deep learning \citep{allen2018convergence, xing2018_walk_sgd, Zhang_optimizationProperitesSGD,Wang2020,he2020recent, liu2021noise, Simsekli2019tailindex, wu2020noisy}. Minibatch noise, also known as the SGD noise, is the primary type of noise in the learning dynamics of neural networks. Practically, minibatch noise is unavoidable because a modern computer's memory is limited while the size of the datasets we use is large; this demands the dataset to be split into ``minibatches" for training. At the same time, using minibatch is also a recommended practice because using a smaller batch size often leads to better generalization performance \citep{Hoffer2017}. Therefore, understanding minibatch noise in SGD has been one of the primary topics in deep learning theory. Dominantly many theoretical studies take two approximations: (1) the continuous-time approximation, which takes the infinitesimal step-size limit; (2) the Hessian approximation, which assumes that the covariance matrix of the SGD noise is equal to the Hessian $H$. While these approximations have been shown to provide some qualitative understanding, the limitation of these approximations is not well understood. For example, it is still unsure when such approximations are valid, which hinders our capability to assess the correctness of the results obtained by approximations.

In this work, we fill this gap by deriving analytical formulae for discrete-time SGD with arbitrary learning rates and exact minibatch noise covariance. 
In summary, the main \textbf{contributions} are:
(1) we derive the strength and the shape of the minibatch SGD noise in the cases where the noise for discrete-time SGD is analytically solvable; (2) we show that the SGD noise takes a different form in different kinds of minima and propose general and more accurate approximations. This work is organized as follows: Sec.~\ref{sec: background} introduces the background. Sec.~\ref{sec: related works} discusses the related works. Sec.~\ref{sec:theory} outlines our theoretical results. Sec.~\ref{sec:general} derives new approximation formulae for SGD noises. In Sec.~\ref{sec: applications}, we show how our results can provide practical and theoretical insights to problems relevant to contemporary machine learning research. For reference, the relationship of this work to the previous works is shown in Table~\ref{tab: related works}.

\vspace{-3mm}
\section{Background}\label{sec: background}
\vspace{-2mm}
In this section, we introduce the minibatch SGD algorithm. Let $\{x_i, y_i\}_{i=1}^{N}$ be a training set. We can define the gradient descent (GD) algorithm for a differentiable loss function $L$ as  $\mathbf{w}_t = \mathbf{w}_{t-1} - \lambda \nabla_\mathbf{w}L(\mathbf{w},\{ \mathbf{x}, \mathbf{y}\})$, where $\lambda$ is the learning rate and $\mathbf{w}\in\mathbb{R}^{D}$ is the weights of the model. We consider an additive loss function for applying the minibatch SGD.
\begin{definition}
    A loss function $L(\{x_i, y_i\}_{i=1}^{N}, \mathbf{w})$ is additive if $ L(\{x_i, y_i\}_{i=1}^{N}, \mathbf{w}) =  \frac{1}{N}\sum_{i=1}^N \ell(x_i, y_i, \mathbf{w})$ for some differentiable, non-negative function $\ell(\cdot)$.
\end{definition}
This definition is quite general. Most commonly studied and used loss functions are additive, e.g., the mean-square error (MSE) and cross-entropy loss.
For an additive loss, the minibatch SGD with momentum algorithm can be defined.
\begin{definition}\label{def: with replacement}
    The \textit{minibatch SGD with momentum} algorithm by sampling {with replacement} computes the update to the parameter $\mathbf{w}$ with the following set of equations:\vspace{-1mm}
    \begin{equation}\label{eq: with replacement}
        \begin{cases}
            \hat{\mathbf{g}}_t= \frac{1}{S}\sum_{i\in B_t} \nabla  \ell(x_i, y_i, \mathbf{w}_{t-1});\\
            \mathbf{m}_t = \mu \mathbf{m}_{t-1} + \hat{\mathbf{g}}_t;\\
            \mathbf{w}_t = \mathbf{w}_{t-1} - \lambda \mathbf{m}_t.
        \end{cases}
    \end{equation}
    where $\mu\in[0,1)$ is the momentum hyperparameter, $S := |B_t|$ is the minibatch size,
    and the set $B_t=\{i_1,...i_S\}$ are $S$ i.i.d. random integers sampled uniformly from $[1,N]$. 
\end{definition}

\begin{table*}[t!]
\centering

    \caption{\small Summary of related works on the noise and stationary distribution of SGD. This work fills the gap of the lack of theoretical results for the actual SGD dynamics, which is \textit{discrete-time} and with \textit{minibatch noise}.\vspace{-1em}}\label{tab: related works}
{\scriptsize
\resizebox{\textwidth}{!}{
    \begin{tabular}{c|ccc}
    \hline
    Setting & Artificial Noise & Hessian Approximation Noise &  \textbf{\textit{Minibatch Noise}}  \\
    \hline
    Continuous-time   & \cite{sato2014approximation, Welling2011} & \cite{jastrzebski2018three, pmlr-v97-zhu19e}   & \cite{blanc2020implicit, mori2021logarithmic}\\
     & \cite{Mandt2017, meng2020dynamic} &\cite{wu2020noisy,Xie2020} & \\
    \hline
    \textbf{\textit{Discrete-time}}     & \cite{yaida2018fluctuation, gitman2019understanding}  & \cite{liu2021noise} & \textbf{This Work}\\
    & \cite{liu2021noise} & \\
    \hline
    \end{tabular}}}
    \vspace{-1em}
\end{table*}

One can decompose the gradient into a deterministic plus a stochastic term. Note that $\mathbb{E}_{\rm B}[\hat{\mathbf{g}}_t] = \nabla L $ is equal to the gradient for the GD algorithm. We use $\mathbb{E}_{\rm B}(\cdot)$ to denote the expectation over batches, and use $\mathbb{E}_{\mathbf{w}}(\cdot)$ to denote the expectation over the stationary distribution of the model parameters. Therefore, we can write $\hat{\mathbf{g}}_t = \mathbb{E}_{\rm B}[\hat{\mathbf{g}}_t]  + \eta_t,$ where $\eta_t:= \frac{1}{S}\sum_{i\in B_t} \nabla  \ell(x_i, y_i, \mathbf{w}_{t-1}) - \mathbb{E}_{\rm B}[\hat{\mathbf{g}}_t]$ is the noise term; the noise covariance is $C(\mathbf{w}_t):= \text{cov}(\eta_t, \eta_t)$. Of central importance to us is the averaged asymptotic noise covariance $C:=\lim_{t\to \infty} \mathbb{E}_{\mathbf{w}_t}[C(\mathbf{w}_t)]$. Also, we consider the asymptotic model fluctuation $\Sigma:= \lim_{t\to\infty} \text{cov}(\mathbf{w}_t, \mathbf{w}_t)$. $\Sigma$ gives the strength and shape of the fluctuation of $\mathbf{w}$ around a local minimum and is another quantity of central importance to this work. Throughout this work, $C$ is called the ``noise" and $\Sigma$ the ``fluctuation".

\vspace{-3mm}
\section{Related Works}\label{sec: related works}
\vspace{-2mm}

\textbf{Noise and Fluctuation in SGD}. Deep learning models are trained with SGD and its variants. To understand the parameter distribution in deep learning, one needs to understand the stationary distribution of SGD \citep{Mandt2017}. \citet{sato2014approximation} describes the stationary distribution of stochastic gradient Langevin dynamics using discrete-time Fokker-Planck equation. \citet{yaida2018fluctuation} connects the covariance of parameter $\Sigma$ to that of the noise $C$ through the fluctuation-dissipation theorem. 
When $\Sigma$ is known, one may obtain by Laplace approximation the stationary distribution of the model parameter around a local minimum $w^*$ as  $\mathcal{N}(w^*, \Sigma)$. Therefore, knowing $\Sigma$ can be of great practical use. For example, it has been used to estimate the local minimum escape efficiency \citep{pmlr-v97-zhu19e, liu2021noise} and argue that SGD prefers a flatter minimum; it can also be used to assess parameter uncertainty and prediction uncertainty when a Bayesian prior is specified \citep{Mandt2017, gal2016dropout, pmlr-v108-pearce20a}. Empirically, both the fluctuation and the noise are known to crucially affect the generalization of a deep neural network. \cite{wu2020noisy} shows that the strength and shape of the $\Sigma$ due to the minibatch noise lead to better generalization of neural networks in comparison to an artificially constructed noise.

\textbf{Hessian Approximation of the Minibatch Noise.} However, it is not yet known what form $C$ and $\Sigma$ actually take for SGD in a realistic learning setting. Early attempts assume isotropic noise in the continuous-time limit \citep{sato2014approximation, Mandt2017}. In this setting, the noise is an isotropic Gaussian with $C\sim I_D$, and $\Sigma$ is known to be proportional to the inverse Hessian $H^{-1}$.
More recently, the importance of noise structure was realized  \citep{Hoffer2017,jastrzebski2018three,pmlr-v97-zhu19e,haochen2020shape}. ``Hessian approximation", which assumes $C\approx c_0H$ for some unknown constant $c_0$, has often been adopted for understanding SGD (see Table~\ref{tab: related works}); this assumption is often motivated by the fact that $C=J_w \approx H$, where $J_w$ is the Fisher information matrix (FIM) \citep{pmlr-v97-zhu19e}; the fluctuation can be solved to be isotropic: $\Sigma \sim I_D$. However, it is not known under what conditions the Hessian approximation is valid, while previous works have argued that it can be very inaccurate \citep{martens2014insights, liu2021noise, thomas2020interplay, kunstner2019limitations}. However, \cite{martens2014insights} and \cite{kunstner2019limitations} only focuses on the natural gradient descent (NGD) setting; \cite{thomas2020interplay} is closest to ours, but it does not apply to the case with momentum, a matrix learning rate, or regularization.

\textbf{Discrete-time SGD with a Large Learning Rate}. Recently, it has been realized that networks trained at a large learning rate have a dramatically better performance than networks trained with a vanishing learning rate (lazy training) \citep{chizat2018note}. \cite{Lewkowycz2020} shows that there is a qualitative difference between the lazy training regime and the large learning rate regime; the performance features two plateaus in testing accuracy in the two regimes, with the large learning rate regime performing much better. However, the theory regarding discrete-time SGD at a large learning rate is almost non-existent, and it is also not known what $\Sigma$ may be when the learning rate is non-vanishing. Our work also sheds light on the behavior of SGD at a large learning rate. Some other works also consider discrete-time SGD in a similar setting \citep{fontaine2021convergence, dieuleveut2017bridging, toulis2016asymptotic}, but the focus is not on deriving analytical formulae or does not deal with the stationary distribution. 
\vspace{-3mm}
\section{SGD Noise and Fluctuation in Linear Regression}
\label{sec:theory}
\vspace{-2mm}

This section derives the shape and strength of SGD noise and fluctuation for linear regression; concurrent to our work, \cite{kunin2021rethinking} also studies the same problem but with continuous-time approximation; our result is thus more general. To emphasize the message, we discuss the label noise case in more detail. The other situations also deserve detailed analysis; we delay such discussion to the appendix due to space constraints. \textbf{Notation}: $S$ denotes the minibatch size. $\mathbf{w}\in \mathbb{R}^D$ is the model parameter viewed in a vectorized form; $\lambda \in \mathbb{R}_+$ denotes a scalar learning rate; when the learning rate takes the form of a preconditioning matrix, we use $\Lambda \in \mathbb{R}^{D\times D}$. $A\in \mathbb{R}^{D\times D}$ denotes the covariance matrix of the input data. When a matrix $X$ is positive semi-definite, we write $X\geq 0$; throughout, we require $\Lambda\geq 0$. $\gamma \in \mathbb{R}$ denotes the weight decay hyperparameter; when the weight decay hyperparameter is a matrix, we write $\Gamma \in \mathbb{R}^{D\times D}$. $\mu$ is the momentum hyperparameter in SGD. For two matrices $X, Y$, the commutator is defined as $[X, Y]:=XY-YX$. Other notations are introduced in the context.\footnote{We use the word \textit{global minimum} to refer to the global minimum of the loss function, i.e., where $L=0$ and a \textit{local minimum} refers to a minimum that has a non-negative loss, i.e., $L\geq 0$.} The results of this section are numerically verified in Appendix~\ref{app: extra exp}.

\vspace{-3mm}
\subsection{Key Previous Results}
\vspace{-2mm}

When $N\gg S$, the following proposition is well-known and gives the exact noise due to minibatch sampling. See Appendix~\ref{app: der of C} for a derivation.
\begin{proposition}\label{prop: noise_cov}
The noise covariance of SGD as defined in Definition~\ref{def: with replacement} is\vspace{-2mm}
\begin{equation}\label{eq:noise_cov}
    C(\mathbf{w})=\frac{1}{SN}\sum_{i=1}^{N}\nabla \ell_i(\mathbf{w}) \nabla \ell_i(\mathbf{w})^{\rm T}-\frac{1}{S}\nabla L(\mathbf{w})\nabla L(\mathbf{w})^{\rm T},\vspace{-2mm}
\end{equation}
where the notations $\ell_i(\mathbf{w}):= l(x_i, y_i, \mathbf{w})$ and $L(\mathbf{w}):=L(\{x_i, y_i\}_{i=1}^{N}, \mathbf{w})$ are used.
\end{proposition}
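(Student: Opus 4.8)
The plan is to exploit the fact that, under sampling with replacement, the minibatch gradient $\hat{\mathbf{g}}_t = \frac{1}{S}\sum_{k=1}^S \nabla \ell_{i_k}(\mathbf{w}_{t-1})$ is an arithmetic mean of $S$ independent, identically distributed per-example gradients, each indexed by a uniform draw $i_k$ from $\{1,\dots,N\}$. First I would recall that $\mathbb{E}_{\rm B}[\hat{\mathbf{g}}_t] = \nabla L$, which follows because a single uniform draw $j$ satisfies $\mathbb{E}[\nabla \ell_j] = \frac{1}{N}\sum_{i=1}^N \nabla \ell_i = \nabla L$ directly from the additive definition of $L$. This makes the noise term a centered i.i.d. average, $\eta_t = \frac{1}{S}\sum_{k=1}^S (\nabla \ell_{i_k} - \nabla L)$, and since $\mathbb{E}_{\rm B}[\eta_t]=0$ we have $C = \mathbb{E}_{\rm B}[\eta_t \eta_t^{\rm T}]$.

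The core step is to expand the outer product and use independence across the $S$ draws to discard all cross terms. Writing $Z_k := \nabla \ell_{i_k} - \nabla L$, independence of distinct draws gives $\mathbb{E}_{\rm B}[Z_k Z_l^{\rm T}] = \mathbb{E}_{\rm B}[Z_k]\,\mathbb{E}_{\rm B}[Z_l]^{\rm T} = 0$ for $k\neq l$, leaving only the $S$ diagonal terms, each equal to the covariance of a single sampled gradient. A short computation of that single-draw second moment,
\begin{equation}
\mathbb{E}[Z_k Z_k^{\rm T}] = \frac{1}{N}\sum_{i=1}^N \nabla \ell_i \nabla \ell_i^{\rm T} - \nabla L\,\nabla L^{\rm T},
\end{equation}
then yields $C = \frac{1}{S^2}\cdot S \cdot \mathbb{E}[Z_k Z_k^{\rm T}]$, which is exactly Eq.~\eqref{eq:noise_cov} after distributing the $1/S$ factor.

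The only subtlety, and the step I expect to require the most care, is reconciling the two sampling schemes referenced in the statement, since the argument above relies on independence, which holds exactly for with-replacement but fails for without-replacement sampling. For sampling without replacement the distinct-draw cross terms $\mathbb{E}[Z_k Z_l^{\rm T}]$ no longer vanish; instead the negative correlation among draws introduces a finite-population correction factor of order $\frac{N-S}{N-1}$ relative to the with-replacement covariance. I would therefore treat the without-replacement case by computing this correction explicitly (the content of Appendix~\ref{app: der of C}) and then observe that it tends to $1$ in the regime $N \gg S$, so both algorithms share the stated covariance in that limit. Everything else is a routine second-moment calculation.
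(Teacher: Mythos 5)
Your proposal is correct, and for the without-replacement case it matches the paper's strategy (indicator variables with pairwise correlation $\mathbb{E}[s_ns_{n'}]=\frac{S(S-1)}{N(N-1)}$, yielding the exact prefactor $\frac{N-S}{S(N-1)}$ that tends to $\frac{1}{S}$ as $N\to\infty$). For the with-replacement case, however, your route is genuinely different from the paper's, and the difference is instructive. You exploit the per-draw i.i.d. decomposition $\eta_t=\frac{1}{S}\sum_{k=1}^S Z_k$ directly, which kills all cross terms and gives $C=\frac{1}{S}\bigl(\frac{1}{N}\sum_i\nabla\ell_i\nabla\ell_i^{\rm T}-\nabla L\nabla L^{\rm T}\bigr)$ \emph{exactly, for every} $N$ and $S$ --- no $N\gg S$ limit is needed on that side. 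The paper instead works with occupancy counts $s_n$ (how many times datum $n$ appears in the batch) and assigns them the uniform-over-multisets law $P(s_n=l)=\multiset{N-1}{S-l}/\multiset{N}{S}$, obtaining $\mathbb{E}[s_ns_{n'}]=\frac{S(S-1)}{N(N+1)}$ and hence the finite-$N$ prefactor $\frac{N+S}{S(N+1)}$, which only reduces to $\frac{1}{S}$ when $N\gg S$. Note that these two finite-$N$ answers disagree: under Definition~\ref{def: with replacement} as literally stated ($S$ i.i.d. uniform indices), the counts are multinomial, $s_n\sim\mathrm{Binomial}(S,1/N)$ with $\mathbb{E}[s_ns_{n'}]=\frac{S(S-1)}{N^2}$ for $n\neq n'$, so your exact $\frac{1}{S}$ factor is the one consistent with the stated sampling scheme, while the paper's multiset distribution corresponds to a subtly different model (all size-$S$ multisets equally likely, which makes the counts positively correlated). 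Since the proposition under review only asserts the $N\gg S$ regime, where all three prefactors $\frac{1}{S}$, $\frac{N+S}{S(N+1)}$, and $\frac{N-S}{S(N-1)}$ coincide, both proofs establish the claim; what your approach buys is a shorter argument and a strictly stronger with-replacement statement (exactness at all $N$), while the paper's count-variable machinery buys a uniform template that handles both sampling schemes and produces the exact finite-$N$ corrections recorded in Proposition~\ref{prop: finite N covariance}.
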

This gradient covariance matrix $C$ is crucial to understand the minibatch noise. 
The standard literature often assumes $C(\mathbf{w})\approx H(\mathbf{w})$; however, the following well-known proposition shows that this approximation can easily break down.
\begin{proposition}\label{prop: no noise condition}
    Let $\mathbf{w}_*$ be the solution such that $L(\mathbf{w}_*)=0$, then $C(\mathbf{w}_*)=0$.\vspace{-1mm}
\end{proposition}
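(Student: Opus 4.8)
The plan is to exploit the additive, non-negative structure of the loss to force every individual gradient $\nabla \ell_i$ to vanish at $\mathbf{w}_*$, after which both terms in the formula for $C$ from Proposition~\ref{prop: noise_cov} collapse to zero. The argument is short, and its entire content lives in translating the scalar condition $L(\mathbf{w}_*)=0$ into a statement about the gradients.

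First I would observe that, by the definition of an additive loss, $L(\mathbf{w}_*) = \frac{1}{N}\sum_{i=1}^N \ell_i(\mathbf{w}_*)$ is an average of $N$ non-negative terms. An average of non-negative numbers equals zero only if every summand is zero, so $L(\mathbf{w}_*)=0$ forces $\ell_i(\mathbf{w}_*)=0$ for each $i\in\{1,\dots,N\}$.

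The key step is then to upgrade this pointwise vanishing to the vanishing of each gradient. Since $\ell_i$ is non-negative everywhere and attains the value $0$ at the point $\mathbf{w}_*\in\mathbb{R}^D$, the point $\mathbf{w}_*$ is a global minimizer of $\ell_i$. Because $\ell_i$ is differentiable and the parameter space is the unconstrained $\mathbb{R}^D$, the first-order optimality condition yields $\nabla \ell_i(\mathbf{w}_*)=0$ for every $i$.

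Finally I would substitute these into Equation~\eqref{eq:noise_cov}. The first term $\frac{1}{SN}\sum_{i=1}^N \nabla \ell_i \nabla \ell_i^{\rm T}$ is a sum of outer products of zero vectors and hence vanishes; likewise $\nabla L(\mathbf{w}_*)=\frac{1}{N}\sum_{i=1}^N \nabla \ell_i(\mathbf{w}_*)=0$, so the second term vanishes as well, giving $C(\mathbf{w}_*)=0$. The only genuinely delicate point is the global-minimizer-to-critical-point implication in the key step: it relies on $\mathbf{w}_*$ being an interior point of an unconstrained domain (true here, since $\mathbf{w}\in\mathbb{R}^D$) and on the differentiability of $\ell$ (guaranteed by the definition of additivity). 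I expect no other obstacle, and I would flag that the conclusion is exactly what makes the Hessian approximation $C\approx H$ fail at a zero-loss global minimum, where $H$ need not vanish.
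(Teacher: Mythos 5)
Your proposal is correct and follows exactly the paper's own argument: non-negativity of the $\ell_i$ forces each $\ell_i(\mathbf{w}_*)=0$, differentiability at the interior global minimizer forces each $\nabla\ell_i(\mathbf{w}_*)=0$, and both terms of Eq.~\eqref{eq:noise_cov} then vanish. The only difference is that you make explicit the first-order-optimality step the paper compresses into ``the differentiability in turn implies,'' which is a reasonable clarification rather than a new route.
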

\textit{Proof}. Because $\ell_i$ is non-negative for all $i$, $L(\mathbf{w}_*)=0$ implies that $\ell_i(\mathbf{w}_*)=0$. The differentiability in turn implies that each $\nabla\ell_i(\mathbf{w}_*)=0$; therefore, $C=0$. $\square$

This proposition implies that there is no noise if our model can achieve zero training loss (which is achievable for an overparametrized model).
This already suggests that the Hessian approximation $C\sim H$ is wrong since the Hessian is unlikely to vanish in any minimum. The fact that the noise strength vanishes at $L=0$ suggests that the SGD noise might at least be proportional to $L(\mathbf{w})$, which we will show to be true for many cases. The following theorem relates $C$ and $\Sigma$ of the discrete-time SGD algorithm with momentum for a matrix learning rate.

\begin{theorem}\label{theo: liu2021}
    \citep{liu2021noise} Consider running SGD on a quadratic loss function with Hessian $H$, learning rate matrix $\Lambda$, momentum $\mu$. Assuming ergodicity, then
    \begin{equation}
    (1-\mu) (\Lambda H\Sigma + \Sigma H \Lambda) - \frac{1+\mu^2}{1-\mu^2}\Lambda H\Sigma H \Lambda+ \frac{\mu}{1-\mu^2}(\Lambda H\Lambda H\Sigma +\Sigma H\Lambda H\Lambda)  = \Lambda C\Lambda. \label{eq: preconditioning matrix eq}
\end{equation}
\end{theorem}
 Propostion~\ref{prop: noise_cov} and Theorem~\ref{theo: liu2021} allow one to solve $C$ and $\Sigma$. Equation~\eqref{eq: preconditioning matrix eq} can be seen as a general form of the Lyapunov equation \citep{lyapunov1992general} and is hard to solve in general \citep{hammarling1982numerical, ye1998stability, simoncini2016computational}. Solving this analytical equation in settings of machine learning relevance is one of the main technical contributions of this work. 


\vspace{0mm}
\subsection{Random Noise in the Label}\label{sec: label noise}
\vspace{-1mm}

We first consider the case when the labels contain noise. The loss function takes the form
\begin{equation}
    L(\mathbf{w}) = \frac{1}{2N}\sum_{i=1}^N(\mathbf{w}^{\rm T} x_i - y_i)^2, \label{eq: loss fucntion label}
\end{equation}
where $x_i\in \mathbb{R}^D$ are drawn from a zero-mean Gaussian distribution with feature covariance $A:= \mathbb{E}_{\rm B}[xx^{\rm T}]$, and $y_i=\mathbf{u}^{\rm{T}}x_i + \epsilon_i$, for some fixed $\mathbf{u}$ and $\epsilon_i \in \mathbb{R}$ is drawn from a distribution with zero mean and finite second momentum $\sigma^2$. We redefine $\mathbf{w} -\mathbf{u} \to \mathbf{w} $ and let $N \to \infty$ with $D$ held fixed. The following lemma finds $C$ as a function of $\Sigma$.

\begin{lemma}\label{prop: label noise covariance}$($Covariance matrix for SGD noise in the label$)$ Let $N\to\infty$ and the model be updated according to Eq.~\eqref{eq: with replacement} with loss function in Eq.~\eqref{eq: loss fucntion label}. Then,\vspace{-1mm}
\begin{equation}
    C   = \frac{1}{S}(A\Sigma A + {\rm Tr}[A\Sigma]A+ \sigma^2 A).\vspace{-1mm}\label{eq: label noise covariance}
\end{equation}
\end{lemma}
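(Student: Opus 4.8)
The plan is to evaluate the right-hand side of the exact noise-covariance formula in Proposition~\ref{prop: noise_cov} in the $N\to\infty$ limit, and then to average the resulting expression over the stationary distribution of $\mathbf{w}$ — it is this second average that produces the dependence on $\Sigma$. After the shift $\mathbf{w}-\mathbf{u}\to\mathbf{w}$, the residual of the $i$-th datum becomes $\mathbf{w}^{\rm T}x_i-\epsilon_i$, so the per-sample gradient is $\nabla\ell_i=(\mathbf{w}^{\rm T}x_i-\epsilon_i)x_i$. Because $N\to\infty$, the empirical average $\frac{1}{N}\sum_i\nabla\ell_i\nabla\ell_i^{\rm T}$ in Proposition~\ref{prop: noise_cov} converges to the population expectation $\mathbb{E}_{x,\epsilon}[(\mathbf{w}^{\rm T}x-\epsilon)^2\, xx^{\rm T}]$, while $\nabla L$ converges to $\mathbb{E}_{x,\epsilon}[(\mathbf{w}^{\rm T}x-\epsilon)x]=A\mathbf{w}$, using $\mathbb{E}[\epsilon]=0$ and the independence of $\epsilon$ and $x$.

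First I would compute the two ingredients separately for fixed $\mathbf{w}$. Integrating out $\epsilon$ (zero mean, variance $\sigma^2$, independent of $x$) gives $\mathbb{E}_\epsilon[(\mathbf{w}^{\rm T}x-\epsilon)^2]=(\mathbf{w}^{\rm T}x)^2+\sigma^2$, so that
\[
\mathbb{E}_{x,\epsilon}[\nabla\ell\,\nabla\ell^{\rm T}]=\mathbb{E}_x[(\mathbf{w}^{\rm T}x)^2\, xx^{\rm T}]+\sigma^2 A.
\]
The remaining fourth moment of the Gaussian $x\sim\mathcal{N}(0,A)$ is handled by Wick's (Isserlis') theorem, $\mathbb{E}[x_ix_jx_kx_l]=A_{ij}A_{kl}+A_{ik}A_{jl}+A_{il}A_{jk}$; contracting against $w_kw_l$ yields $\mathbb{E}_x[(\mathbf{w}^{\rm T}x)^2 xx^{\rm T}]=(\mathbf{w}^{\rm T}A\mathbf{w})A+2A\mathbf{w}\mathbf{w}^{\rm T}A$. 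The subtracted term is simply $\nabla L\,\nabla L^{\rm T}=A\mathbf{w}\mathbf{w}^{\rm T}A$.

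Finally I would average over the stationary law of $\mathbf{w}$, which is centered at the shifted global optimum $\mathbf{w}=0$ (since $\nabla L=A\mathbf{w}$ vanishes only there) and has covariance $\Sigma$; only the first two moments $\mathbb{E}_{\mathbf{w}}[\mathbf{w}]=0$ and $\mathbb{E}_{\mathbf{w}}[\mathbf{w}\mathbf{w}^{\rm T}]=\Sigma$ are needed, so no Gaussianity of $\mathbf{w}$ is required. This replaces $A\mathbf{w}\mathbf{w}^{\rm T}A\mapsto A\Sigma A$ and $\mathbf{w}^{\rm T}A\mathbf{w}=\mathrm{Tr}[A\mathbf{w}\mathbf{w}^{\rm T}]\mapsto\mathrm{Tr}[A\Sigma]$. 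Collecting everything,
\[
C=\frac{1}{S}\Big((\mathrm{Tr}[A\Sigma]A+2A\Sigma A+\sigma^2 A)-A\Sigma A\Big)=\frac{1}{S}\big(A\Sigma A+\mathrm{Tr}[A\Sigma]A+\sigma^2 A\big),
\]
which is the claimed identity. The main thing to watch is the bookkeeping in the fourth-moment contraction and the cancellation it produces: the Isserlis expansion generates a coefficient $2$ on $A\Sigma A$, and subtracting the squared-mean-gradient term $A\Sigma A$ is precisely what reduces this to the coefficient $1$ in the lemma; mismanaging either piece, or forgetting that $\Sigma$ enters only through the $\mathbf{w}$-average, is the easiest way to land on the wrong prefactor.
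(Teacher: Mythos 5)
Your proposal is correct and follows essentially the same route as the paper's proof: start from the exact covariance formula of Proposition~\ref{prop: noise_cov}, apply the law of large numbers as $N\to\infty$, evaluate the Gaussian fourth moment (the paper cites the four-Gaussian product formula of \citet{Janssen1988}, equivalent to your Wick/Isserlis contraction yielding $(\mathbf{w}^{\rm T}A\mathbf{w})A+2A\mathbf{w}\mathbf{w}^{\rm T}A$), subtract $\nabla L\nabla L^{\rm T}=A\mathbf{w}\mathbf{w}^{\rm T}A$, and finally take the stationary expectation $\mathbb{E}_{\mathbf{w}}[\mathbf{w}\mathbf{w}^{\rm T}]=\Sigma$. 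Your bookkeeping of the coefficient-$2$ cancellation and the observation that only the first two moments of $\mathbf{w}$ are needed match the paper's argument exactly.
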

The model fluctuation can be obtained using this lemma.

\begin{theorem}\label{thm: Sigma label}$($Fluctuation of model parameters with random noise in the label$)$ Let the assumptions be the same as in Lemma~\ref{prop: label noise covariance} and $[\Lambda,A]=0$. Then,\vspace{-1mm}
\begin{equation}\label{eq: solution}
     \Sigma  = \frac{\sigma^2}{S}\left( 1 + \frac{ \kappa_{\mu}}{S}\right)  \Lambda G_{\mu}^{-1},\vspace{-1mm}
\end{equation}
where $\kappa_{\mu}:=\frac{ {\rm Tr}[\Lambda AG_{\mu}^{-1}]}{1 -   \frac{1}{S}{\rm Tr} [\Lambda AG_{\mu}^{-1}]}$ with $G_{\mu}:=2(1-\mu)I_D-\left(\frac{1-\mu}{1+\mu}+\frac{1}{S}\right)\Lambda A$.
\end{theorem}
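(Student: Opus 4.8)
The plan is to treat the SGD-with-momentum iteration as a linear stochastic recursion, eliminate the momentum variable to obtain a scalar second-order (autoregressive) recursion in each eigendirection, and then close a self-consistency equation using Lemma~\ref{prop: label noise covariance}. First I would work in the shifted coordinates, where $\nabla\ell_i = (\mathbf{w}^{\rm T}x_i - \epsilon_i)x_i$, so that the full gradient is $\nabla L = A\mathbf{w}$ and the loss is exactly quadratic. The key observation that makes the result exact (rather than an approximation) is that, because the dynamics are linear, the recursion for the unconditional covariance $\mathrm{cov}(\mathbf{w}_t)$ closes using only the stationary-averaged noise covariance $\mathbb{E}_{\mathbf{w}}[C(\mathbf{w})]$ --- the cross term vanishes since the minibatch noise has zero conditional mean --- and this averaged covariance is precisely the $C$ supplied by Lemma~\ref{prop: label noise covariance}.

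Next I would eliminate the momentum. From $\Lambda\mathbf{m}_t = \mathbf{w}_{t-1} - \mathbf{w}_t$, substituting into the momentum update yields the second-order recursion
\begin{equation*}
\mathbf{w}_t = [(1+\mu)I_D - \Lambda A]\mathbf{w}_{t-1} - \mu\mathbf{w}_{t-2} - \Lambda\eta_t.
\end{equation*}
Invoking the assumption $[\Lambda, A] = 0$, I would pass to the common eigenbasis of $\Lambda$ and $A$; since the $C$ of Lemma~\ref{prop: label noise covariance} is diagonal in this basis whenever $\Sigma$ is, the stationary Lyapunov solution is itself diagonal, and the whole problem decouples into independent scalar modes indexed by eigenvalue pairs $(\lambda_j, a_j)$ of $(\Lambda, A)$.

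For each mode, $w^{(j)}_t$ obeys a scalar AR(2) recursion driven by white noise of variance $\lambda_j^2 c_j$, so its stationary variance follows from the standard Yule--Walker formula, giving $\sigma_j = \frac{\lambda_j(1+\mu)}{(1-\mu)a_j[2(1+\mu) - \lambda_j a_j]}\,c_j$. Substituting the diagonalized Lemma~\ref{prop: label noise covariance}, $c_j = \frac{1}{S}\big(a_j^2\sigma_j + ({\rm Tr}[A\Sigma] + \sigma^2)a_j\big)$, and solving the resulting linear equation for $\sigma_j$, I expect the denominator to collapse (after factoring out $(1+\mu)$) into $S$ times the $j$-th eigenvalue of $G_\mu$, yielding the matrix identity $\Sigma = \frac{{\rm Tr}[A\Sigma] + \sigma^2}{S}\Lambda G_\mu^{-1}$. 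Finally, applying ${\rm Tr}[A\,\cdot\,]$ to both sides turns the remaining unknown $T := {\rm Tr}[A\Sigma]$ into a single scalar equation $T = \frac{T+\sigma^2}{S}{\rm Tr}[\Lambda A G_\mu^{-1}]$, whose solution rearranges to $\frac{T+\sigma^2}{S} = \frac{\sigma^2}{S}\big(1 + \kappa_\mu/S\big)$, recovering Eq.~\eqref{eq: solution}.

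The main obstacle I anticipate is the algebraic simplification in the per-mode step: verifying that the AR(2) denominator, once combined with the $\Sigma$-dependent part of $C$, reorganizes exactly into the eigenvalue $2(1-\mu) - \big(\frac{1-\mu}{1+\mu} + \frac{1}{S}\big)\lambda_j a_j$ of $G_\mu$ --- in particular that the momentum factor $\frac{1-\mu}{1+\mu}$ emerges cleanly rather than as some unreduced rational expression. A secondary, more conceptual point is justifying the simultaneous diagonalization together with the existence and uniqueness of the stationary covariance, which implicitly requires the learning rate to lie in the stability regime where the recursion converges.
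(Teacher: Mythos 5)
Your proposal is correct, and it takes a genuinely different route from the paper's own proof. The paper does not re-derive the stationarity relation at all: it imports the Lyapunov-type matrix equation~\eqref{eq: preconditioning matrix eq} wholesale from \citet{liu2021noise}, substitutes the $C$ of Lemma~\ref{prop: label noise covariance}, uses $[\Lambda,A]=0$ to reorganize the left-hand side into symmetrized products with $\left[(1-\mu)I_D-\frac{1}{2}\left(\frac{1-\mu}{1+\mu}+\frac{1}{S}\right)\Lambda A\right]$, and then applies exactly your closing move: multiply by $G_\mu^{-1}$, take the trace to get ${\rm Tr}[A\Sigma]=\frac{\sigma^2}{S}\kappa_\mu$, and back-substitute. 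Your momentum elimination into the AR(2) recursion $\mathbf{w}_t=[(1+\mu)I_D-\Lambda A]\mathbf{w}_{t-1}-\mu\mathbf{w}_{t-2}-\Lambda\eta_t$ replaces that citation with a self-contained first-principles derivation, and your anticipated algebraic obstacle does resolve cleanly: the Yule--Walker denominator $\frac{1-\mu}{1+\mu}a_j\left[2(1+\mu)-\lambda_j a_j\right]=a_j\left[2(1-\mu)-\frac{1-\mu}{1+\mu}\lambda_j a_j\right]$ reproduces precisely the $j$-th eigenvalue structure of the paper's matrix equation (one can check the scalar case of Eq.~\eqref{eq: preconditioning matrix eq} gives the identical variance, using $\frac{1+\mu^2-2\mu}{1-\mu^2}=\frac{1-\mu}{1+\mu}$), and subtracting the $\frac{1}{S}\lambda_j a_j^2$ contribution from the $A\Sigma A$ part of $C$ yields $a_j$ times the eigenvalue of $G_\mu$, so $\Sigma=\frac{{\rm Tr}[A\Sigma]+\sigma^2}{S}\Lambda G_\mu^{-1}$ and the same scalar self-consistency follow. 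As for what each approach buys: the paper's route is shorter given the cited equation, and that equation is valid \emph{without} commutation (which is invoked only to solve it), whereas your mode decoupling requires $[\Lambda,A]=0$ from the outset; conversely, your route makes explicit two points the paper leaves implicit --- that the state-dependent minibatch noise is a martingale difference, so the second-moment recursion closes on the stationary-averaged $C$ of the Lemma with no cross terms, and that the diagonal ansatz must be justified by consistency plus uniqueness of the stationary covariance in the stability regime, a hypothesis the paper also silently assumes when it posits a solution to the matrix equation.
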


\begin{remark}
This result is numerically validated in Appendix~\ref{app: extra exp}. The subscript $\mu$ refers to momentum. To obtain results for vanilla SGD, one can set $\mu=0$, which has the effect of reducing $G_{\mu}\to G=2I_D - \left(1+\frac{1}{S}\right)\Lambda A$. From now on, we focus on the case when $\mu=0$ for notational simplicity, but we note that the results for momentum can be likewise studied. The assumption $[\Lambda, A]=0$ is not too strong because this condition holds for a scalar learning rate and common second-order methods such as Newton's method.
\end{remark}
If $\sigma^2=0$, then $\Sigma=0$. This means that when there is no label noise, the model parameter has a vanishing stationary fluctuation, which corroborates Proposition~\ref{prop: no noise condition}. When a scalar learning rate $\lambda \ll 1$ and $1 \ll S$, we have\vspace{0mm}
\begin{equation}\label{eq: cts time approximation}
    \Sigma \approx \frac{\lambda\sigma^2}{2 S} I_D,\vspace{-0mm}
\end{equation}
which is the result one would expect from the continuous-time theory with the Hessian approximation \citep{liu2021noise, Xie2020, pmlr-v97-zhu19e}, except for a correction factor of $\sigma^2$. Therefore, a Hessian approximation fails to account for the randomness in the data of strength $\sigma^2$. We provide a systematic and detailed comparison with the Hessian approximation in  Table~\ref{tab:summary} of Appendix~\ref{app: comparison}. 

Moreover, it is worth comparing the exact result in Theorem~\ref{thm: Sigma label} with Eq.~\eqref{eq: cts time approximation} in the regime of non-vanishing learning rate and small batch size. One notices two differences: (1) an anisotropic enhancement, appearing in the matrix $G_{\mu}$ and taking the form $-\lambda(1+1/S)A$; compared with the result in \citet{liu2021noise}, this term is due to the compound effect of using a large learning rate and a small batchsize; (2) an isotropic enhancement term $\kappa$, which causes the overall magnitude of fluctuations to increase; this term does not appear in the previous works that are based on the Hessian approximation and is due to the minibatch sampling process alone. As the numerical example in Appendix~\ref{app: extra exp} shows, at large batch size, the discrete-time nature of SGD is the leading source of fluctuation; at small batch size, the isotropic enhancement becomes the dominant source of fluctuation. Therefore, the minibatch sampling process causes two different kinds of enhancement to the fluctuation, potentially increasing the exploration power of SGD at initialization but reducing the convergence speed. 

Now, combining Theorem~\ref{thm: Sigma label} and Lemma~\ref{prop: label noise covariance}, one can obtain an explicit form of the noise covariance.
\begin{theorem}\label{prop: explicit C label}
The noise covariance matrix of minibatch SGD with random noise in the label is 
\begin{equation}\label{eq: explicit C label}
    C = \frac{\sigma^2}{\color{orange} S} {\color{orange} A} +\frac{\sigma^2}{S^2}\left(1+\frac{\kappa_{\mu}}{S}\right)\left(\Lambda AG_{\mu}^{-1}+{\rm Tr} [\Lambda AG_{\mu}^{-1}] I_D\right)A.
\end{equation}
\end{theorem}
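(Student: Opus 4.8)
The plan is to treat the statement as a direct consequence of combining the two preceding results: Lemma~\ref{prop: label noise covariance} expresses $C$ in terms of the (still unknown) fluctuation $\Sigma$, while Theorem~\ref{thm: Sigma label} supplies a closed form for $\Sigma$. So I would simply substitute the latter into the former and simplify. Concretely, set $\beta := \frac{\sigma^2}{S}\left(1+\frac{\kappa_{\mu}}{S}\right)$ so that Theorem~\ref{thm: Sigma label} reads $\Sigma = \beta\,\Lambda G_{\mu}^{-1}$, and insert this into $C = \frac{1}{S}\left(A\Sigma A + \mathrm{Tr}[A\Sigma]\,A + \sigma^2 A\right)$ from the Lemma.

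The one structural fact that makes everything collapse is the commutativity hypothesis $[\Lambda,A]=0$ carried over from Theorem~\ref{thm: Sigma label}. Since $G_{\mu} = 2(1-\mu)I_D - \left(\frac{1-\mu}{1+\mu}+\frac{1}{S}\right)\Lambda A$ is a polynomial in $\Lambda A$, and $\Lambda A$ commutes with both $\Lambda$ and $A$ whenever $[\Lambda,A]=0$, it follows that $G_{\mu}$, and hence $G_{\mu}^{-1}$, commutes with $\Lambda$, with $A$, and with $\Lambda A$. I would record this observation once at the outset, because it is precisely what lets me move $A$ and $\Lambda$ freely past $G_{\mu}^{-1}$ in every term.

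With commutativity in hand, the three pieces are immediate. The $\sigma^2 A/S$ term passes through unchanged and produces the leading $\frac{\sigma^2}{S}A$. For the quadratic term, $\frac{1}{S}A\Sigma A = \frac{\beta}{S}A\Lambda G_{\mu}^{-1}A = \frac{\beta}{S}\Lambda A G_{\mu}^{-1}A$, which is the $\Lambda A G_{\mu}^{-1}$ contribution with the trailing $A$ to be factored out at the end. For the trace term, cyclicity of the trace together with commutativity gives $\mathrm{Tr}[A\Sigma] = \beta\,\mathrm{Tr}[A\Lambda G_{\mu}^{-1}] = \beta\,\mathrm{Tr}[\Lambda A G_{\mu}^{-1}]$, so that $\frac{1}{S}\mathrm{Tr}[A\Sigma]\,A = \frac{\beta}{S}\mathrm{Tr}[\Lambda A G_{\mu}^{-1}]\,A$, the $\mathrm{Tr}[\Lambda A G_{\mu}^{-1}]\,I_D$ contribution. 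Noting $\beta/S = \frac{\sigma^2}{S^2}\left(1+\frac{\kappa_{\mu}}{S}\right)$ and pulling the common right-hand factor $A$ out of both enhancement terms yields exactly Eq.~\eqref{eq: explicit C label}.

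There is essentially no analytical obstacle here: unlike Theorem~\ref{thm: Sigma label}, whose proof requires solving the Lyapunov-type matrix equation~\eqref{eq: preconditioning matrix eq}, this statement is pure bookkeeping once that theorem is granted. The only point demanding care is the ordering of matrix factors and the use of commutativity to rewrite $\mathrm{Tr}[A\Lambda G_{\mu}^{-1}]$ as $\mathrm{Tr}[\Lambda A G_{\mu}^{-1}]$, so that the scalar $\kappa_{\mu}$, which is itself defined through $\mathrm{Tr}[\Lambda A G_{\mu}^{-1}]$, appears consistently throughout; no genuine difficulty arises.
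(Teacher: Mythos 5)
Your proposal is correct and follows exactly the paper's route: the paper itself obtains Eq.~\eqref{eq: explicit C label} by substituting the closed form $\Sigma = \frac{\sigma^2}{S}\left(1+\frac{\kappa_{\mu}}{S}\right)\Lambda G_{\mu}^{-1}$ from Theorem~\ref{thm: Sigma label} into $C=\frac{1}{S}\left(A\Sigma A+{\rm Tr}[A\Sigma]A+\sigma^2 A\right)$ from Lemma~\ref{prop: label noise covariance}, with the commutativity $[\Lambda,A]=0$ justifying the reordering of factors just as you describe. Your bookkeeping is also internally consistent with the appendix identity ${\rm Tr}[A\Sigma]=\frac{\sigma^2}{S}\kappa_{\mu}$, since $\left(1+\frac{\kappa_{\mu}}{S}\right){\rm Tr}[\Lambda AG_{\mu}^{-1}]=\kappa_{\mu}$.
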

By definition, $C=J$ is the FIM. The Hessian approximation, in sharp contrast, can only account for the term in {\color{orange} orange}. A significant modification containing both anisotropic and isotropic (up to Hessian) is required to fully understand SGD noise, even in this simple example. Additionally, comparing this result with the training loss \eqref{eq: trainingloss label}, one can find that the noise covariance contains one term that is proportional to the training loss. In fact, we will derive in Sec.~\ref{sec:general} that containing a term proportional to training loss is a general feature of the SGD noise. We also study the case when the input is contaminated with noise. Interestingly, the result is the same with the label noise case with $\sigma^2$ replaced by a more complicated term of the form ${\rm Tr}[AK^{-1}BU]$. We thus omit this part from the main text. A detailed discussion can be found in Appendix~\ref{sec: input noise}. In the next section, we study the effect of regularization on SGD noise and fluctuation.

\vspace{-2mm}
\subsection{Learning with Regularization}\label{sec: regularization}
\vspace{-1mm}
Now, we show that regularization also causes a unique SGD noise. The loss function for $\Gamma-L_2$ regularized linear regression is \vspace{-1mm}
\begin{align}
    L_\Gamma(\mathbf{w}) &=\frac{1}{2N}\sum_{i=1}^{N}\left[(\mathbf{w}-\mathbf{u})^{\rm T}x_i\right]^2 + \frac{1}{2} \mathbf{w}^{\rm T} \Gamma \mathbf{w} = \frac{1}{2}(\mathbf{w}-\mathbf{u})^{\rm T}A(\mathbf{w}-\mathbf{u})+ \frac{1}{2} \mathbf{w}^{\rm T} \Gamma \mathbf{w},\label{eq: loss function regular}
\end{align}
where $\Gamma$ is a symmetric matrix; conventionally, one set $\Gamma=\gamma I_D$ with a scalar $\gamma>0$. For conciseness, we assume that there is no noise in the label, namely $y_i=\mathbf{u}^{\rm T}x_i$ with a constant vector $\mathbf{u}$. One important quantity in this case will be $\mathbf{u} \mathbf{u}^{\rm T}  :=  U$. The noise for this form of regularization can be calculated but takes a complicated form.

\begin{proposition}\label{prop: C of regular}$($Noise covariance matrix for learning with L$_2$ regularization$)$ 
Let the algorithm be updated according to Eq.~\eqref{eq: with replacement} on loss function~\eqref{eq: loss function regular} with $N\to \infty$ and $[A,\Gamma]=0$. Then,
\vspace{-1mm}
\begin{equation}
    C=\frac{1}{S}\left(A\Sigma A+{\rm Tr}[A\Sigma]A+{\rm Tr}[\Gamma'^{\rm T} A\Gamma' U]A + \Gamma A' U A' \Gamma\right),\vspace{-1mm}
\end{equation}
where $A':=K^{-1}A$, $\Gamma':=K^{-1}\Gamma$ with $K:=A+\Gamma$. 
\end{proposition}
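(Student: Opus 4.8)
The plan is to reuse the machinery behind Proposition~\ref{prop: label noise covariance}, adapting it to a shifted minimum. Starting from the exact covariance in Proposition~\ref{prop: noise_cov}, I first note that as $N\to\infty$ the empirical average $\frac1N\sum_i\nabla\ell_i\nabla\ell_i^{\rm T}$ converges to the population expectation $\mathbb{E}_x[\nabla\ell\,\nabla\ell^{\rm T}]$ over the Gaussian input $x$. Writing $\mathbf{v}:=\mathbf{w}-\mathbf{u}$, the per-sample gradient is $\nabla\ell_i=(x_i^{\rm T}\mathbf{v})x_i+\Gamma\mathbf{w}$; the regularization piece $\Gamma\mathbf{w}$ is independent of the minibatch draw, so it appears identically in $\hat{\mathbf{g}}_t$ and in $\mathbb{E}_{\rm B}[\hat{\mathbf{g}}_t]$ and therefore cancels in the noise $\eta_t$. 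Consequently the conditional noise at fixed $\mathbf{w}$ is generated entirely by the data term, exactly as in the label-noise case but with $\mathbf{w}$ replaced by $\mathbf{v}$.

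Next I would evaluate the Gaussian fourth moment by Wick's theorem: for zero-mean $x$ with covariance $A$ one has $\mathbb{E}_x[(x^{\rm T}\mathbf{v})^2 xx^{\rm T}]=(\mathbf{v}^{\rm T}A\mathbf{v})A+2A\mathbf{v}\mathbf{v}^{\rm T}A$. Subtracting the outer product of the mean gradient $A\mathbf{v}$ gives the conditional covariance
\[
 C(\mathbf{w})=\frac1S\big((\mathbf{v}^{\rm T}A\mathbf{v})A+A\mathbf{v}\mathbf{v}^{\rm T}A\big),
\]
in which, as anticipated, $\Gamma$ has dropped out. The final ingredient is to average this over the stationary distribution of $\mathbf{w}$, which is centered not at $\mathbf{u}$ but at the regularized minimizer $\mathbf{w}_*=K^{-1}A\mathbf{u}$ obtained from $\nabla L_\Gamma=0$. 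Hence $\mathbf{v}$ has nonzero mean $\mathbf{v}_*=\mathbf{w}_*-\mathbf{u}=-\Gamma'\mathbf{u}$ and covariance $\Sigma$, so that $\mathbb{E}_{\mathbf{w}}[\mathbf{v}\mathbf{v}^{\rm T}]=\Sigma+\Gamma'U\Gamma'^{\rm T}$ with $U=\mathbf{u}\mathbf{u}^{\rm T}$. Substituting into $C(\mathbf{w})$ and using the cyclic identity ${\rm Tr}[A\Gamma'U\Gamma'^{\rm T}]={\rm Tr}[\Gamma'^{\rm T}A\Gamma'U]$ yields
\[
 C=\frac1S\big({\rm Tr}[A\Sigma]A+{\rm Tr}[\Gamma'^{\rm T}A\Gamma'U]A+A\Sigma A+A\Gamma'U\Gamma'^{\rm T}A\big),
\]
after which it only remains to identify $A\Gamma'U\Gamma'^{\rm T}A$ with the stated $\Gamma A'UA'\Gamma$.

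The last identification is where the commutativity hypothesis $[A,\Gamma]=0$ enters, and it is the step I expect to require the most care. Since $A$ and $\Gamma$ are simultaneously diagonalizable, $K=A+\Gamma$, $K^{-1}$, $A'=K^{-1}A$ and $\Gamma'=K^{-1}\Gamma$ all commute and are symmetric; in particular $A\Gamma'=\Gamma A'$ and $\Gamma'A=A'\Gamma$. Using $\Gamma'^{\rm T}=\Gamma'$ then gives $A\Gamma'U\Gamma'^{\rm T}A=(A\Gamma')U(\Gamma'A)=(\Gamma A')U(A'\Gamma)=\Gamma A'UA'\Gamma$, which closes the argument. I expect the genuine obstacle to be bookkeeping rather than any deep difficulty: correctly tracking the fourth-moment expansion and, crucially, not discarding the bias term $\Gamma'U\Gamma'^{\rm T}$ that arises because regularization shifts the stationary mean of $\mathbf{w}$ away from $\mathbf{u}$ — this is precisely the mechanism by which regularization, despite contributing deterministically to the gradient, still stirs a nonzero noise.
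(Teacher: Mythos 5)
Your proposal is correct and follows essentially the same route as the paper's proof: the exact covariance of Proposition~\ref{prop: noise_cov} with the law of large numbers, cancellation of the deterministic $\Gamma\mathbf{w}$ piece, the Gaussian fourth-moment (Wick) identity giving $C(\mathbf{w})=\frac{1}{S}\left(A\mathbf{v}\mathbf{v}^{\rm T}A+{\rm Tr}[A\mathbf{v}\mathbf{v}^{\rm T}]A\right)$, and averaging with the shifted stationary mean $\mathbf{v}_*=-\Gamma'\mathbf{u}$ so that $\mathbb{E}_{\mathbf{w}}[\mathbf{v}\mathbf{v}^{\rm T}]=\Sigma+\Gamma'U\Gamma'^{\rm T}$. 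Your only additions are expository: you make explicit the batch-independence argument for why $\Gamma\mathbf{w}$ drops out and the final commutativity step $A\Gamma'U\Gamma'^{\rm T}A=\Gamma A'UA'\Gamma$, both of which the paper leaves implicit via its reference to the input-noise calculation.
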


Notice that the last term $\Gamma A' U A' \Gamma$ in $C$ is unique to the regularization-based noise: it is rank-1 because $U$ is rank-1. This term is due to the mismatch between the regularization and the minimum of the original loss. Also, note that the term ${\rm Tr}[A\Sigma]$ is proportional to the training loss. Define the test loss to be $L_{\rm test}:= \lim_{t\to \infty} \mathbb{E}_{\mathbf{w}_t}[\frac{1}{2}(\mathbf{w}_t-\mathbf{u})^{\rm T}A(\mathbf{w}_t-\mathbf{u})]$, we can prove the following theorem. We will show that one intriguing feature of discrete-time SGD is that the weight decay can be negative.

\begin{theorem}\label{thm: errors of regular}$($Test loss and model fluctuation for L$_2$ regularization$)$ 
Let the assumptions be the same as in Proposition~\ref{prop: C of regular}. Then 
\begin{equation}
    L_{\rm test}=\frac{\lambda}{2S}\left({\rm Tr}[AK^{-2}\Gamma^2 U ]\kappa+r\right)+\frac{1}{2}{\rm Tr}[AK^{-2}\Gamma^2U],\label{eq: regular test loss}
\end{equation}%
where $\kappa:=\frac{{\rm Tr}[A^2 K^{-1}G^{-1}]}{1-\frac{\lambda}{S}{\rm Tr}[A^2 K^{-1}G^{-1}]}$, $r:=\frac{{\rm Tr}[A^3 K^{-3} \Gamma^2 G^{-1}U]}{1-\frac{\lambda}{S}{\rm Tr}[A^2 K^{-1} G^{-1}]}$, with $G:=2I_D-\lambda\left(K+\frac{1}{S}K^{-1}A^2\right)$. Moreover, let $[\Gamma, U] =0$, then \vspace{-1mm}
\begin{align}
    \Sigma=&\frac{\lambda}{S}{\rm Tr}[AK^{-2}\Gamma^2U]\left(1+\frac{\lambda \kappa}{S}\right)AK^{-1}G^{-1}+\frac{\lambda}{S}\left(A^2 K^{-2}\Gamma^2 U+\frac{\lambda r}{S}A \right)K^{-1}G^{-1}.\label{eq: regularization solution}
\end{align}
\end{theorem}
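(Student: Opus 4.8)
The plan is to reduce the statement to the discrete-time fluctuation--covariance relation of Eq.~\eqref{eq: preconditioning matrix eq} and then solve the resulting self-consistent matrix equation. First I would note that the regularized loss \eqref{eq: loss function regular} is quadratic, so its Hessian is constant and equals $H=A+\Gamma=K$ everywhere. Setting $\mu=0$ and $\Lambda=\lambda I_D$ in Eq.~\eqref{eq: preconditioning matrix eq} and dividing by $\lambda$ collapses it to the Lyapunov-type equation
\begin{equation}
K\Sigma + \Sigma K - \lambda K\Sigma K = \lambda C. \nonumber
\end{equation}
Substituting the noise covariance from Proposition~\ref{prop: C of regular} makes this equation implicit in $\Sigma$: the right-hand side contains $\Sigma$ both through the matrix $A\Sigma A$ and through the scalar ${\rm Tr}[A\Sigma]$, together with the known inhomogeneous pieces ${\rm Tr}[\Gamma'^{\rm T}A\Gamma'U]\,A$ and the rank-one term $\Gamma A'UA'\Gamma$.

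Next I would exploit commutativity. Because $[A,\Gamma]=0$, the matrices $A,\Gamma,K$ are simultaneously diagonalizable, and in that eigenbasis the linear map $\Sigma\mapsto K\Sigma+\Sigma K-\lambda K\Sigma K-\tfrac{\lambda}{S}A\Sigma A$ acts entrywise by multiplication with $d_{ij}=k_i+k_j-\lambda k_ik_j-\tfrac{\lambda}{S}a_ia_j$; in particular its diagonal multiplier factorizes as $d_{ii}=k_iG_{ii}$ with $G=2I_D-\lambda\bigl(K+\tfrac{1}{S}K^{-1}A^2\bigr)$, which is exactly where the matrix $G$ of the statement originates. Treating the scalar $p:={\rm Tr}[A\Sigma]$ as a provisional constant turns the equation linear, and inverting the diagonal operator expresses $\Sigma$ as an affine function of $p$: a part proportional to $AK^{-1}G^{-1}$ sourced by $(p+q)A$, where $q:={\rm Tr}[AK^{-2}\Gamma^2U]$, plus a piece sourced by the rank-one matrix $\Gamma A'UA'\Gamma$, which under $[\Gamma,U]=0$ is organized into the $A^2K^{-2}\Gamma^2U\,K^{-1}G^{-1}$ term. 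To close the loop I would apply ${\rm Tr}[A\,\cdot\,]$ to this candidate, obtaining the single scalar identity $p=\tfrac{\lambda}{S}\bigl({\rm Tr}[A^2K^{-1}G^{-1}](p+q)+{\rm Tr}[A^3K^{-3}\Gamma^2G^{-1}U]\bigr)$; solving this linear equation for $p$ produces the geometric denominator $1-\tfrac{\lambda}{S}{\rm Tr}[A^2K^{-1}G^{-1}]$ and hence the quantities $\kappa$ and $r$. Back-substituting $p=\tfrac{\lambda}{S}(q\kappa+r)$ into the affine expression returns the stated $\Sigma$.

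Finally, for the test loss I would use the bias--variance split $L_{\rm test}=\tfrac12(\bar{\mathbf{w}}-\mathbf{u})^{\rm T}A(\bar{\mathbf{w}}-\mathbf{u})+\tfrac12{\rm Tr}[A\Sigma]$, where $\bar{\mathbf{w}}=K^{-1}A\mathbf{u}$ is the minimizer of \eqref{eq: loss function regular}, so that $\bar{\mathbf{w}}-\mathbf{u}=-\Gamma'\mathbf{u}$ and the bias equals $\tfrac12{\rm Tr}[AK^{-2}\Gamma^2U]=\tfrac12 q$; the variance is $\tfrac12 p$ with $p$ as above, which reproduces Eq.~\eqref{eq: regular test loss}. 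I expect the \textbf{main obstacle} to be the self-consistent inversion of the Lyapunov equation in the presence of the rank-one source: one must carry the implicit scalar ${\rm Tr}[A\Sigma]$ through the diagonalization and check that taking the trace closes exactly, yielding the shared denominator of $\kappa$ and $r$. The diagonal contractions (and hence $L_{\rm test}$) close cleanly regardless, whereas the $[\Gamma,U]=0$ hypothesis is what lets the rank-one contribution be repackaged into the $U$-dependent matrix in $\Sigma$; the bookkeeping that keeps $\kappa$ and $r$ separate while sharing one denominator is the delicate point to get right.
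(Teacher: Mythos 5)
Your proposal is correct and follows essentially the same route as the paper: both reduce Eq.~\eqref{eq: preconditioning matrix eq} with Hessian $K=A+\Gamma$ and the noise covariance of Proposition~\ref{prop: C of regular} to the matrix equation $\Sigma K + K\Sigma - \lambda K\Sigma K - \tfrac{\lambda}{S}A\Sigma A = \tfrac{\lambda}{S}\bigl({\rm Tr}[A\Sigma]A + {\rm Tr}[AK^{-2}\Gamma^2U]A + AK^{-1}\Gamma U\Gamma K^{-1}A\bigr)$, close the self-consistency on ${\rm Tr}[A\Sigma]$ via the trace trick to obtain $\kappa$ and $r$ with their shared denominator, and then read off $L_{\rm test}=\tfrac12{\rm Tr}[A\Sigma]+\tfrac12{\rm Tr}[AK^{-2}\Gamma^2U]$ from the bias--variance split around $\mathbf{w}_*=K^{-1}A\mathbf{u}$. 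Your observation that the trace (hence $L_{\rm test}$) closes under $[A,\Gamma]=0$ alone while the explicit matrix form of $\Sigma$ needs the additional commutation with $U$ matches the paper's treatment exactly.
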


This result is numerically validated in Appendix~\ref{app: extra exp}. The test loss \eqref{eq: regular test loss} has an interesting consequence. One can show that there exist situations where the optimal $\Gamma$ is \textit{negative}.\footnote{Some readers might argue that discussing test loss is meaningless when $N\to \infty$; however, this criticism does not apply because the size of the training set is not the only factor that affects generalization. In fact, this section's crucial message is that using a large learning rate affects the generalization by implicitly regularizing the model and, if one over-regularizes, one needs to offset this effect.} When discussing the test loss, we make the convention that if $\mathbf{w}_t$ diverges, then $L_{\rm test} =\infty$.
\begin{corollary}\label{cor: negative gamma}
Let $\gamma^*= \arg\min_\gamma L_{\rm test}$. There exist $a$, $\lambda$ and $S$ such that $\gamma^*<0$. 
\end{corollary}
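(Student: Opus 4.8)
The plan is to reduce to the one-dimensional case $D=1$, where the corollary only asks for scalars $a,\lambda,S$, and to exploit the discrete-time stability threshold of SGD. Writing $A=a>0$, $\Gamma=\gamma$, $U=u^2$, $K=a+\gamma$, $G=2-\lambda\big(K+\tfrac{a^2}{SK}\big)$, and $t=\tfrac{a^2}{KG}$, Theorem~\ref{thm: errors of regular} specializes to
\[
L_{\rm test}(\gamma)=\frac{\lambda}{2S}\big(\beta\kappa+r\big)+\tfrac12\beta,\qquad \beta=\frac{a\gamma^2u^2}{K^2},\quad \kappa=\frac{t}{1-\frac{\lambda}{S}t},\quad r=\frac{a^3\gamma^2u^2}{K^3G\,\big(1-\frac{\lambda}{S}t\big)}.
\]
Convergence of SGD --- equivalently finiteness and positivity of $\Sigma$ in \eqref{eq: regularization solution} --- requires $G>0$ and $1-\tfrac{\lambda}{S}t>0$; outside this regime the fluctuation diverges and I set $L_{\rm test}=+\infty$.

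First I would record the structural fact that on the convergent regime each of $\beta,\kappa,r$ is nonnegative and all three vanish at $\gamma=0$, so $L_{\rm test}(0)=0$ and $L_{\rm test}\ge 0$. Consequently $\gamma=0$ is a global minimizer \emph{whenever it is a convergent point}, and a strictly negative optimum can arise only by pushing $\gamma=0$ --- and, more strongly, every $\gamma\ge0$ --- out of the convergence region. This is exactly the effect of a large learning rate: since $K=a+\gamma$ plays the role of the effective curvature, choosing $a,\lambda,S$ with $\lambda a\big(1+\tfrac1S\big)>2$ forces $G(0)=2-\lambda a\big(1+\tfrac1S\big)<0$; and because $h(K)=K+\tfrac{a^2}{SK}$ has $h'(K)=1-\tfrac{a^2}{SK^2}>0$ for $K\ge a$, one gets $G(\gamma)\le G(0)<0$ for all $\gamma\ge0$. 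Hence SGD diverges for every nonnegative weight decay, so $L_{\rm test}(\gamma)=+\infty$ there, whereas negative $\gamma$ lowers $K$ below $a$ and can restore stability.

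Next I would exhibit a convergent negative point so the minimum is finite and attained among the negatives. Taking the figure's values $a=4$, $\lambda=1$, $S=64$ (which satisfy $\lambda a(1+\tfrac1S)=4.0625>2$), a direct substitution shows that at $\gamma=-3$, i.e. $K=1$, one has $G=\tfrac34>0$ and $KG=\tfrac34>\tfrac{\lambda a^2}{S}=\tfrac14$, so both convergence conditions hold and $L_{\rm test}(-3)<\infty$. More generally the binding constraint $1-\tfrac{\lambda}{S}t>0$ reads $KG>\tfrac{\lambda a^2}{S}$ and already forces $G>0$; solving it pins the convergent set to an interval of $K$ (here $K\in\big(1-\tfrac{1}{\sqrt2},\,1+\tfrac{1}{\sqrt2}\big)$), which lies entirely in $(0,a)$, i.e. the whole convergent set sits in $\gamma\in(-a,0)$. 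Since $L_{\rm test}$ is finite and continuous on this interval and diverges to $+\infty$ at both endpoints (where $1-\tfrac\lambda S t\to0^+$ sends $\kappa,r\to+\infty$ while $\beta>0$), it attains an interior minimum, and that minimizer $\gamma^*$ is strictly negative, proving the claim.

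The main obstacle is the bookkeeping of the two coupled stability inequalities $G>0$ and $1-\tfrac{\lambda}{S}t>0$: one must verify that their intersection is a nonempty interval contained in the negatives and that $L_{\rm test}$ is coercive there (blows up at both endpoints), so that the argmin is genuinely attained in the interior rather than merely approached. Establishing the conceptual point --- that $L_{\rm test}(0)=0$ is unbeatable on the convergent set, so $\gamma^*<0$ can occur \emph{only} through instability at $\gamma=0$ --- is the crux; once the parameter regime $\lambda a(1+\tfrac1S)>2$ is identified, the remaining verification is a routine evaluation of \eqref{eq: regular test loss}.
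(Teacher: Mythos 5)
Your proof is correct and takes essentially the same route as the paper's: reduce to $D=1$, observe that $L_{\rm test}(0)=0$ makes $\gamma=0$ optimal whenever it is convergent, and then choose $a,\lambda,S$ so that the variance-convergence window $\lambda K^2-2K+\frac{2\lambda a^2}{S}<0$ (equivalently $1-\frac{\lambda}{S}t>0$) lies entirely at negative $\gamma$; the paper simply pushes this one step further to the sharp characterization $\frac{2}{1+2/S}<a\lambda<\sqrt{\frac{S}{2}}$ with $S\neq 2$, of which your sufficient condition $\lambda a\left(1+\frac{1}{S}\right)>2$, the explicit witness $(a,\lambda,S)=(4,1,64)$, and the endpoint blow-up/attainment argument constitute a valid instance. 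One harmless misstatement: $\kappa$ does not vanish at $\gamma=0$ (only $\beta$ and $r$ do, being proportional to $\gamma^2$), but since $L_{\rm test}(0)=\frac{\lambda}{2S}\left(\beta(0)\kappa(0)+r(0)\right)+\frac{1}{2}\beta(0)=0$ regardless, your argument is unaffected.
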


The proof shows that when the learning rate is sufficiently large, only negative weight decay is allowed. This agrees with the argument in \cite{liu2021noise} that discrete-time SGD introduces an implicit $L_2$ regularization that favors small norm solutions. A too-large learning rate requires a negative weight decay because a large learning rate already over-regularizes the model and one needs to introduce an explicit negative weight decay to offset this over-regularization effect of SGD. This is a piece of direct evidence that using a large learning rate can help regularize the models. It has been hypothesized that the dynamics of SGD implicitly regularizes neural networks such that the training favors simpler solutions \citep{kalimeris2019sgd}. Our result suggests one new mechanism for such a regularization.

\vspace{-3mm}
\section{Noise Structure for Generic Settings}
\label{sec:general}
\vspace{-2mm}

The results in the previous sections suggest that (1) the SGD noises differ for different kinds of situations, and (2) SGD noise contains a term proportional to the training loss in general. These two facts motivate us to derive the noise covariance differently for different kinds of minima. Let $f(\mathbf{w},x)$ denote the output of the model for a given input $x\in\mathbb{R}^D$. 
Here, we consider a more general case; $f(\mathbf{w},x)$ may be any differentiable function, e.g., a non-linear deep neural network.
The number of parameters in the model is denoted by $P$, and hence $\mathbf{w}\in\mathbb{R}^P$.
For a training dataset $\{x_i,y_i\}_{i=1,2,\dots,N}$, the loss function with a $L_2$ regularization is given by\vspace{-1mm}
\begin{equation}
    L_\Gamma(\mathbf{w})=L_0(\mathbf{w})+\frac{1}{2}\mathbf{w}^\mathrm{T}\Gamma\mathbf{w},\label{eq: general loss}
\end{equation}
where $L_0(\mathbf{w})=\frac{1}{N}\sum_{i=1}^N\ell(f(\mathbf{w},x_i),y_i)$ is the loss function without regularization, and $H_0$ is the Hessian of $L_0$.
We focus on the MSE loss $\ell(f(\mathbf{w},x_i),y_i)=[f(\mathbf{w},x_i)-y_i]^2/2$.
Our result crucially relies on the following two assumptions, which relate to the conditions of different kinds of local minima.
 \begin{assumption}\label{assum 1}
    (Fluctuation decays with batch size) $\Sigma$ is proportional to $S^{-1}$, i.e. $\Sigma = O(S^{-1})$.
 \end{assumption}
This is justified by the results in all the related works \citep{liu2021noise, Xie2020, meng2020dynamic, mori2021logarithmic}, where $\Sigma$ is found to be $O(S^{-1})$.

 \begin{assumption}\label{assum 2}
    (Weak homogeneity) $|L - \ell_i|$ is small; in particular, it is of order $o(L)$. 
 \end{assumption}

This assumption amounts to assuming that the current training loss $L$ reflects the actual level of approximation for each data point well. In fact, since $L\geq 0$, one can easily show that $|L - \ell_i|=O(L)$. Here, we require a slightly stronger condition for a more clean expression, when $|L - \ell_i|=O(L)$ we can still get a similar expression but with some constant that hinders the clarity. Relaxing this condition can be an important and interesting future work. The above two conditions allow us to state our general theorem formally.

\begin{theorem}\label{theo: general formula}
    Let the training loss be $L_\Gamma = L_0 + \frac{1}{2}\mathbf{w}^{\rm T}\Gamma \mathbf{w}$ and the models be optimized with SGD in the neighborhood of a local minimum $\mathbf{w}^*$. Then,
    \begin{equation}
        C(\mathbf{w}) =  \frac{2L_0(\mathbf{w})}{S}H_0(\mathbf{w})-\frac{1}{S}\nabla L_{\Gamma}(\mathbf{w})\nabla L_{\Gamma}(\mathbf{w})^\mathrm{T} + o(L_0).
        \label{eq:cov_Hessian}%
    \end{equation}
\end{theorem}
    
The noise takes different forms for different kinds of local minima.

\begin{corollary}\label{corr: different minima}
    Omitting the terms of order $o(L_0)$, when $\Gamma \neq 0$, 
    \begin{equation}
        C = \frac{2L_0(\mathbf{w}^*)}{S}H_0(\mathbf{w}^*)-\frac{1}{S}\Gamma\mathbf{w}^*\mathbf{w}^{*\mathrm{T}}\Gamma + O(S^{-2}) + O(|\mathbf{w} - \mathbf{w}^*|^2). 
    \end{equation}
    When $\Gamma = 0$ and $L_0(\mathbf{w}^*) \neq 0$,
    \begin{equation}
        C = \frac{2L_0(\mathbf{w}^*)}{S}H_0(\mathbf{w}^*) + O(S^{-2}) + O(|\mathbf{w} - \mathbf{w}^*|^2).
    \end{equation}
    When $\Gamma = 0$ and $L_0(\mathbf{w}^*) = 0$,
    \begin{equation}
        C = \frac{1}{S} \left({\rm Tr}[H_0(\mathbf{w}^*) \Sigma] I_D - H_0(\mathbf{w}^*) \Sigma\right) H_0(\mathbf{w}^*) + O(S^{-2}) + O(|\mathbf{w} - \mathbf{w}^*|^2).
    \end{equation}
\end{corollary} 

\begin{remark}
    Assumption~\ref{assum 2} can be replaced by a weaker but more technical assumption called the ``decoupling assumption", which has been used in recent works to derive the continuous-time distribution of SGD \citep{mori2021logarithmic, wojtowytsch2021stochasticb}. The Hessian approximation was invoked in most of the literature without considering the conditions of its applicability \citep{jastrzebski2018three, pmlr-v97-zhu19e, liu2021noise, wu2020noisy, Xie2020}. Our result does provide such conditions for applicability. As indicated by the two assumptions, this theorem is applicable when the batch size is not too small and when the local minimum has a loss close to $0$. The reason for the failure of the Hessian approximation is that, while the FIM is equal to the expected Hessian $J=\mathbb{E}[H]$, there is no reason to expect the expected Hessian to be close to the actual Hessian of the minimum. 
\end{remark}

The proof is given in Appendix~\ref{app: general}. Two crucial messages this corollary delivers are (1) the SGD noise is different in strength and shape in different kinds of local minima and that they need to be analyzed differently; (2) the SGD noise contains a term that is proportional to the training loss $L_0$ in general. Recently, it has been experimentally demonstrated that the SGD noise is indeed proportional to the training loss in realistic deep neural network settings, both when the loss function is MSE and cross-entropy \citep{mori2021logarithmic}; our result offers a theoretical justification. The previous works all treat all the minima as if the noise is similar \citep{jastrzebski2018three, pmlr-v97-zhu19e, liu2021noise, wu2020noisy, Xie2020}, which can lead to inaccurate or even incorrect understanding. For example, Theorem 3.2 in \cite{Xie2020} predicts a high escape probability from a sharp local or global minimum. However, this is incorrect because a model at a global minimum has zero probability of escaping due to a vanishing gradient. In contrast, the escape rate results derived in \cite{mori2021logarithmic} correctly differentiate the local and global minima. We also note that these general formulae are consistent with the exact solutions we obtained in the previous section than the Hessian approximation. For example, the dependence of the noise strength on the training loss in Theorem~\ref{thm: Sigma label}, and the rank-$1$ noise of regularization are all reflected in these formulae. In contrast, the simple Hessian approximation misses these crucial distinctions. 
Lastly, combining Theorem~\ref{theo: general formula} with Theorem~\ref{theo: liu2021}, one can also find the fluctuation.
\begin{corollary}\label{cor: general Sigma}
Let the noise be as in Theorem~\ref{theo: general formula}, and omit the terms of order $O(S^{-2})$ and $O(|\mathbf{w}- \mathbf{w}^*|^2)$. Then, when $\Gamma \neq 0 $ and when $\Lambda$, $H_0(\mathbf{w}^*)$ and $\Gamma$ commute with each other, $P_{r'}\Sigma=\frac{1}{S}\frac{\Lambda}{1-\mu}(2L_0H_0-\Gamma\mathbf{w}^*\mathbf{w}^{*{\rm T}}\Gamma)(H_0+\Gamma)^{+}\left[2I_D - \frac{\Lambda}{1+\mu}(H_0+\Gamma)\right]^{-1}$. When $\Gamma = 0$ and $L_0(\mathbf{w}^*) \neq 0$, $P_r \Sigma = \frac{2L_0}{S(1-\mu)}P_r\Lambda\left(2I_D - \frac{\Lambda}{1+\mu}H_0\right)^{-1}$. When $\Gamma = 0$ and $L_0(\mathbf{w}^*) = 0$, $P_r\Sigma = 0$. Here the superscript $+$ is the Moore-Penrose pseudo inverse, $P_r := {\rm diag}(1,\dots,1,0,\dots,0)$ is the projection operator with $r$ non-zero entries, $r\le D$ is the rank of the Hessian $H_0$, and $r'\le D$ is the rank of $H_0 + \Gamma$. For the null space $H_0$, $\Sigma$ can be arbitrary.
\end{corollary}

\vspace{-3mm}
\section{Applications}\label{sec: applications}
\vspace{-2mm}
One major advantage of analytical solutions is that they can be applied in a simple ``plug-in" manner by the practitioners or theorists to analyze new problems they encounter. In this section, we briefly outline a few examples where the proposed theories can be relevant. 

\vspace{-2mm}
\subsection{High-Dimensional Regression}\label{sec: high dimension linear regression}
\vspace{-1mm}
We first apply our result to the high-dimensional regression problem and show how over-and-underparametrization might play a role in determining the minibatch noise. Here, we take $N,D\to\infty$ with the ratio $\alpha:=N/D$ held fixed. The loss function is $L(\mathbf{w})=\frac{1}{2N}\sum_{i=1}^N\left(\mathbf{w}^\mathrm{T}x_i-y_i\right)^2$. As in the standard literature \citep{hastie2019surprises}, we assume the existence of label noise: $y_i = \mathbf{u}^{\rm T}x_i + \epsilon_i$, with $\text{Var}[\epsilon_i] =\sigma^2$. A key difference between our setting and the standard high-dimensional setting is that, in the standard setting \citep{hastie2019surprises}, one uses the GD algorithm with vanishing learning rate $\lambda$ instead of the minibatch SGD algorithm with a non-vanishing learning rate. Tackling the high-dimensional regression problem with non-vanishing $\lambda$ and a minibatch noise is another main technical contribution of this work. In this setting, we can obtain the following result on the noise covariance matrix.
\begin{proposition}\label{prop: high D C}
Let $\hat{A}= \frac{1}{N}\sum_i^{N} x_i x^{\rm T}_i$ and suppose assumptions~\ref{assum 1} and \ref{assum 2} hold. With fixed $S$, $\lambda$, then
    $C = \frac{1}{S}\left(\mathrm{Tr}[\hat{A}\Sigma]I_D-\hat{A}\Sigma\right)\hat{A}+ \max\left\{0,  \frac{\sigma^2}{S}\left(1-\frac{1}{\alpha}\right)\right\}\hat{A}$.
\end{proposition}
\vspace{-2mm}
We note that this proposition follows from Theorem~\ref{theo: general formula}, showing an important theoretical application of our general theory. An interesting observation is that one $\Sigma$-independent term proportional to $\sigma^2$ emerges in the underparametrized regime ($\alpha > 1$). However, for the overparametrized regime, the noise is completely dependent on $\Sigma$, which is a sign that the stationary solution has no fluctuation. This shows that the degree of underparametrization also plays a distinctive role in the fluctuation. In fact, one can prove the following theorem, which is verified in Appendix~\ref{app sec: high dimension exp}.
\begin{theorem}\label{thm: trace of high D}
    When a stationary solution exists for $\mathbf{w}$, we have ${\rm Tr}[\hat{A}\Sigma] = \max\left\{0,  \frac{\lambda\sigma^2}{S}\left(1-\frac{1}{\alpha}\right)\hat{\kappa} \right\}$, where 
    $\hat{\kappa}:=\frac{{\rm Tr}[\hat{G}^{-1}\hat{A}]}{1-\frac{\lambda}{S}{\rm Tr}[\hat{G}^{-1}\hat{A}]}$ with $\hat{G}:=2I_D-\lambda\left(1-\frac{1}{S}\right)\hat{A}$.
\end{theorem}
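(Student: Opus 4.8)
The plan is to feed the general fluctuation--noise identity of Eq.~\eqref{eq: preconditioning matrix eq} the explicit noise covariance of Proposition~\ref{prop: high D C}, and then reduce everything to a single self-consistent scalar equation for ${\rm Tr}[\hat{A}\Sigma]$. First I would specialize Eq.~\eqref{eq: preconditioning matrix eq} to simple SGD ($\mu=0$) with a scalar learning rate $\Lambda=\lambda I_D$. Since the Hessian of the loss \eqref{eq: loss function high dimension} is exactly $H=\hat{A}$, the identity collapses, after dividing by $\lambda$, to the generalized Lyapunov equation $\hat{A}\Sigma+\Sigma\hat{A}-\lambda\hat{A}\Sigma\hat{A}=\lambda C$.

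Next I would insert the expression for $C$ from Proposition~\ref{prop: high D C}. Writing $T:={\rm Tr}[\hat{A}\Sigma]$ and $\beta:=\max\{0,\tfrac{\sigma^2}{S}(1-\tfrac1\alpha)\}$, the term $\tfrac1S(T I_D-\hat{A}\Sigma)\hat{A}$ supplies a $-\tfrac{\lambda}{S}\hat{A}\Sigma\hat{A}$ contribution that combines with the pre-existing $-\lambda\hat{A}\Sigma\hat{A}$. The key bookkeeping observation is that these coefficients merge into precisely $-\lambda(1-\tfrac1S)\hat{A}\Sigma\hat{A}$, so the equation becomes $\hat{A}\Sigma+\Sigma\hat{A}-\lambda(1-\tfrac1S)\hat{A}\Sigma\hat{A}=\lambda(\tfrac{T}{S}+\beta)\hat{A}$, in which the operator $\hat{G}=2I_D-\lambda(1-\tfrac1S)\hat{A}$ appears naturally.

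For fixed $T$ this is linear in $\Sigma$ with a right-hand side that is a polynomial in $\hat{A}$, so I would solve it with the ansatz $\Sigma=c\,\hat{G}^{-1}$, which commutes with $\hat{A}$. Substituting and using $\hat{A}\hat{G}^{-1}\hat{G}=\hat{A}$ shows that the left-hand side equals $c\hat{A}$, forcing $c=\lambda(\tfrac{T}{S}+\beta)$; because $\hat{A}$ is symmetric the generalized Lyapunov operator is invertible on the range of $\hat{A}$ under the stability condition (its eigenvalues are $a(2-\lambda(1-\tfrac1S)a)\neq0$), so this is the unique relevant solution, and any freedom on $\ker\hat{A}$ leaves $T$ untouched. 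Imposing trace consistency $T=c\,{\rm Tr}[\hat{G}^{-1}\hat{A}]$ gives $T=\lambda(\tfrac{T}{S}+\beta){\rm Tr}[\hat{G}^{-1}\hat{A}]$, which I would solve for $T$ to obtain $T=\lambda\beta\hat{\kappa}$ with $\hat{\kappa}$ exactly as defined. Since $\lambda\hat{\kappa}\geq0$ whenever a stationary solution exists, I can pull these factors inside the maximum to recover the stated formula; the overparametrized regime $\alpha<1$ yields $\beta=0$ and hence $T=0$, matching the vanishing-fluctuation claim.

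The main obstacle I anticipate is not the algebra but the self-consistency and well-posedness: because $C$ itself depends on $\Sigma$ through $T$, the ``Lyapunov equation'' is implicit, so I must justify solving it by first treating $T$ as a free scalar and only afterward closing the loop. I must also verify that the stability/existence hypothesis guarantees $1-\tfrac{\lambda}{S}{\rm Tr}[\hat{G}^{-1}\hat{A}]>0$, so that $\hat{\kappa}$ is finite and positive and the sign manipulation taking $\lambda\hat\kappa$ inside the maximum is legitimate.
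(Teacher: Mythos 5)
Your proposal is correct and takes essentially the same route as the paper's own proof: the paper likewise starts from the discrete-time equation $\hat{A}\Sigma+\Sigma\hat{A}-\lambda\hat{A}\Sigma\hat{A}=\lambda C$ (Eq.~\eqref{eq: preconditioning matrix eq} with $\mu=0$ and scalar $\lambda$, citing \citet{liu2021noise}), substitutes the covariance of Proposition~\ref{prop: high D C}, merges the $-\frac{\lambda}{S}\hat{A}\Sigma\hat{A}$ contribution so that $\hat{G}=2I_D-\lambda\left(1-\frac{1}{S}\right)\hat{A}$ appears, and closes the same scalar self-consistency equation for ${\rm Tr}[\hat{A}\Sigma]$ by multiplying by $\hat{G}^{-1}$ and taking the trace. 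The only difference is the cosmetic ordering---you posit the ansatz $\Sigma=c\,\hat{G}^{-1}$ first and then impose trace consistency, whereas the paper traces first and backs out $\Sigma=\frac{\lambda\sigma^2}{S}\left(1-\frac{1}{\alpha}\right)\left(1+\frac{\lambda}{S}\hat{\kappa}\right)\hat{G}^{-1}$ afterward---and your extra care about invertibility on the range of $\hat{A}$, the kernel freedom when $\alpha<1$, and the positivity needed to move $\lambda\hat{\kappa}$ inside the maximum only tightens an argument the paper leaves implicit.
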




\begin{figure}[tb!]
\vspace{-1em}
    \centering
    \includegraphics[width=0.24\linewidth]{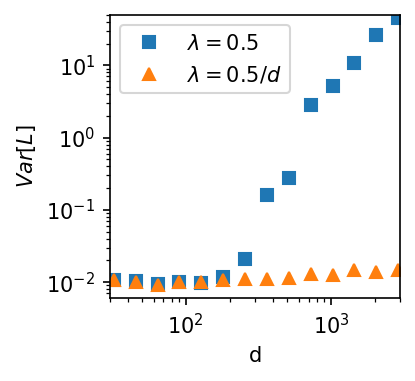}
    \includegraphics[width=0.24\linewidth]{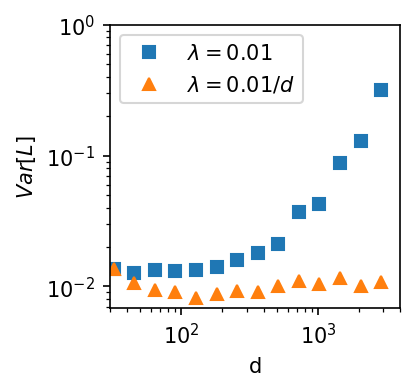}
    \includegraphics[width=0.26\linewidth]{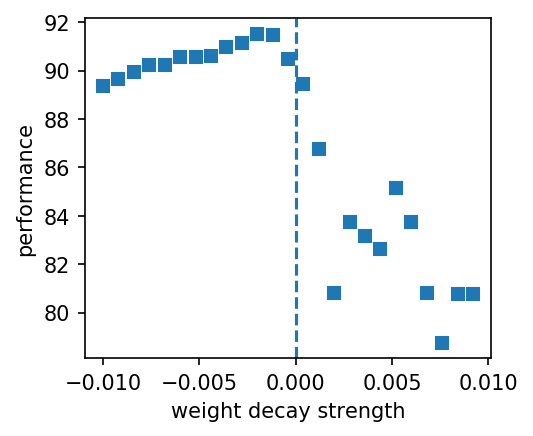}
    \vspace{-1em}
\caption{\small Realistic learning settings with neural networks and logistic regression. \textbf{Left}: Variance of training loss of a neural network with width $d$ and tanh activation on the MNIST dataset. We see that the variance explodes after $d\geq 200$. In contrast, rescaling the learning rate by $1/d$ results in a constant noise level in training. This suggests that the stability condition we derived for high-dimension regression is also useful for understanding deep learning. \textbf{Middle}: Stability of Adam with the same setting. Adam also experiences a similar stability problem when the model width increases. \textbf{Right}: Logistic regression on MNIST trained with SGD; with $\lambda=1.5$, $S=32$. We see that the optimal performance is also achieved at negative weight decay strength $\gamma$, suggesting that a large learning rate can indeed introduce effective regularization.}
\label{fig: experiments 1}
\vspace{-1em}
\end{figure}

\vspace{-3mm}
\subsection{Implication for Neural Network Training}
\vspace{-2mm}
It is commonly believed that the high-dimensional linear regression problem can be a minimal model for deep learning. Taking this stance, Theorem~\ref{thm: trace of high D} suggests a technique for training neural networks. For SGD to converge, a positive semi-definite $\Sigma$ must exist; however, $\Sigma\geq 0$ if and only if $\hat{\kappa}\geq 0$. From $\hat{\kappa}>0$, we have $\sum_{i=1}^{D}\frac{1}{2/\lambda a_i - 1+1/S}<S$, where $a_i$ are the eigenvalues of $\hat{A}$. This means that each summand should have the order of $D/S$. Thus the upper bound of $\lambda$ should have the order of $2S/aD$, where $a$ is the typical value of $a_i$'s. One implication of the dependence on the dimension is that the stability of a neural network trained with SGD may strongly depend on its width $d$, and one may rescale the learning rate according to the width to stabilize neural network training. See Figure~\ref{fig: experiments 1}-Left and Middle. We train a two-layer tanh neural network on MNIST and plot the variance of its training loss in the first epoch with fixed $\lambda=0.5$. We see that, when $d\geq 200$, the training starts to destabilize, and the training loss begins to fluctuate dramatically. When rescaling the learning rate by $1/d$, we see that the variance of the training loss is successfully kept roughly constant across all $d$. This suggests a training technique worth being explored by practitioners in the field. In Figure~\ref{fig: experiments 1}-Middle, we also use Adam for training the same network and find a similar stabilizing trick to work for Adam.

\vspace{-3mm}
\subsection{A Natural Learning Example with Negative Weight Decay}
\vspace{-2mm}

Sec.~\ref{sec: regularization} shows that a too-large learning rate introduces an effective $L_2$ regularization that can be corrected by setting the weight decay to be negative. This effect can be observed in more realistic learning settings. We train a logistic regressor on the MNIST dataset with a large learning rate (of order $O(1)$). Figure~\ref{fig: experiments 1}-Right confirms that, at a large learning rate, the optimal weight decay can indeed be negative. This agrees with our argument that using a large learning rate can effectively regularize the training.

\vspace{-2mm}
\subsection{Second-order Methods}\label{sec: second order}
\vspace{-2mm}

Understanding stochastic second-order methods (including the adaptive gradient methods) is also important for deep learning \citep{agarwal2017second, zhang2021distributional, martens2014insights, kunstner2019limitations}. In this section, we apply our theory to two standard second-order methods: damped Newton's method (DNM) and natural gradient descent (NGD). We provide more accurate results than those derived in \citet{liu2021noise}. The derivations are given in Appendix~\ref{app: second order}. For DNM, the preconditioning learning rate matrix is defined as $\Lambda:= \lambda A^{-1}$. The model fluctuation is shown to be proportional to the inverse of the Hessian: $\Sigma = \frac{\lambda \sigma^2}{gS -\lambda D}A^{-1}$, where $g:=2(1-\mu) - \left(\frac{1-\mu}{1+\mu}+\frac{1}{S}\right)\lambda$. The main difference with the previous results is that the fluctuation now depends explicitly on the dimension $D$, and implies a stability condition: $S\geq \lambda D /g$, corroborating the stability condition we derived above. For NGD, the preconditioning matrix is defined by the inverse of the Fisher information that $\Lambda:= \frac{\lambda}{S} J(\mathbf{w})^{-1} = \frac{\lambda}{S} C^{-1}$. We show that $\Sigma =\frac{\lambda}{2}\left(\frac{1}{1+D}\frac{1}{1+\mu}+\frac{1}{1-\mu}\frac{1}{S}\right)A^{-1}$ is one solution when $\sigma=0$, which also contains a correction related to $D$ compared to the result in \citet{liu2021noise} which is $\Sigma = \frac{\lambda}{2}\left(\frac{1}{1+\mu}+\frac{1}{1-\mu}\frac{1}{S}\right)A^{-1}$. A consequence is that $J\sim \Sigma^{-1}$. The surprising fact is that the stability of both NGD and DNM now crucially depends on $D$; combining with the results in Sec.~\ref{sec: high dimension linear regression}, this suggests that the dimension of the problem may crucially affect the stability and performance of the minibatch-based algorithms. This result also implies that some features we derived are shared across many algorithms that depend on minibatch noise and that our results may be relevant to a broad class of optimization algorithms other than SGD.



\vspace{-3mm}
\subsection{Failure of the $\lambda-S$ scaling law}\label{app sec: failure of the scaling law}
\vspace{-2mm}
One well-known technique in deep learning training is that one can scale $\lambda$ linearly as one increases the batch size $S$ to achieve high-efficiency training without hindering the generalization performance; however, it is known that this scaling law fails when the learning rate is too large, or the batch size is too small \citep{goyal2017accurate}. In \cite{Hoffer2017}, this scaling law is established on the ground that $\Sigma \sim \lambda/S$. However, our result in Theorem~\ref{thm: Sigma label} suggests the reason for the failure even for the simple setting of linear regression. Recall that the exact $\Sigma$ takes the form:
\[\Sigma  = \frac{\lambda \sigma^2}{S}\left( 1 + \frac{ \kappa_{\mu}}{S}\right)  G_{\mu}^{-1}\]
for a scalar $\lambda$. One notices that the leading term is indeed proportional to $\lambda/S$. However, the discrete-time SGD results in a second-order correction in $S$, and the term proportional to $1/S^2$ does not contain a corresponding $\lambda$; this explains the failure of the scaling law in small $S$, where the second-order contribution of $S$ becomes significant. To understand the failure at large $\lambda$, we need to look at the term $G_\mu$:
\[G_{\mu}=2(1-\mu)I_D-\left(\lambda\frac{1-\mu}{1+\mu}+\frac{\lambda}{S}\right) A.\]
One notices that the second term contains a part that only depends on $\lambda$ but not on $S$. This part is negligible compared to the first term when $\lambda$ is small; however, it becomes significant as the second term approaches the first term. Therefore, increasing $\lambda$ changes this part of the fluctuation, and the scaling law no more holds if $\lambda$ is large.

\begin{wrapfigure}{r!}{0.3\linewidth}
    \centering
    \vspace{-3em}
    \includegraphics[width=\linewidth]{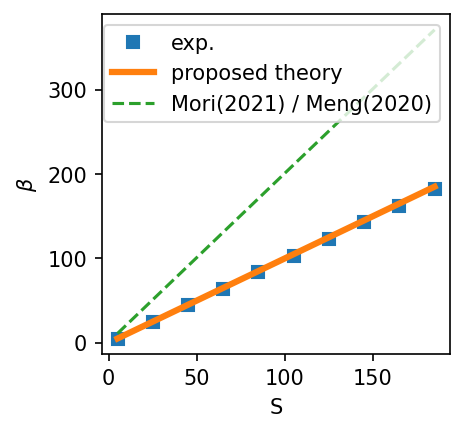}
    \vspace{-2em}
    \caption{\small Comparison of the proposed theory with the continuous-time theory on the SGD stationary distribution for $a\lambda=1$. The proposed theory agrees with the experiment exactly.}
    \label{fig:tail index main}
    \vspace{-2em}
\end{wrapfigure}
\vspace{-2mm}
\subsection{Power Law Tail in Discrete-time SGD}\label{sec: tail index}
\vspace{-1mm}
It has recently been discovered that the SGD noise causes a heavy-tail distribution \citep{Simsekli2019tailindex, simsekli2020hausdorff}, with a tail decaying like a power law with tail index $\beta$ \citep{hodgkinson2020multiplicative}.
In continuous-time, the stationary distribution has been found to obey a Student's t-like distribution, $p(w) \sim  L^{-(1 +\beta)/2}\sim\left(\sigma^2+a w^2\right)^{-(1 +\beta)/2}$  \citep{meng2020dynamic,mori2021logarithmic, wojtowytsch2021stochasticb}. However, this result is only established for continuous-time approximations to SGD and one does not know what affects the exponent $\beta$ for discrete-time SGD. Our result in Theorem~\ref{thm: Sigma label} can serve as a tool to find the discrete-time correction to the tail index of the stationary distribution. 
In Appendix~\ref{app: tail index proof}, we show that the tail index of discrete-time SGD in 1d can be estimated as $\beta(\lambda,S) = \frac{2S}{a\lambda}-S$. A clear discrete-time contribution is $-(S+1)$ which depends only on the batch size, while $\frac{2S}{a\lambda}+1$ is the tail index in the continuous-time limit \citep{mori2021logarithmic}. 
See Figure~\ref{fig:tail index main}; the proposed formula agrees with the experiment. Knowing the tail index $\beta$ is important for understanding the SGD dynamics because $\beta$ is equal to the smallest moment of $w$ that diverges. For example, when $\beta\leq 4$, then the kurtosis of $w$ diverges, and one expects to see outliers of $w$ very often during training; when $\beta\leq 2$, then the second moment of $w$ diverges, and one does not expect $w$ to converge in the minimum under consideration. Our result suggests that the discrete-time dynamics always leads to a heavier tail than the continuous-time theory expects, and therefore is more unstable.

\vspace{-3mm}
\section{Outlook}\label{sec: discussion}
\vspace{-2mm}

In this work, we have presented a systematic analysis with a focus on exactly solvable results to promote our fundamental understanding of SGD. One major limitation is that we have only focused on studying the asymptotic behavior of SGD in local minimum. For example, \cite{ziyin2022sgd} showed that SGD can converge to a local maximum when the learning rate is large. One important future step is thus to understand the SGD noise beyond a strongly convex landscape.

\section*{Acknowledgement}
Liu Ziyin thanks Jie Zhang, Junxia Wang, and Shoki Sugimoto. Ziyin is supported by the GSS Scholarship of The University of Tokyo. Kangqiao Liu was supported by the GSGC program of the University of Tokyo. This work was supported by KAKENHI Grant Numbers JP18H01145 and JP21H05185 from the Japan Society for the Promotion of Science.


\clearpage

\clearpage
\appendix


\clearpage
\section{Experiments}\label{app: extra exp}
\begin{figure}[tb!]
    \centering
    \includegraphics[width=0.45\linewidth]{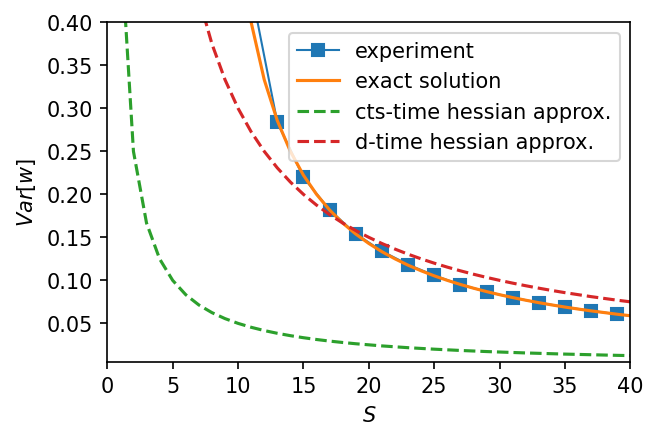}
    \includegraphics[width=0.54\linewidth]{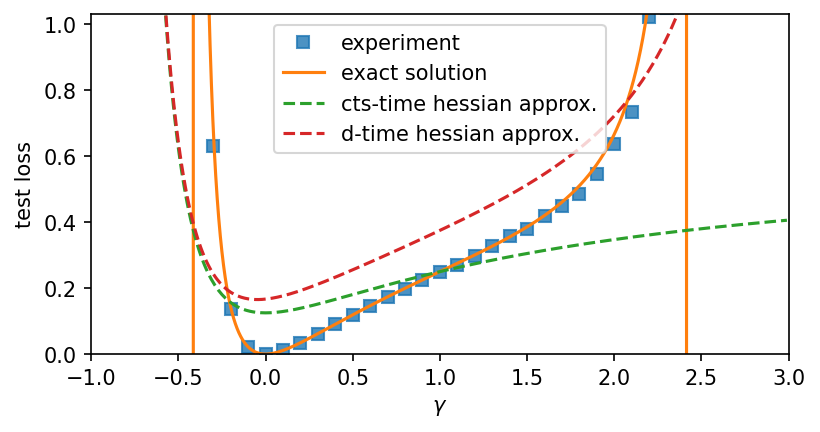}
    \vspace{-1em}
\caption{\textbf{Left}: 1d experiments with label noise. The parameters are set to be $a=1.5$ and $\lambda=1$. \textbf{Right}: Experiments with $L_2$ regularization with weight decay strength $\gamma$. The parameters are set to be $a=1$, $\lambda=0.5$, $S=1$. This is the standard case with a vanishing optimal $\gamma$. The vertical lines show where our theory predicts a divergence.}
\label{fig:label noise 1d}
\vspace{-1em}
\end{figure}

\subsection{Label noise and regularization}

Theorem~\ref{thm: Sigma label} can be verified empirically. We run 1d experiment in Figure~\ref{fig:label noise}(a) and high dimensional experiments in Figures~\ref{fig:label noise}(b)-(c), where we choose $D=2$ for visualization. We see that the continuous Hessian approximation fails badly for both large and small batch sizes. When the batch size is large, both the discrete-time Hessian approximation and our solution give a accurate estimate of the shape and the spread of the distribution. This suggests that when the batch size is large, discreteness is the determining factor of the fluctuation. When the batch size is small, the discrete Hessian approximation severely underestimates the strength of the noise. This reflects the fact that the isotropic noise enhancement is dominant at a small batch size.

\begin{figure}[tb!]
\captionsetup[subfigure]{justification=centering}
    \centering
    \begin{subfigure}{0.38\columnwidth}
    \includegraphics[width=\linewidth]{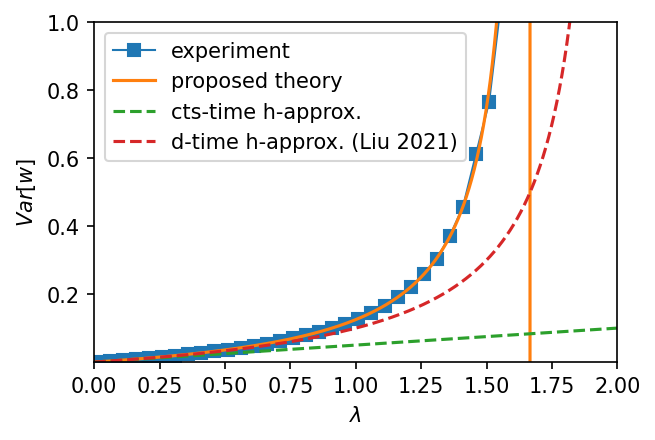}
    \vspace{-2em}
    \caption{$a=1$, $S=10$}
    \end{subfigure}
    \begin{subfigure}{0.30\columnwidth}
    \includegraphics[width=\linewidth]{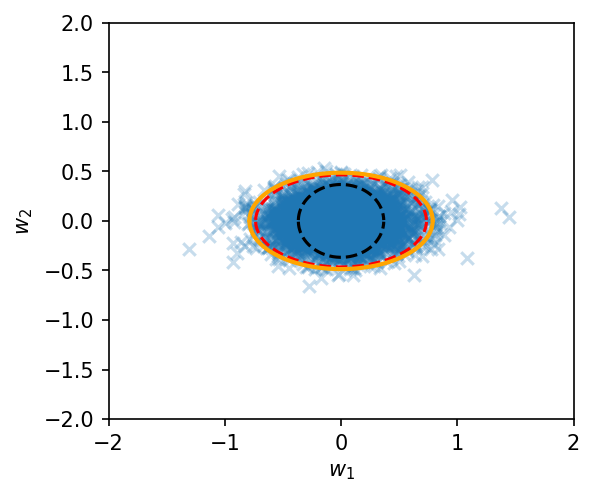}
    \vspace{-2em}
    \caption{$S=50$}
    \end{subfigure}
    \begin{subfigure}{0.29\columnwidth}
    \includegraphics[width=\linewidth]{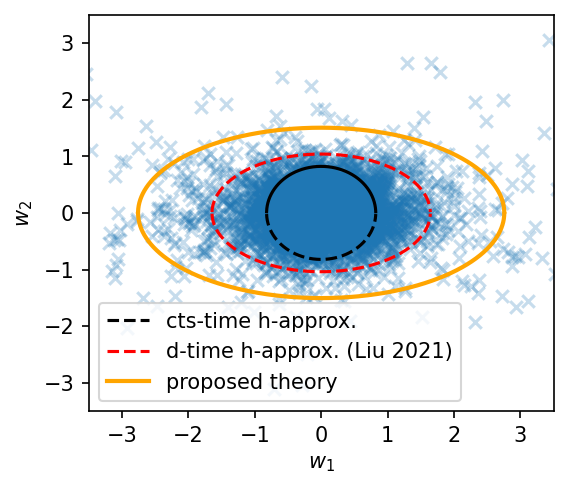}
    \vspace{-2em}
    \caption{ $S=10$}
    \end{subfigure}
    \vspace{-0.5em}
    \caption{Comparison between theoretical predictions and experiments. (a) 1d experiment. We plot $\Sigma$ as an increasing function of $\lambda$. We see that the continuous-time approximation fails to predict the divergence at a learning rate and the prediction in \citet{liu2021noise} severely underestimates the model fluctuation. In contrast, our result is accurate throughout the entire range of learning rates. (b)-(c) 2d experiments. The straight line shows where the proposed theory predicts a divergence in the variance, which agrees with experiment exactly. The Hessian has eigenvalues $1$ and $0.5$, and $\lambda=1.5$. For a large batch size, the discrete-time Hessian approximation is quite accurate; for a small $S$, the Hessian approximation underestimates the overall strength of the fluctuation. In contrast, the continuous-time result is both inaccurate in shape and in strength.}
    \label{fig:label noise}
    \vspace{-1em}
\end{figure}

In Figure~\ref{fig:label noise 1d}-Left, we run a 1d experiment with $\lambda=1$, $N=10000$ and $\sigma^2 = 0.25$. Comparing the predicted $\Sigma$, we see that the proposed theory agrees with the experiment across all ranges of $S$. The continuous theory with the Hessian approximation fails almost everywhere, while the recently proposed discrete theory with the Hessian approximation underestimates the fluctuation when $S$ is small. In Figure~\ref{fig:label noise 1d}-Right, we plot a standard case where the optimal regularization strength $\gamma$ is vanishing.

\begin{figure}[t!]
    \centering
    \includegraphics[width=0.45\linewidth]{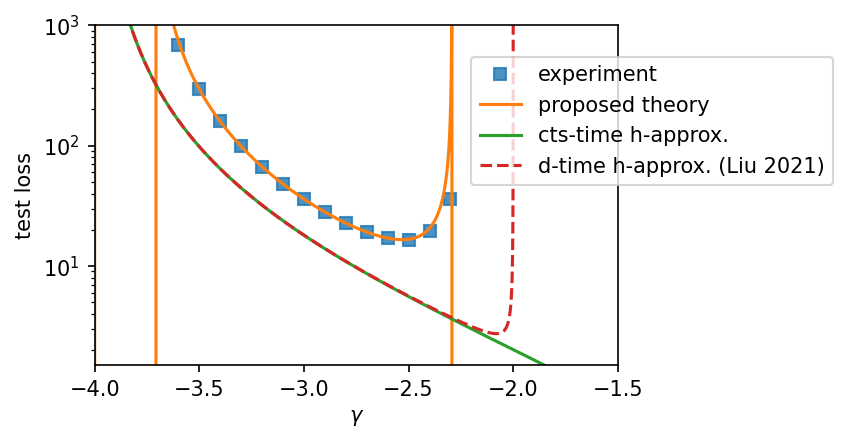}
\caption{1d experiments with $L_2$ regularization with weight decay strength $\gamma$. The parameters are set to be $a=4$, $\lambda=1$, $S=64$. This shows a case where the optimal $\gamma$ is negative. The vertical lines show where our theory predicts a divergence.}
\label{fig:regularization}
\end{figure}
Now, we validate the existence of the optimal negative weight decay as predicted by our formula. For illustration, we plot in Figure~\ref{fig:regularization} the test loss \eqref{eq: regular test loss} for a 1d example while varying either $S$ or $\lambda$. The orange vertical lines show the place where the theory predicts a divergence in the test loss. We also plot a standard case where the optimal $\gamma$ is close to 0 in Appendix~\ref{app: extra exp}. Also, we note that the proposed theory agrees better with the experiment.


\subsection{High-Dimensional Regression}\label{app sec: high dimension exp}
\begin{figure}[tb!]
    \centering
    \includegraphics[width=0.5\linewidth]{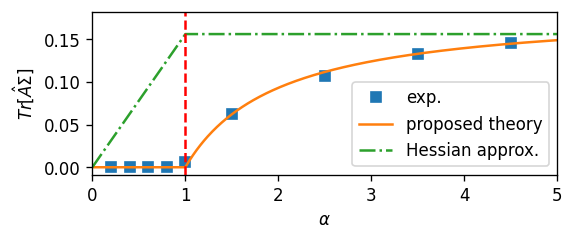}
    \vspace{-1em}
\caption{High-dimensional linear regression. We see that the predicted fluctuation coefficient agrees with the experiment well. The slight deviation is due to a finite training time and finite $N$ and $D$. On the other hand, a naive Hessian approximation results in a qualitatively wrong result.}
\label{fig:high dimension}
\vspace{-1em}
\end{figure}

See Figure~\ref{fig:high dimension}-Left. We vary $N$ with $D=1000$ held fixed. We set $\lambda=0.01$ and $S=32$. We see that the agreement between the theory and experiment is good, even for this modest dimension number $D$. The vertical line shows where the over-to-underparametrization transition takes place. As expected, there is no fluctuation when $\alpha<1$, and the fluctuation gradually increases as $\alpha \to \infty$. On the other hand, the Hessian approximation gives a wrong picture, predicting fluctuation to rise when there is no fluctuation and predicting a constant fluctuation just when the fluctuation starts to rise.

\clearpage

\begin{table*}[t!]
{\small
    \centering
    \caption{Comparison with previous results. For notational conciseness, we compare the case when all the relevant matrices commute. The model fluctuation $\Sigma$, the expected training loss $L_{\rm train}$ and the expected test loss $L_{\rm test}$ calculated by continuous- and discrete-time theories with Hessian approximation $C\approx H$ are presented. Exact solutions to these quantities obtained in the present work are shown in the rightmost column.}\vspace{-1mm}
    \label{tab:summary}
    
    \renewcommand\arraystretch{1.4}
    \resizebox{\textwidth}{!}{
    \begin{tabular}{c|c|c|c}
    \hline\hline
    & \multicolumn{2}{c|}{Hessian Approximation} & Exact Solution\\
    \hline
    & Cts-time Approximation & D-time Solution &  This Work \\
        \hline
    & { $\Sigma$} & { $\Sigma$} & { $\Sigma$}\\
    \hline
    
     Label Noise &  $\frac{\lambda}{2S}I_D$
     &  $\frac{\lambda}{S}(2I_D -\lambda A)^{-1}$
     &   $\frac{\lambda \sigma^2}{S}\left( 1 + \frac{\lambda  \kappa}{S}\right)  \left[2 I_D - \lambda \left(1 + \frac{1}{S}\right)A \right]^{-1}$ \\
     Input Noise &  $\frac{\lambda}{2S}I_D$
     & $\frac{\lambda}{S}(2I_D-\lambda K)^{-1}$
     & $\frac{\lambda {\rm Tr}[AK^{-1}BU]}{S}\left( 1 + \frac{\lambda  \kappa'}{S}\right)  \left[2 I_D - \lambda \left(1 + \frac{1}{S}\right)K \right]^{-1}$  \\
     
     $L_2$ Regularization  &  $\frac{\lambda}{2S}I_D$
     &   $\frac{\lambda}{S}(2I_D-\lambda K)^{-1}$
     &  Eq.~\eqref{eq: regularization solution} \\
    \hline\hline
    
    & { $L_{\rm train}$} & {$L_{\rm train}$} & {$L_{\rm train}$} \\
    \hline
    
     Label Noise &  $\frac{\lambda}{4S}{\rm Tr}[A]+\frac{1}{2}\sigma^2$
     &  Eq.~\eqref{eq: label training discrete K approx}
     &  $\frac{\sigma^2}{2}\left(1+\frac{\lambda \kappa}{S}\right)$ \\
     Input Noise &  $\frac{\lambda}{4S}{\rm Tr}[K]+\frac{1}{2}{\rm Tr}[AK^{-1}BU]$
     &  Eq.~\eqref{eq: input training discrete K approx}
     &  $\frac{1}{2}{\rm Tr}[AK^{-1}BU]\left(1+\frac{\lambda }{S}\kappa'\right)$ \\
     $L_2$ Regularization  &  $\frac{\lambda}{4S}{\rm Tr}[K]+\frac{1}{2}{\rm Tr}[AK^{-1}\Gamma U]$
     &   Eq.~\eqref{eq: Kapprox training}
     &  Eq.~\eqref{eq: regularization training loss} \\
     \hline\hline
    
    & { $L_{\rm test}$} & {$L_{\rm test}$} & {$L_{\rm test}$} \\
    \hline
    
     Label Noise &  $\frac{\lambda}{4S}{\rm Tr}[A]$
     &  $\frac{\lambda}{2S}{\rm Tr}[A(2I_D-\lambda A)^{-1}]$
     &  $\frac{\lambda\sigma^2}{2S}\kappa$ \\
     Input Noise &  $\frac{\lambda}{4S}{\rm Tr}[A]+\frac{1}{2}{\rm Tr}[B'^{\rm T}AB'U]$
     &  Eq.~\eqref{eq: input test discrete K approx}
     &  $\frac{\lambda}{2S}{\rm Tr}[AK^{-1}BU] \kappa'+\frac{1}{2}{\rm Tr}[B'^{\rm T}AB'U]$ \\
     $L_2$ Regularization  &  $\frac{\lambda}{4S}{\rm Tr}[A]+\frac{1}{2}{\rm Tr}[AK^{-2}\Gamma^2 U]$
     &   Eq.~\eqref{eq: Kapprox test}
     &  Eq.~\eqref{eq: regular test loss} \\
     \hline\hline
     
    \end{tabular}}
  } \vspace{-1em}
\end{table*}

\section{Comparison with Conventional Hessian Approximation}\label{app: comparison}
We compare our results for the three cases with the results obtained with the conventional Hessian approximation of the noise covariance, i.e., $C\approx H$, where $H$ is the Hessian of the loss function.  We summarize the analytical results for a special case in Table~\ref{tab:summary}.

\subsection{Label Noise}
We first consider discrete-time dynamics with the Hessian approximation. The matrix equation is
\begin{align}
    \Sigma A + A\Sigma -\lambda A \Sigma A=\frac{\lambda}{S} A.
\end{align}
Compared with the exact result \eqref{eq: preconditioning matrix eq}, it is a large-$S$ limit up to the constant $\sigma^2$. This constant factor is ignored during the approximation that $J(\mathbf{w}):=\mathbb{E}_{\rm B}[\nabla l\nabla l^{\rm T}]\approx \mathbb{E}_{\rm B}[\nabla \nabla^{\rm T} l]:=H(\mathbf{w})$, which is exact only when $l(\{x_i\},\mathbf{w})$ is a \textit{negative log likelihood} function of $\mathbf{w}$. Solving the matrix equation yields
\begin{align}
    \Sigma = \frac{\lambda}{S}(2I_D -\lambda A)^{-1}.
\end{align}
The training loss and the test loss are 
\begin{align}
    &L_{\rm train}=\frac{\lambda}{2S}{\rm Tr}[A(2I_D-\lambda A)^{-1}]+\frac{1}{2}\sigma^2,\label{eq: label training discrete K approx}\\
    &L_{\rm test}=\frac{\lambda}{2S}{\rm Tr}[A(2I_D-\lambda A)^{-1}].\label{eq: label test discrete K approx}
\end{align}
On the other hand, by taking the large-$S$ limit directly from the exact equation \eqref{eq: preconditioning matrix eq}, the factor $\sigma^2$ is present:
\begin{align}
    \Sigma A + A\Sigma -\lambda A \Sigma A=\frac{\lambda}{S}\sigma^2 A.
\end{align}

For the continuous-time limit with the Hessian approximation, the matrix equation is
\begin{align}
    \Sigma A + A\Sigma=\frac{\lambda}{S} A,
\end{align}
which is the small-$\lambda$ limit up to the factor $\sigma^2$. The variance is
\begin{align}
    \Sigma=\frac{\lambda}{2S}I_D.
\end{align}
The training and the test error are
\begin{align}
    &L_{\rm train}=\frac{\lambda}{4S}{\rm Tr}[A]+\frac{1}{2}\sigma^2,\\
    &L_{\rm test}=\frac{\lambda}{4S}{\rm Tr}[A].
\end{align}
Again, taking the small-$\lambda$ limit directly from the exact result \eqref{eq: preconditioning matrix eq} shows the presence of the factor $\sigma^2$ on the right hand side of the matrix equation.

\subsection{Input Noise}
The case with the input noise is similar to the label noise. This can be understood if we replace $A$ by $K$ and $\sigma^2$ by ${\rm Tr}[AK^{-1}BU]$. The model parameter variance resulting from the discrete-time dynamics under the Hessian approximation is
\begin{align}
    \Sigma=\frac{\lambda}{S}(2I_D-\lambda K)^{-1}.
\end{align}
The training and the test error are 
\begin{align}
      &L_{\rm train}=\frac{\lambda}{2S}{\rm Tr}[K(2I_D-\lambda K)^{-1}]+\frac{1}{2}{\rm Tr}[AK^{-1}BU],\label{eq: input training discrete K approx} \\
    &L_{\rm test}=\frac{\lambda}{2S}{\rm Tr}[A(2I_D-\lambda K)^{-1}]+\frac{1}{2}{\rm Tr}[B'^{\rm T}AB'U].\label{eq: input test discrete K approx}
\end{align}
The large-$S$ limit from the exact matrix equation \eqref{eq: matrix eq of input} results in a prefactor ${\rm Tr}[AK^{-1}BU]$ in the fluctuation:
\begin{align}
    \Sigma=\frac{\lambda}{S}{\rm Tr}[AK^{-1}BU](2I_D-\lambda K)^{-1}.
\end{align}

For the continuous-time limit, we take $\lambda\to 0$. The Hessian approximation gives
\begin{align}
    &\Sigma=\frac{\lambda}{2S}I_D,\\
    &L_{\rm train}=\frac{\lambda}{4S}{\rm Tr}[K]+\frac{1}{2}{\rm Tr}[AK^{-1}BU],\\
    &L_{\rm test}=\frac{\lambda}{4S}{\rm Tr}[A]+\frac{1}{2}{\rm Tr}[B'^{\rm T}AB'U].
\end{align}
The large-$S$ limit again produces a prefactor ${\rm Tr}[AK^{-1}BU]$.

\subsection{$L_2$ Regularization}\label{sec: L2 regularization}
For learning with regularization, there is a more difference between the Hessian approximation and the limit taken directly from the exact theory. We first adopt the Hessian approximation for the discrete-time dynamics. The matrix equation is
\begin{align}
    \Sigma K + K\Sigma-\lambda K \Sigma K = \frac{\lambda}{S}K,
\end{align}
which is similar to the previous subsection. However, it is different from the large-$S$ limit of the exact matrix equation \eqref{eq: matrixeq of regular}:
\begin{align}
    \Sigma K + K\Sigma-\lambda K \Sigma K =\frac{\lambda}{S}\left({\rm Tr}[AK^{-2}\Gamma^2U]A+AK^{-1}\Gamma U\Gamma K^{-1}A\right).
\end{align}
This significant difference suggests that the conventional Fisher-to-Hessian approximation $J\approx H$ fails badly. The fluctuation, the training loss, and the test loss with the Hessian approximation are
\begin{align}
    &\Sigma=\frac{\lambda}{S}(2I_D-\lambda K)^{-1}, \label{eq: Kapprox Sigma}\\
    &L_{\rm train}=\frac{\lambda}{2S}{\rm Tr}[K(2I_D -\lambda K)^{-1}]+\frac{1}{2}{\rm Tr}[AK^{-1}\Gamma U], \label{eq: Kapprox training}\\
    &L_{\rm test}=\frac{\lambda}{2S}{\rm Tr}[A(2I_D -\lambda K)^{-1}]+\frac{1}{2}{\rm Tr}[AK^{-2}\Gamma^2 U],\label{eq: Kapprox test}
\end{align}
while the large-$S$ limit of the exact theory yields
\begin{align}
    &\Sigma=\frac{\lambda}{S}{\rm Tr}[AK^{-2}\Gamma^2U]AK^{-1}(2I_D-\lambda K)^{-1}+\frac{\lambda}{S}A^2 K^{-3} \Gamma^2 (2I_D-\lambda K)^{-1} U, \label{eq: limit Sigma}\\
    &L_{\rm train}=\frac{\lambda}{2S}{\rm Tr}[AK^{-2}\Gamma^2U]{\rm Tr}[A(2I_D-\lambda K)^{-1}] + \frac{\lambda}{2S}{\rm Tr}[A^2 K^{-2}\Gamma^2(2I_D-\lambda K)^{-1}U] \nonumber\\&\quad\quad\quad+ \frac{1}{2}{\rm Tr}[AK^{-1}\Gamma U],\label{eq: limit training}\\
    &L_{\rm test}=\frac{\lambda}{2S}{\rm Tr}[AK^{-2}\Gamma^2U]{\rm Tr}[A(2I_D-\lambda K)^{-1}] + \frac{\lambda}{2S}{\rm Tr}[A^3 K^{-3}\Gamma^2(2I_D-\lambda K)^{-1}U] \nonumber\\&\quad\quad\quad+ \frac{1}{2}{\rm Tr}[AK^{-2}\Gamma^2 U].\label{eq: limit test}
\end{align}

The continuous-time results are obtained by taking the small-$\lambda$ limit on Eqs.~\eqref{eq: Kapprox Sigma}-\eqref{eq: Kapprox test} for the Hessian approximation and on Eqs.~\eqref{eq: limit Sigma}-\eqref{eq: limit test} for the limiting cases of the exact theory. Specifically, for the Hessian approximation, we have
\begin{align}
    &\Sigma=\frac{\lambda}{2S}I_D, \\
    &L_{\rm train}=\frac{\lambda}{4S}{\rm Tr}[K]+\frac{1}{2}{\rm Tr}[AK^{-1}\Gamma U],\\
    &L_{\rm test}=\frac{\lambda}{4S}{\rm Tr}[A]+\frac{1}{2}{\rm Tr}[AK^{-2}\Gamma^2 U].
\end{align}
The small-$\lambda$ limit of the exact theory yields
\begin{align}
    &\Sigma=\frac{\lambda}{2S}{\rm Tr}[AK^{-2}\Gamma^2U]AK^{-1}+\frac{\lambda}{2S}A^2 K^{-3} \Gamma^2  U, \\
    &L_{\rm train}=\frac{\lambda}{4S}{\rm Tr}[AK^{-2}\Gamma^2U]{\rm Tr}[A] + \frac{\lambda}{4S}{\rm Tr}[A^2 K^{-2}\Gamma^2U] + \frac{1}{2}{\rm Tr}[AK^{-1}\Gamma U],\\
    &L_{\rm test}=\frac{\lambda}{4S}{\rm Tr}[AK^{-2}\Gamma^2U]{\rm Tr}[A] + \frac{\lambda}{4S}{\rm Tr}[A^3 K^{-3}\Gamma^2U] + \frac{1}{2}{\rm Tr}[AK^{-2}\Gamma^2 U].
\end{align}

\clearpage
\section{Proof of the General Formula}\label{app: general}
\subsection{Proof of Theorem~\ref{theo: general formula} and Corollary~\ref{corr: different minima}}
We restate the theorem.
\begin{theorem}
    Let the training loss be $L_\Gamma = L_0 + \frac{1}{2}\mathbf{w}^{\rm T}\Gamma \mathbf{w}$ and the models be optimized with SGD in the neighborhood of a local minimum $\mathbf{w}^*$. When $\Gamma \neq 0$, the noise covariance is given by 
    \begin{equation}
        C = \frac{2L_0(\mathbf{w}^*)}{S}H_0(\mathbf{w}^*)-\frac{1}{S}\Gamma\mathbf{w}^*\mathbf{w}^{*\mathrm{T}}\Gamma + O(S^{-2}) + O(|\mathbf{w} - \mathbf{w}^*|^2). 
    \end{equation}
    When $\Gamma = 0$ and $L_0(\mathbf{w}^*) \neq 0$,
    \begin{equation}
        C = \frac{2L_0(\mathbf{w}^*)}{S}H_0(\mathbf{w}^*) + O(S^{-2}) + O(|\mathbf{w} - \mathbf{w}^*|^2).
    \end{equation}
    When $\Gamma = 0$ and $L_0(\mathbf{w}^*) = 0$,
    \begin{equation}
        C = \frac{1}{S} \left({\rm Tr}[H_0(\mathbf{w}^*) \Sigma] I_D - H_0(\mathbf{w}^*) \Sigma\right) H_0(\mathbf{w}^*) + O(S^{-2}) + O(|\mathbf{w} - \mathbf{w}^*|^2).
    \end{equation}
\end{theorem}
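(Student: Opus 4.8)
The plan is to begin from the exact one-step covariance of Proposition~\ref{prop: noise_cov} and reduce it to a form involving only the residuals and the curvature, then specialize to each type of minimum. Writing $r_i := f(\mathbf{w},x_i)-y_i$ and $g_i := \nabla_\mathbf{w} f(\mathbf{w},x_i)$, the MSE loss gives the per-sample gradient $\nabla \ell_i = r_i g_i + \Gamma\mathbf{w}$, where the regularization contributes the \emph{same} deterministic vector $\Gamma\mathbf{w}$ to every sample. Since a shift common to all samples cannot change a covariance, it drops out, and Proposition~\ref{prop: noise_cov} collapses to the $\Gamma$-free expression $C = \frac{1}{S}\left(\frac{1}{N}\sum_i r_i^2 g_i g_i^{\rm T} - \nabla L_0 \nabla L_0^{\rm T}\right)$, with $\nabla L_0 = \frac{1}{N}\sum_i r_i g_i$. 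All $\Gamma$-dependence then enters only through the location of the minimum $\mathbf{w}^*$.

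First I would dispatch the two cases with non-vanishing loss. Stationarity of $L_\Gamma$ at $\mathbf{w}^*$ reads $\nabla L_0(\mathbf{w}^*) = -\Gamma\mathbf{w}^*$, so the subtracted rank-one term equals $\Gamma\mathbf{w}^*\mathbf{w}^{*\rm T}\Gamma$ when $\Gamma\neq 0$ and vanishes when $\Gamma=0$; this already produces the $-\frac{1}{S}\Gamma\mathbf{w}^*\mathbf{w}^{*\rm T}\Gamma$ term and its absence, respectively. For the remaining term I would invoke weak homogeneity: $r_i^2 = 2\ell_i = 2L_0 + o(L_0)$ uniformly in $i$, so $\frac{1}{N}\sum_i r_i^2 g_i g_i^{\rm T} = 2L_0\,\frac{1}{N}\sum_i g_i g_i^{\rm T} + o(L_0)$. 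Finally I would use the Fisher/Gauss--Newton identity noted in the Remark, $\frac{1}{N}\sum_i g_i g_i^{\rm T} = J = \mathbb{E}[H_0] \approx H_0(\mathbf{w}^*)$ (the residual-weighted curvature $\frac{1}{N}\sum_i r_i\nabla^2 f_i$ being subleading), giving $\frac{2L_0}{S}H_0$ and hence Cases~1 and~2.

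The global-minimum case needs an extra expansion because both surviving terms vanish at $\mathbf{w}^*$, where all $r_i=0$. Setting $\delta := \mathbf{w}-\mathbf{w}^*$ and Taylor-expanding the residual, $r_i \approx g_i^{\rm T}\delta$, the subtraction becomes $\nabla L_0 \nabla L_0^{\rm T} \approx H_0\delta\delta^{\rm T}H_0$, while the quartic term becomes $\frac{1}{N}\sum_i (g_i^{\rm T}\delta)^2 g_i g_i^{\rm T}$. Here I would apply a self-averaging/decoupling step, approximating the empirical fourth moment by a product of second moments, $\frac{1}{N}\sum_i (g_i^{\rm T}\delta)^2 g_i g_i^{\rm T} \approx \big(\delta^{\rm T}H_0\delta\big)H_0$, i.e.\ treating the scalar magnitude $(g_i^{\rm T}\delta)^2$ as uncorrelated with the directional matrix $g_i g_i^{\rm T}$. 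Averaging over the stationary fluctuation with $\mathbb{E}[\delta\delta^{\rm T}]=\Sigma$ then turns $\delta^{\rm T}H_0\delta$ into $\mathrm{Tr}[H_0\Sigma]$ and $H_0\delta\delta^{\rm T}H_0$ into $H_0\Sigma H_0$, yielding $\frac{1}{S}(\mathrm{Tr}[H_0\Sigma]I_D - H_0\Sigma)H_0$.

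The main obstacle is this decoupling in Case~3: the object $\frac{1}{N}\sum_i (g_i^{\rm T}\delta)^2 g_i g_i^{\rm T}$ is genuinely a fourth moment of the gradients, and its factorization into $\big(\delta^{\rm T}H_0\delta\big)H_0$ holds only under a concentration hypothesis on the $\{g_i\}$. This is precisely the ingredient that makes the formula specific to the self-averaging (high-dimensional) regime and distinguishes it from a naive Gaussian/Wick evaluation of the same moment, which would instead produce an extra $A\Sigma A$-type contribution and flip the sign of the $H_0\Sigma H_0$ term. The residual work is error bookkeeping: the $O(|\mathbf{w}-\mathbf{w}^*|^2)$ terms come from truncating the residual and curvature expansions, while the $O(S^{-2})$ terms collect the finite-batch corrections to Proposition~\ref{prop: noise_cov} together with the $o(L_0)$ slack of weak homogeneity, with the assumption $\Sigma = O(S^{-1})$ used to keep the relative orders consistent.
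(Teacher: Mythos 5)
Your proposal is correct and takes essentially the same route as the paper's proof in Appendix~\ref{app: general}: reduce Proposition~\ref{prop: noise_cov} by dropping the common $\Gamma\mathbf{w}$ shift, use stationarity $\nabla L_0(\mathbf{w}^*)=-\Gamma\mathbf{w}^*$ for the rank-one term, invoke weak homogeneity together with the Gauss--Newton bound (the paper controls the residual-weighted curvature by $O(\sqrt{L_0})$ via Cauchy--Schwarz) to obtain $\frac{2L_0}{S}H_0$, and at a global minimum expand $r_i\approx g_i^{\rm T}\delta$ and average using $\Sigma=O(S^{-1})$. One clarification: the ``self-averaging/decoupling'' you flag as the main obstacle in Case~3 is not an ingredient beyond the paper's assumptions --- since $(g_i^{\rm T}\delta)^2=2\ell_i$, the factorization is exactly the weak-homogeneity replacement $\ell_i\to L_0$ --- and your observation that a Gaussian/Wick evaluation would flip the sign of the $H_0\Sigma H_0$ term correctly matches the discrepancy between Lemma~\ref{prop: label noise covariance} at $\sigma^2=0$ and this theorem.
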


\begin{proof}
 We will use the following shorthand notations: $\ell_i:=\ell(f(\mathbf{w},x_i),y_i)$, $\ell_i':=\frac{\partial\ell_i}{\partial f}$, $\ell_i'':=\frac{\partial^2\ell_i}{\partial f^2}$. 
The Hessian of the loss function without regularization $H_0(\mathbf{w})=\nabla\nabla^\mathrm{T}L_0(\mathbf{w})$ is given by
\begin{equation}
    H_0(\mathbf{w})=\frac{1}{N}\sum_{i=1}^N\ell_i''\nabla f(\mathbf{w},x_i)\nabla f(\mathbf{w},x_i)^\mathrm{T}
    +\frac{1}{N}\sum_{i=1}^N\ell_i'\nabla\nabla^\mathrm{T}f(\mathbf{w},x_i).
    \label{eq:general_Hessian}
\end{equation}
The last term of Eq.~(\ref{eq:general_Hessian}) can be ignored when $L_0 \ll 1$, since 
\begin{align}
    \left\|\frac{1}{N}\sum_{i=1}^N\ell_i'\nabla\nabla^\mathrm{T} f(\mathbf{w},x_i)\right\|_F
    &\leq\left(\frac{1}{N}\sum_{i=1}^N(\ell_i')^2\right)^{1/2}\left(\frac{1}{N}\sum_{i=1}^N\|\nabla\nabla^\mathrm{T}f(\mathbf{w},x_i)\|_F^2\right)^{1/2} \nonumber \\
    &=
    \langle\ell'^2\rangle^{1/2}\left(\frac{1}{N}\sum_{i=1}^N\|\nabla\nabla^\mathrm{T}f(\mathbf{w},x_i)\|_F^2\right)^{\frac{1}{2}} \nonumber, \\
    &= 
    \sqrt{2 L_0(\mathbf{w})} \left(\frac{1}{N}\sum_{i=1}^N\|\nabla\nabla^\mathrm{T}f(\mathbf{w},x_i)\|_F^2\right)^{\frac{1}{2}} \nonumber,
\end{align}
where $\|\cdot\|_F$ stands for the Frobenius norm\footnote{In the linear regression problem, the last term of Eq.~(\ref{eq:general_Hessian}) does not exist since $\nabla\nabla^\mathrm{T}f(\mathbf{w},x_i)=0$.}, and we have defined the variable $\langle\ell'^2\rangle:=\frac{1}{N}\sum_{i=1}^N(\ell_i')^2$.
Since $\ell_i''=1$ for the mean-square error, we obtain
\begin{equation}
H_0(\mathbf{w}) = \frac{1}{N}\sum_{i=1}^N\nabla f(\mathbf{w},x_i)\nabla f(\mathbf{w},x_i)^\mathrm{T} + O \left(\sqrt{L_0}\right)
\end{equation}
near a minimum.
The Hessian with regularization $H_\Gamma(\mathbf{w})=\nabla\nabla^\mathrm{T}L_\Gamma(\mathbf{w})$ is just given by $H_0(\mathbf{w})+\Gamma$.


On the other hand, the SGD noise covariance $C(\mathbf{w})$ is given by Eq.~(\ref{eq:noise_cov}).
By assumption 2, the SGD noise covariance is directly related to the Hessian: 
\begin{align}
C(\mathbf{w})& = 
\frac{\langle\ell'^2\rangle}{SN}\sum_{i=1}^N\nabla f(\mathbf{w},x_i)\nabla f(\mathbf{w},x_i)^\mathrm{T}
-\frac{1}{S}\nabla L_{\Gamma}(\mathbf{w})\nabla L_{\Gamma}(\mathbf{w})^\mathrm{T} \nonumber \\& \quad+ \frac{2}{SN} \sum_{i=1}^N (\ell_i - L_0 )\nabla f(\mathbf{w},x_i)\nabla f(\mathbf{w},x_i)^\mathrm{T}
\nonumber \\
& = \frac{2L_0(\mathbf{w})}{S}H_0(\mathbf{w})-\frac{1}{S}\nabla L_{\Gamma}(\mathbf{w})\nabla L_{\Gamma}(\mathbf{w})^\mathrm{T} + o(L_0).
\end{align}%
This finishes the proof. $\square$

Now we prove Corollary~\ref{corr: different minima}.

\textit{Proof}. Near a minimum $\mathbf{w}^*$ of the full loss $L_\Gamma(\mathbf{w})$, we have
\begin{equation}
    \nabla L_{\Gamma}(\mathbf{w}) = H_0(\mathbf{w}^*)(\mathbf{w}-\mathbf{w}^*)+\Gamma\mathbf{w}^* + O(|\mathbf{w}-\mathbf{w}^*|^2),
    \label{eq:nabla_L0}
\end{equation}
within the approximation $L_\Gamma(\mathbf{w}) = L_\Gamma(\mathbf{w}^*)+(1/2)(\mathbf{w}-\mathbf{w}^*)^\mathrm{T}H_\Gamma(\mathbf{w}^*)(\mathbf{w}-\mathbf{w}^*)^\mathrm{T} + O(|\mathbf{w}-\mathbf{w}^*|^2)$. Equations~(\ref{eq:cov_Hessian}) and (\ref{eq:nabla_L0}) give the SGD noise covariance near a minimum of $L_\Gamma(\mathbf{w})$.

Now it is worth discussing two different cases separately: (1) with regularization and (2) without regularization. We first discuss the case when regularization is present.
In this case, the regularization $\Gamma$ is not small enough,  and the SGD noise covariance is \emph{not} proportional to the Hessian.
Near a local or global minimum $\mathbf{w}\approx \mathbf{w}^*$, the first term of the right-hand side of Eq.~(\ref{eq:nabla_L0}) is negligible, and hence we obtain\vspace{-1mm}
\begin{align}
     \mathbb{E}_{\mathbf{w}}[C(\mathbf{w})] &= \frac{2L_0(\mathbf{w}^*)}{S}H_0(\mathbf{w}^*)-\frac{1}{S}\Gamma\mathbf{w}^*\mathbf{w}^{*\mathrm{T}}\Gamma \nonumber \\
     &\quad +  \mathbb{E}_{\mathbf{w}} \left[\frac{1}{S}H_0(\mathbf{w}^*)(\mathbf{w}-\mathbf{w}^*)(\mathbf{w}-\mathbf{w}^*)^\mathrm{T}H_0(\mathbf{w}^*) \right] + O(|\mathbf{w} - \mathbf{w}^*|^2) \nonumber \\
     &= \frac{2L_0(\mathbf{w}^*)}{S}H_0(\mathbf{w}^*)-\frac{1}{S}\Gamma\mathbf{w}^*\mathbf{w}^{*\mathrm{T}}\Gamma  + O(S^{-2}) + O(|\mathbf{w} - \mathbf{w}^*|^2).
 \end{align}
where we have used the fact that $\mathbb{E}[\mathbf{w}]=\mathbf{w}^*$. The SGD noise does not vanish even at a global minimum of $L_\Gamma(\mathbf{w})$. Note that this also agrees with the exact result derived in Sec.~\ref{sec: regularization}: together with an anisotropic noise that is proportional to the Hessian, a rank-$1$ noise proportional to the strength of the regularization appears. This rank-1 noise is a signature of regularization.

On the other hand, as we will see below, the SGD noise covariance is proportional to the Hessian near a minimum when there is no regularization, i.e., $\Gamma = 0$.
We have
\begin{align}
    C(\mathbf{w}) &= \frac{2L_0(\mathbf{w})}{S}H_0(\mathbf{w})
    -\frac{1}{S}H_0(\mathbf{w}^*)(\mathbf{w}-\mathbf{w}^*)(\mathbf{w}-\mathbf{w}^*)^\mathrm{T}H_0(\mathbf{w}^*) + O(|\mathbf{w} - \mathbf{w}^*|^2). \label{eq:C_no_reg}
\end{align}%
For this case, we need to differentiate between a local minimum and a global minimum.
When $L_0(\mathbf{w}^*)$ is not small enough (e.g. at a local but not global minimum),
\begin{align}
    C(\mathbf{w})  
    & = \frac{2L_0(\mathbf{w}^*)}{S}H_0(\mathbf{w}) + \frac{(\mathbf{w}-\mathbf{w}^*)^\mathrm{T}H_0(\mathbf{w}^*)(\mathbf{w}-\mathbf{w}^*)}{S}H_0(\mathbf{w}) \nonumber \\
     &\quad -\frac{1}{S}H_0(\mathbf{w}^*)(\mathbf{w}-\mathbf{w}^*)(\mathbf{w}-\mathbf{w}^*)^\mathrm{T}H_0(\mathbf{w}^*) + O(|\mathbf{w} - \mathbf{w}^*|^2) \nonumber \\ 
    & = \frac{2L_0(\mathbf{w}^*)}{S}H_0(\mathbf{w}) + O(S^{-2}) + O(|\mathbf{w} - \mathbf{w}^*|^2) \nonumber \\
    & = \frac{2L_0(\mathbf{w}^*)}{S}H_0(\mathbf{w}^*) + O(S^{-2}) + O(|\mathbf{w} - \mathbf{w}^*|^2),
\end{align}
and so, to leading order,
\begin{equation}
    C = \frac{2L_0(\mathbf{w}^*)}{S}H_0(\mathbf{w}^*),
\end{equation}
which is proportional to the Hessian but also proportional to the achievable approximation error.

On the other hand, when $L_0(\mathbf{w}^*)$ is vanishingly small (e.g. at a global minimum), we have $2L_0(\mathbf{w})\approx (\mathbf{w}-\mathbf{w}^*)^\mathrm{T}H_0(\mathbf{w}^*)(\mathbf{w}-\mathbf{w}^*)$, and thus obtain
\begin{align}
    C(\mathbf{w}) &= \frac{1}{S}\Bigl[(\mathbf{w}-\mathbf{w}^*)^\mathrm{T}H_0(\mathbf{w}^*)(\mathbf{w}-\mathbf{w}^*)H_0(\mathbf{w}^*)
    -H_0(\mathbf{w}^*)(\mathbf{w}-\mathbf{w}^*)(\mathbf{w}-\mathbf{w}^*)^\mathrm{T}H_0(\mathbf{w}^*)\Bigr]\nonumber\\
    &\quad + O(S^{-2}) + O(|\mathbf{w} - \mathbf{w}^*|^2),
    \label{eq:cov_global}
\end{align}
i.e.,
\begin{equation}
    \mathbb{E}[C] =  \frac{1}{S} \left({\rm Tr}[H_0 \Sigma]I_D - H_0 \Sigma\right) H_0+ O(S^{-2}) + O(|\mathbf{w} - \mathbf{w}^*|^2).
\end{equation}
This completes the proof.
\end{proof}

\begin{remark}
    It should be noted that the second term on the right-hand side of Eq.~(\ref{eq:cov_global}) would typically be much smaller than the first term for large $D$. 
For example, when $H_0(\mathbf{w}^*)=aI_D$ with $a>0$,
the first and the second terms are respectively given by $(a^2/S)\|\mathbf{w}-\mathbf{w}^*\|^2I_D$ and $-(a^2/S)(\mathbf{w}-\mathbf{w}^*)(\mathbf{w}-\mathbf{w}^*)^\mathrm{T}$.
The Frobenius norm of the former is given by $(Da^2/S)\|\mathbf{w}-\mathbf{w}^*\|^2$, while that of the latter is given by $(a^2/S)\|\mathbf{w}-\mathbf{w}^*\|^2$, which indicates that in Eq.~(\ref{eq:cov_global}), the first term is dominant over the second term for large $D$.
Therefore the second term of Eq.~(\ref{eq:cov_global}) can be dropped for large $D$, and Eq.~(\ref{eq:cov_global}) is simplified as
\begin{equation}
    \begin{cases}
        C(\mathbf{w})\approx\frac{(\mathbf{w}-\mathbf{w}^*)^\mathrm{T}H_0(\mathbf{w}^*)(\mathbf{w}-\mathbf{w}^*)}{S}H_0(\mathbf{w}^*) ; \\
        
        \mathbb{E}[C] \approx \frac{{\rm Tr}[H_0 \Sigma]}{S} H_0 .
    \end{cases}
    \label{eq:cov_global_simplified}
\end{equation}
Again, the SGD noise covariance is proportional to the Hessian.
\end{remark}

In conclusion, as long as the regularization is small enough, that the SGD noise covariance near a minimum is proportional to the Hessian is a good approximation. This implies that the noise is multiplicative, which is known to lead to a heavy tail distribution \citep{clauset2009power, levy1996power}. Thus, we have studied the nature of the minibatch SGD noise in three different situations. 
As an example, we have demonstrated the power of this general formulation by applying it to the high-dimensional linear regression problem in Sec.~\ref{sec: high dimension linear regression}.

\subsection{Proof of Corollary~\ref{cor: general Sigma}}
\begin{proof}
 We prove the case where $\Gamma =0$ and $L(\mathbf{w}^*)\ne 0$ as an example. Substituting Theorem~\ref{theo: general formula} into Theorem~\ref{theo: liu2021}yields
 \begin{equation}
     \left[2I_D - \frac{1}{1+\mu}\Lambda H_0\right]\Lambda H_0 \Sigma = \frac{2L_0}{S(1-\mu)}\Lambda^2 H_0,
 \end{equation}
 where we have assumed necessary commutation relations. Suppose that the Hessian $H_0$ is of rank-$r$ with $r \le D$. The singular-value decomposition and its Moore-Penrose pseudo inverse are given by $H_0 = USV^{\rm T}$ and $H_0^+ = VS^{+}U^{\rm T}$, respectively, where $U$ and $V$ are unitary, $S$ is a rank-$r$ diagonal matrix with elements being singular values of $H_0$, and $S^+$ is obtained by inverting every non-zero entry of $S$. Multiplying $H_0^+$ to both sides of the above equation, we have
 \begin{equation}
     P_r \Sigma = \frac{2L_0}{S(1-\mu)}P_r\Lambda\left(2I_D - \frac{\Lambda}{1+\mu}H_0\right)^{-1},
 \end{equation}
 where $P_r = {\rm diag}(1,1,\dots,1,0,\dots,0)$ is the projection operator with $r$ non-zero entries. When the Hessian is full-rank, i.e., $r=D$, the Moore-Penrose pseudo inverse is nothing but the usual inverse. The other cases can be calculated similarly.
\end{proof}

\clearpage
\section{Applications}\label{app: implications proof}
\subsection{Infinite-dimensional Limit of the Linear Regression Problem}
Now we apply the general theory in Sec.~\ref{sec:general} to linear regressions in the high-dimensional limit, namely  $N,D\to\infty$ with $\alpha:=N/D$ held fixed.

\subsubsection{Proof of Proposition~\ref{prop: high D C}}\label{app: high D C}
The loss function
\begin{equation}
    L(\mathbf{w})=\frac{1}{2N}\sum_{i=1}^N\left(\mathbf{w}^\mathrm{T}x_i-y_i\right)^2
\end{equation}
with $y_i=\mathbf{u}^\mathrm{T}+\epsilon_i$ can be written as
\begin{equation}
    L(\mathbf{w})=\frac{1}{2}\left(\mathbf{w}-\mathbf{u}-\hat{A}^+\mathbf{v}\right)^\mathrm{T}\hat{A}\left(\mathbf{w}-\mathbf{u}-\hat{A}^+\mathbf{v}\right)-\frac{1}{2}\mathbf{v}^\mathrm{T}\hat{A}^+\mathbf{v}+\frac{1}{2N}\sum_{i=1}^N\epsilon_i^2,
\end{equation}
where $\hat{A}:=\frac{1}{N}\sum_{i=1}^Nx_ix_i^\mathrm{T}$ is an empirical covariance for the training data and $\mathbf{v}:=\frac{1}{N}\sum_{i=1}^Nx_i\epsilon_i$. The symbol $(\cdot)^+$ denotes the Moore-Penrose pseudoinverse. We also introduce the the averaged traing loss: $L_{\rm train}:= \mathbb{E}_{\mathbf{w} }[L(\mathbf{w})]$

The minimum of the loss function is given by
\begin{equation}
    \mathbf{w}^*=\mathbf{u}+\hat{A}^+\mathbf{v}+\Pi\mathbf{r},
\end{equation}
where $\mathbf{r}\in\mathbb{R}^D$ is an arbitrary vector and $\Pi$ is the projection onto the null space of $\hat{A}$.
Since $1-\Pi=\hat{A}^+\hat{A}$, $\mathbf{w}^*$ is also expressed as
\begin{equation}
    \mathbf{w}^*=\hat{A}^+(\hat{A}\mathbf{u}+\mathbf{v})+\Pi\mathbf{r}.
\end{equation}
In an underparameterized regime $\alpha>1$, $\Pi=0$ almost surely holds as long as the minimum eigenvalue of $A$ (not $\hat{A}$) is positive~\citep{hastie2019surprises}.
In this case, $\hat{A}^+=\hat{A}^{-1}$ and we obtain
\begin{equation}
    \mathbf{w}^*=\mathbf{u}+\hat{A}^{-1}\mathbf{v} \quad\text{for }\alpha>1.
\end{equation}
On the other hand, in an overparameterized regime $\alpha>1$, $\Pi\neq 0$ and there are infinitely many global minima.
In the ridgeless regression, we consider the global minimum that has the minimum norm $\|\mathbf{w}^*\|$, which corresponds to
\begin{equation}
    \mathbf{w}^*=\hat{A}^+(\hat{A}\mathbf{u}+\mathbf{v})=(1-\Pi)\mathbf{u}+\hat{A}^+\mathbf{v} \quad\text{for ridgeless regression with }\alpha<1.
\end{equation}
In both cases, the loss function is expressed as
\begin{equation}
    L(\mathbf{w})=\frac{1}{2}\left(\mathbf{w}-\mathbf{w}^*\right)^\mathrm{T}\hat{A}\left(\mathbf{w}-\mathbf{w}^*\right)-\frac{1}{2}\mathbf{v}^\mathrm{T}\hat{A}\mathbf{v}+\frac{1}{2N}\sum_{i=1}^N\epsilon_i^2.
\end{equation}

Asymptotically, $\mathbf{w}_t$ converges to a stationary point $\mathbf{w}^*$ with fluctuation $\Sigma$ obeying the following equation (Theorem~\ref{theo: liu2021}:
\begin{equation}
    \lambda\hat{A}\Sigma+\lambda\Sigma\hat{A}-\lambda^2\hat{A}\Sigma\hat{A}=\lambda^2C.
    \label{eq:Sigma_high_dim}
\end{equation}

The SGD noise covariance $C$ is given by Eq.~(\ref{eq:cov_Hessian}).
In the present case, the Hessian is given by $H=\hat{A}$ and we also have
\begin{equation}
    \frac{1}{N}\sum_{i=1}^N(\ell_i')^2
    =\frac{1}{N}\sum_{i=1}^N\left(\mathbf{w}^\mathrm{T}x_i-y_i\right)^2
    =\frac{2}{N}\sum_{i=1}^N\ell_i
    =2L(\mathbf{w}).
\end{equation}
On the other hand, $\nabla L(\mathbf{w})\nabla L(\mathbf{w})^\mathrm{T}=\hat{A}(\mathbf{w}-\mathbf{w}^*)(\mathbf{w}-\mathbf{w}^*)^\mathrm{T}\hat{A}$, and hence $\mathbb{E}_{\mathbf{w}}[\nabla L(\mathbf{w})\nabla L(\mathbf{w})^\mathrm{T}]=\hat{A}\Sigma\hat{A}$.
Therefore we obtain
\begin{equation}
C=\mathbb{E}_{\mathbf{w}}[C(\mathbf{w})]=\frac{2L_\mathrm{train}}{S}\hat{A}-\frac{1}{S}\hat{A}\Sigma\hat{A}.
\label{eq:C_high_dim}
\end{equation}

Now, we find $L_\mathrm{train}$. First, we define $X\in\mathbb{R}^{N\times D}$ as $X_{ik}=(x_i)_k$, and $\vec{\epsilon}\in\mathbb{R}^N$ as $\vec{\epsilon}_i=\epsilon_i$.
Then $\mathbf{w}^*=(1-\Pi)\mathbf{u}+\hat{A}^+\mathbf{v}=(1-\Pi)\mathbf{u}+(X^\mathrm{T}X)^+X^\mathrm{T}\vec{\epsilon}$.

With this notation, we have $\hat{A}=X^\mathrm{T}X/N$, and the loss function is expressed as
\begin{equation}
    L(w)=\frac{1}{2}(\mathbf{w}-\mathbf{w}^*)^\mathrm{T}\hat{A}(\mathbf{w}-\mathbf{w}^*)-\frac{1}{2N}\vec{\epsilon}^\mathrm{T}X(X^\mathrm{T}X)^+X^\mathrm{T}\vec{\epsilon}+\frac{1}{2N}\sum_{i=1}^N\epsilon_i^2.
\end{equation}
We therefore obtain
\begin{equation}
    L_\mathrm{train}=\frac{1}{2}\mathrm{Tr}[\hat{A}\Sigma]-\frac{1}{2N}\mathbb{E}[\vec{\epsilon}^\mathrm{T}X(X^\mathrm{T}X)^+X^\mathrm{T}\vec{\epsilon}]+\frac{\sigma^2}{2}.
\end{equation}
Here,
\begin{equation}
    \mathbb{E}[\vec{\epsilon}^\mathrm{T}X(X^\mathrm{T}X)^+X^\mathrm{T}\vec{\epsilon}]
    =\sigma^2\mathrm{Tr}[(X^\mathrm{T}X)(X^\mathrm{T}X)^+]
    =\sigma^2\mathrm{Tr}(1-\Pi).
    \label{eq:L_training_calc}
\end{equation}
We can prove that the following identity is almost surely satisfied~\citep{hastie2019surprises} as long as the smallest eigenvalue of $A$ (not $\hat{A}$) is positive:
\begin{equation}
    \mathrm{Tr}(1-\Pi)=\min\{D,N\}.
\end{equation}
We therefore obtain
\begin{equation}
    L_\mathrm{train}=\frac{1}{2}\mathrm{Tr}[\hat{A}\Sigma]-\frac{\sigma^2}{2N}\min\{D,N\}+\frac{\sigma^2}{2}
    =\left\{\begin{aligned}
    &\frac{1}{2}\mathrm{Tr}[\hat{A}\Sigma]+\frac{1}{2}\left(1-\frac{1}{\alpha}\right)\sigma^2 & \text{for }\alpha>1, \\
    &\frac{1}{2}\mathrm{Tr}[\hat{A}\Sigma] & \text{for }\alpha\leq 1
    \end{aligned}
    \right.
    \label{eq:L_training_high_dim}
\end{equation}

By substituting Eq.~(\ref{eq:L_training_high_dim}) into Eq.~(\ref{eq:C_high_dim}), we obtain the following SGD noise covariance:
\begin{equation}
    C=\left\{\begin{aligned}
    &\frac{1}{S}\left(\mathrm{Tr}[\hat{A}\Sigma]-\hat{A}\Sigma\right)\hat{A}+\frac{\sigma^2}{S}\left(1-\frac{1}{\alpha}\right)\hat{A} &\text{for }\alpha>1, \\
    &\frac{1}{S}\left(\mathrm{Tr}[\hat{A}\Sigma]-\hat{A}\Sigma\right)\hat{A} &\text{for }\alpha\leq 1.
    \end{aligned}
    \right.
    \label{eq:C_high_dim_result}
\end{equation}
This finishes the proof. $\square$


\subsubsection{Proof of Theorem~\ref{thm: trace of high D}}\label{app: trace high D}
\begin{proof}
We have to solve this equation:
\begin{align}
    \hat{A}\Sigma + \Sigma\hat{A}-\lambda\hat{A} \Sigma \hat{A}=\lambda C,
\end{align}
where $C$ is given in Proposition~\ref{prop: high D C}. Using the similar trick of multiplying by $\hat{G}:=2I_D-\lambda\left(1-\frac{1}{S}\right)\hat{A}$ as in Appendix~\ref{app: der of Sigma label}, one obtains
\begin{equation}
        {\rm Tr}[\hat{A}\Sigma] = \begin{cases}
        \frac{\lambda\sigma^2}{S}\left(1-\frac{1}{\alpha}\right)\hat{\kappa}&\text{for }\alpha>1;\\
            0 &\text{for }\alpha\leq 1,
        \end{cases}
    \end{equation}
    where   $\hat{\kappa}:=\frac{{\rm Tr}[\hat{G}^{-1}\hat{A}]}{1-\frac{\lambda}{S}{\rm Tr}[\hat{G}^{-1}\hat{A}]}$ with $\hat{G}:=2I_D-\lambda\left(1-\frac{1}{S}\right)\hat{A}$.
    
Substituting the above trace into the matrix equation, we have
\begin{equation}
        \Sigma = \begin{cases}
        \frac{\lambda\sigma^2}{S}\left(1-\frac{1}{\alpha}\right)\left(1+\frac{\lambda}{S}\hat{\kappa}\right)\hat{G}^{-1}&\text{for }\alpha>1;\\
            0 &\text{for }\alpha\leq 1.
        \end{cases}
    \end{equation}
\end{proof}

\subsection{Second-order Methods}\label{app: second order}
\begin{proposition}\label{prop: DNM}
Suppose that we run DNM with $\Lambda:= \lambda A^{-1}$ with random noise in the label. The model fluctuation is
\begin{equation}
    \Sigma = \frac{\lambda \sigma^2}{gS -\lambda D}A^{-1},
\end{equation}
where $g := 2(1-\mu) - \left(\frac{1-\mu}{1+\mu}+\frac{1}{S}\right)\lambda$.
\end{proposition}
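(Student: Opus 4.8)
The plan is to obtain this proposition as a direct specialization of the master fluctuation formula in Theorem~\ref{thm: Sigma label} to the damped-Newton preconditioner $\Lambda := \lambda A^{-1}$. First I would observe that for label-noise linear regression the Hessian is exactly the input covariance, $H = A$, and that $\Lambda = \lambda A^{-1}$ commutes with $A$, so the hypothesis $[\Lambda, A] = 0$ of Theorem~\ref{thm: Sigma label} is satisfied and Eq.~\eqref{eq: solution} applies verbatim. The single algebraic fact that drives everything is that the preconditioner trivializes the relevant matrix product: $\Lambda A = \lambda I_D$.

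Given this, I would carry out the substitution in order. Since $\Lambda A = \lambda I_D$, the matrix $G_\mu = 2(1-\mu)I_D - \left(\frac{1-\mu}{1+\mu}+\frac{1}{S}\right)\Lambda A$ collapses to $g\, I_D$ with $g = 2(1-\mu) - \left(\frac{1-\mu}{1+\mu}+\frac{1}{S}\right)\lambda$, which is exactly the scalar in the statement. Then $\Lambda A G_\mu^{-1} = (\lambda/g) I_D$, so ${\rm Tr}[\Lambda A G_\mu^{-1}] = \lambda D/g$; this is precisely the step where the dimension $D$ enters, through the trace of the identity. Plugging this into $\kappa_\mu$ gives $\kappa_\mu = \lambda D S /(gS - \lambda D)$, so the prefactor simplifies to $1 + \kappa_\mu/S = gS/(gS - \lambda D)$. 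Combining with $\Lambda G_\mu^{-1} = (\lambda/g)A^{-1}$ in Eq.~\eqref{eq: solution} and cancelling the common factors of $g$ and $S$ yields $\Sigma = \lambda\sigma^2 (gS - \lambda D)^{-1} A^{-1}$.

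As an equally valid self-contained route, I could instead insert the ansatz $\Sigma = c A^{-1}$ directly into the general matrix equation~\eqref{eq: preconditioning matrix eq}, using $C$ from Lemma~\ref{prop: label noise covariance}. With $\Lambda H = H\Lambda = \lambda I_D$, every product on the left reduces to a scalar multiple of $\Sigma$, and the three momentum terms recombine (via $-(1+\mu^2) + 2\mu = -(1-\mu)^2$) into a clean coefficient; on the right, $A^{-1}CA^{-1}$ produces the terms $\Sigma$, ${\rm Tr}[A\Sigma]A^{-1} = cD\, A^{-1}$, and $\sigma^2 A^{-1}$. Matching coefficients of $A^{-1}$ then leaves the single scalar equation $c(g - \lambda D/S) = \lambda\sigma^2/S$, i.e.\ the claimed value of $c$.

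I do not expect a genuine conceptual obstacle here, since the difficult Lyapunov-type solve is already packaged in Theorem~\ref{thm: Sigma label} and the proposition is essentially a corollary of it. The only point requiring care is the bookkeeping that produces the dimension-dependent term $-\lambda D$ in the denominator: it arises solely from ${\rm Tr}[\Lambda A G_\mu^{-1}] \propto {\rm Tr}[I_D] = D$ (equivalently from ${\rm Tr}[A\Sigma] = cD$), so I would double-check that the preconditioner is applied as $\Lambda = \lambda A^{-1}$ rather than as $\lambda H^{-1}$ with some regularized Hessian, and that the momentum coefficients are combined correctly to recover exactly $g$.
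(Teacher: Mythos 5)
Your proposal is correct, and your second route is essentially the paper's own proof: the paper likewise substitutes $\Lambda=\lambda A^{-1}$ into Eqs.~\eqref{eq: preconditioning matrix eq} and \eqref{eq: label noise covariance}, reduces the equation to $g\Sigma=\frac{\lambda}{S}\left({\rm Tr}[A\Sigma]+\sigma^2\right)A^{-1}$ (which forces $\Sigma\propto A^{-1}$ without even needing the ansatz), and then multiplies by $A$ and takes the trace to get ${\rm Tr}[A\Sigma]=\frac{\lambda D\sigma^2}{gS-\lambda D}$. Your first route, specializing Theorem~\ref{thm: Sigma label} via $\Lambda A=\lambda I_D$ so that $G_\mu=gI_D$ and $\kappa_\mu=\frac{\lambda DS}{gS-\lambda D}$, is an equally valid packaging of the same computation and all your algebra checks out.
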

\begin{proof}
Substituting $\Lambda= \lambda A^{-1}$ into Eqs.~\eqref{eq: preconditioning matrix eq} and \eqref{eq: label noise covariance} yields
\begin{equation}
    g\Sigma = \frac{\lambda}{S}\left({\rm Tr} [A\Sigma] + \sigma^2\right)A^{-1},
\end{equation}
where $g:== 2(1-\mu) - \left(\frac{1-\mu}{1+\mu}+\frac{1}{S}\right)\lambda$. Multiplying $A$ and taking trace on both sides, we have
\begin{equation}
    {\rm Tr} [A\Sigma] = \frac{\lambda D \sigma^2}{gS - \lambda D}.
\end{equation}
Therefore, the model fluctuation is
\begin{equation}
    \Sigma = \frac{\lambda \sigma^2}{gS -\lambda D}A^{-1}.
\end{equation}
\end{proof}

\begin{proposition}\label{prop: NGD}
Suppose that we run NGD with $\Lambda:= \frac{\lambda}{S} J(\mathbf{w})^{-1}\approx \frac{\lambda}{S} C^{-1}$ with random noise in the label. The model fluctuation is 
\begin{align}
\Sigma = \left[\frac{\lambda}{4}g-\frac{1}{2}\frac{\sigma^2}{1+D}+\frac{1}{4}\sqrt{\lambda^2 g^2 + 4\lambda \left(g-\frac{2}{1+D}\frac{1}{1+\mu}\right)\frac{\sigma^2}{1+D}+4\left(\frac{\sigma^2}{1+D}\right)^2}\right]A^{-1},
\end{align}
where $g:= \frac{1}{1+D}\frac{1}{1+\mu}+\frac{1}{1-\mu}\frac{1}{S}$.
\end{proposition}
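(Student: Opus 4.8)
The plan is to exploit the fact that, for the label-noise regression of Section~\ref{sec: label noise}, the Hessian is $H=A$ and every matrix entering the problem turns out to be a scalar multiple of a power of $A$; this collapses the general Lyapunov equation~\eqref{eq: preconditioning matrix eq} into a single scalar equation. Concretely, I would posit the ansatz $\Sigma = sA^{-1}$ for an unknown scalar $s>0$, the positivity being required since $\Sigma$ is a covariance. Under this ansatz $A\Sigma = sI_D$ and $\mathrm{Tr}[A\Sigma]=sD$, so the label-noise covariance of Lemma~\ref{prop: label noise covariance} becomes $C=\frac{1}{S}\bigl((1+D)s+\sigma^2\bigr)A$, i.e. $C$ is itself proportional to $A$. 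Writing $c:=\frac{1}{S}\bigl((1+D)s+\sigma^2\bigr)$ gives $C^{-1}=c^{-1}A^{-1}$, and hence the NGD preconditioner is $\Lambda=\frac{\lambda}{S}C^{-1}=\frac{\lambda}{Sc}A^{-1}$, which is also a multiple of $A^{-1}$.

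Next I would substitute $H=A$, $\Sigma=sA^{-1}$, and $\Lambda=\frac{\lambda}{Sc}A^{-1}$ into Eq.~\eqref{eq: preconditioning matrix eq}. Because $A$, $A^{-1}$, and all their products commute, each of the six matrix monomials collapses to a scalar times $A^{-1}$, and the matrix equation reduces to a scalar identity. The momentum-dependent coefficients combine through the algebraic identity
\[
-\frac{1+\mu^2}{1-\mu^2}+\frac{2\mu}{1-\mu^2}=-\frac{(1-\mu)^2}{1-\mu^2}=-\frac{1-\mu}{1+\mu},
\]
after which one divides out a common factor $\frac{\lambda}{Sc}A^{-1}$ and uses $c\ell=\lambda/S$ (with $\ell:=\lambda/(Sc)$) to eliminate $c$ in favor of $s$ and $\sigma^2$. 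Clearing the remaining denominator $(1+D)s+\sigma^2$ turns the self-consistency condition into a quadratic in $s$,
\[
s^2+\Bigl(\tfrac{\sigma^2}{1+D}-\tfrac{\lambda g}{2}\Bigr)s-\tfrac{\lambda}{2}\,\tfrac{\sigma^2}{1+D}\Bigl(g-\tfrac{1}{1+D}\tfrac{1}{1+\mu}\Bigr)=0,
\]
where $g=\frac{1}{1+D}\frac{1}{1+\mu}+\frac{1}{1-\mu}\frac{1}{S}$ emerges naturally once the batch and momentum terms are grouped.

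Finally I would solve this quadratic and select the root consistent with $\Sigma\succ0$. The admissible (positive) root is
\[
s=\frac{\lambda g}{4}-\frac{1}{2}\frac{\sigma^2}{1+D}+\frac{1}{4}\sqrt{\lambda^2g^2+4\lambda\Bigl(g-\tfrac{2}{1+D}\tfrac{1}{1+\mu}\Bigr)\tfrac{\sigma^2}{1+D}+4\Bigl(\tfrac{\sigma^2}{1+D}\Bigr)^2},
\]
which yields exactly the claimed $\Sigma=sA^{-1}$. As a sanity check, at $\sigma=0$ the quadratic degenerates to $s(s-\lambda g/2)=0$, whose nontrivial root $s=\lambda g/2$ recovers the expression quoted in the main text; this also explains the phrasing ``one solution,'' since $s=0$ is the other (trivial, fluctuation-free) root.

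I expect the genuine obstacle to be the self-referential structure of NGD rather than the linear algebra itself: because $\Lambda$ is built from $C$, which in turn depends on the unknown $\Sigma$, the fixed-point equation is \emph{quadratic} rather than linear in $s$, and this nonlinearity is precisely what produces the square root in the answer. Care is needed both in tracking the factor $c$ through the substitution and in justifying the selection of the positive root as the physically admissible positive-definite fluctuation.
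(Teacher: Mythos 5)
Your proposal is correct and takes essentially the same route as the paper: the paper's own proof likewise posits the ansatz $\Sigma = xA^{-1}$ (with $H=A$ and $\Lambda=\frac{\lambda}{S}C^{-1}$), substitutes it into Eq.~\eqref{eq: preconditioning matrix eq}, and solves the resulting scalar equation for $x$ ``by simple algebra,'' explicitly noting it yields one solution without claiming uniqueness. Your worked-out quadratic, positive-root selection, and $\sigma=0$ sanity check simply make explicit the algebra the paper leaves implicit, and they check out.
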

\begin{proof}
 Similarly to the previous case, the matrix equation satisfied by $\Sigma$ is \vspace{-1.2em}
 
{\small
\begin{equation}
    (1-\mu)(C^{-1}A\Sigma + \Sigma A C^{-1})-\frac{1+\mu^2}{1-\mu^2}\frac{\lambda}{S} C^{-1} A \Sigma A C^{-1}+\frac{\mu}{1-\mu^2}\frac{\lambda}{S}(C^{-1} A C^{-1} A \Sigma+ \Sigma A C^{-1} A C^{-1})=\frac{\lambda}{S}C^{-1}.\vspace{-1.2em}
\end{equation}}

Although it is not obvious how to directly solve this equation, it is possible to guess one solution according to the hope that $\Sigma$ be proportional to $J^{-1}$, in turn, $A^{-1}$ \citep{Amari:1998:NGW:287476.287477,liu2021noise}. We assume that $\Sigma = x A^{-1}$ and substitute it into the above equation to solve for $x$. This yields one solution without claiming its uniqueness. By simple algebra, this $x$ is solved to be
\begin{equation}
    x = \frac{\lambda}{4}g-\frac{1}{2}\frac{\sigma^2}{1+D}+\frac{1}{4}\sqrt{\lambda^2 g^2 + 4\lambda \left(g-\frac{2}{1+D}\frac{1}{1+\mu}\right)\frac{\sigma^2}{1+D}+4\left(\frac{\sigma^2}{1+D}\right)^2}.
\end{equation}
Let $\sigma=0$. We obtain the result in Sec.~\ref{sec: second order}.
\end{proof}

\subsection{Estimation of Tail Index}\label{app: tail index proof}

In \citet{mori2021logarithmic, meng2020dynamic}, it is shown that the (1d) discrete-time SGD results in a distribution that is similar to a Student's t-distribution:
\begin{equation}\label{eq: heavy tail distirbution of sgd}
    p(w) \sim (\sigma^2 + aw^2)^{-\frac{1+\beta}{2}},
\end{equation}
where $\sigma^2$ is the degree of noise in the label, and $a$ is the local curvature of the minimum. For large $w$, this distribution is a power-law distribution with tail index:
\begin{equation}
    p(|w|) \sim |w|^{-(1 + \beta)},
\end{equation}
and it is not hard to check that $\beta$ also equal to the smallest moment of $w$ that diverges: $\mathbb{E}[w^{\beta}] = \infty$. Therefore, estimating $\beta$ can be of great use both empirically and theoretically.

In continuous-time, it is found that $\beta_{\rm cts}=\frac{2S}{a\lambda}+1$ \citep{mori2021logarithmic}. For discrete-time SGD, we hypothesize that the discrete-time nature causes a change in the tail index $\beta= \beta_{\rm cts} + \epsilon$, and we are interested in finding $\epsilon$. We propose a ``semi-continuous" approximation to give the formula to estimate the tail index. Notice that Theorem~\ref{thm: Sigma label} gives the variance of the discrete-time SGD, while Eq.~\eqref{eq: heavy tail distirbution of sgd} can be integrated to give another value of the variance, and the two expressions must be equal for consistency. This gives us an equation that $\beta$ must satisfy:
\begin{equation}
     \int p(w; \beta) (w - \mathbb{E}[w])^2 = \text{Var}[w],
\end{equation}
this procedure gives the following formula:
\begin{equation}
    \beta(\lambda,S)=\frac{2S}{a\lambda}-S = \beta_{\rm cts}+ \epsilon,
\end{equation}
and one immediately recognizes that $-(S+1)$ is the discrete-time contribution to the tail index. See Figure~\ref{fig:my_label} for additional experiments. We see that the proposed formula agrees with the experimentally measured value of the tail index for all ranges of the learning rate, while the result of \cite{mori2021logarithmic} is only correct when $\lambda\to 0^+$.
\cite{hodgkinson2020multiplicative} also studies the tail exponent of discrete-time SGD; however, their conclusion is only that the ``index decreases with the learning rate and increases with the batch size". In contrast, our result give the functional form of the tail index directly. In fact, this is the first work that gives any functional form for the tail index of discrete-time SGD fluctuation to the best of our knowledge.

The following proposition gives the intermediate steps in the calculation.

\begin{figure}
    \centering
    \includegraphics[width=0.32\linewidth]{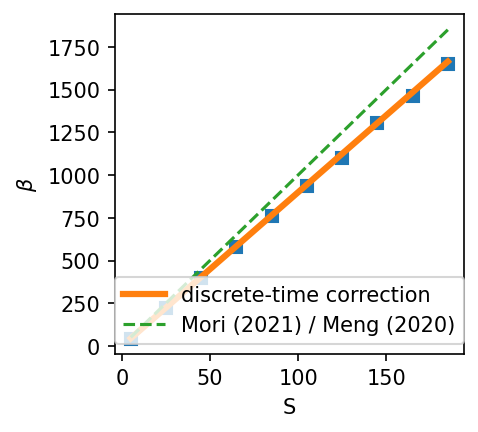}
    \includegraphics[width=0.32\linewidth]{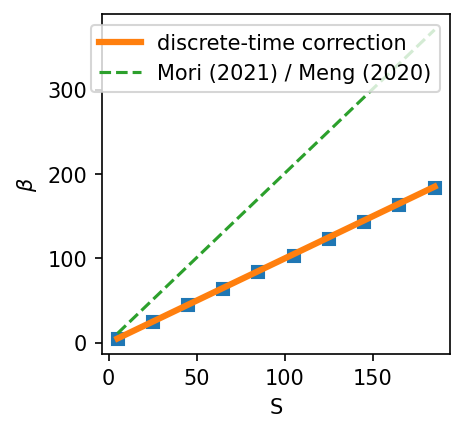}
    \includegraphics[width=0.32\linewidth]{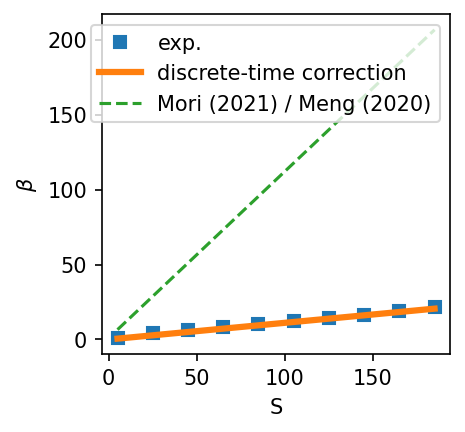}
    \caption{Tail index $\beta$ of the stationary distribution of SGD in a 1d linear regression problem. \textbf{Left to Right}: $a\lambda=0.2,\ 1.0,\ 1.8$.}
    \label{fig:my_label}
\end{figure}

\begin{proposition}\label{thm: tail idx stu}$($Tail index estimation for discrete-time SGD$)$ Let the parameter distribution be $p(w) \sim \left(\sigma^2+a  w^2\right)^{-\frac{1+\beta}{2}}$, and $\text{Var}[w]$ be given by Theorem~\ref{thm: Sigma label}. Then
\begin{equation}\label{eq: tail idx stu}
    \beta(\lambda,S)=\frac{2S}{a\lambda}-S.
\end{equation}
\end{proposition}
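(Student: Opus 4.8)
\textit{Proof proposal.} The plan is to pin down $\beta$ by a self-consistency condition on the second moment: the postulated density $p(w)\propto(\sigma^2+aw^2)^{-(1+\beta)/2}$ has a variance that is a function of $\beta$, and this variance must coincide with the stationary parameter variance already computed in Theorem~\ref{thm: Sigma label} (specialized to one dimension). Equating the two gives a single scalar equation for $\beta$, which I would then solve in closed form. Since $p$ is even in $w$ (the minimum sits at $w=0$ after the shift $\mathbf{w}-\mathbf{u}\to\mathbf{w}$), the mean vanishes and $\mathrm{Var}[w]=\mathbb{E}[w^2]$, so only one integral is actually needed. This matching is the ``semi-continuous'' idea: we retain the continuous-time functional \emph{form} of the density but feed it the exact discrete-time variance.

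First I would evaluate $\mathbb{E}[w^2]$ for the heavy-tailed density. Substituting $u=\sqrt{a}\,w/\sigma$ converts both the numerator and the normalizing integral into Beta integrals of the form $\int_{-\infty}^{\infty}u^{2k}(1+u^2)^{-s}\,du=\Gamma(k+\tfrac12)\Gamma(s-k-\tfrac12)/\Gamma(s)$ with $s=(1+\beta)/2$. Taking the ratio of the $k=1$ and $k=0$ integrals and simplifying with $\Gamma(3/2)=\tfrac12\Gamma(1/2)$ and $\Gamma(s-\tfrac12)=(s-\tfrac32)\Gamma(s-\tfrac32)$ collapses every Gamma factor, leaving
\begin{equation}
    \mathbb{E}[w^2]=\frac{\sigma^2}{a}\cdot\frac{1}{\beta-2},
\end{equation}
which is valid precisely for $\beta>2$, the regime in which the second moment converges. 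This is the one non-mechanical step: one must track the Gamma-function recursions and check the convergence condition $\beta>2$ rather than merely manipulate rationals.

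Next I would specialize Theorem~\ref{thm: Sigma label} to the scalar case ($D=1$, $\mu=0$), writing $A\to a$, $\Lambda\to\lambda$, $G=2-(1+1/S)\lambda a$, and $\kappa=(\lambda a/G)/(1-\tfrac1S\lambda a/G)$. A short simplification gives $1+\kappa/S=SG/(SG-\lambda a)$, so that $\mathrm{Var}[w]=\tfrac{\lambda\sigma^2}{S}(1+\kappa/S)G^{-1}$ telescopes to
\begin{equation}
    \mathrm{Var}[w]=\frac{\lambda\sigma^2}{SG-\lambda a}=\frac{\lambda\sigma^2}{2S-\lambda a(S+2)}.
\end{equation}
Equating this with $\sigma^2/[a(\beta-2)]$ and cancelling $\sigma^2$ yields $\beta-2=\bigl(2S-\lambda a(S+2)\bigr)/(a\lambda)=\tfrac{2S}{a\lambda}-(S+2)$, hence $\beta=\tfrac{2S}{a\lambda}-S$, as claimed; the discrete-time correction $-(S+1)$ then drops out by comparison with $\beta_{\mathrm{cts}}=\tfrac{2S}{a\lambda}+1$. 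The genuine obstacle is the Beta-integral evaluation together with the convergence bookkeeping; the rest is one-dimensional algebra.
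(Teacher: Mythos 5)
Your proposal is correct and follows essentially the same route as the paper's proof: compute the variance $\mathrm{Var}[w]=\sigma^2/[a(\beta-2)]$ of the postulated density (valid for $\beta>2$) and equate it with the exact one-dimensional stationary variance from Theorem~\ref{thm: Sigma label} to solve for $\beta$. In fact you supply more detail than the paper does, working out the Beta-integral evaluation and the simplification $1+\kappa/S = SG/(SG-\lambda a)$ explicitly, both of which check out.
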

\begin{proof}


The normalization factor for the distribution exists if $\beta>0$:
\begin{equation}
    \mathcal{N} = \sqrt{\frac{a}{\pi}}\frac{\sigma^{\beta}\Gamma(\frac{1+\beta}{2})}{\Gamma(\frac{\beta}{2})}.
\end{equation}
If $\beta>2$, the variance exists and the value is
\begin{equation}\label{eq: var stu}
    {\rm Var}[w] = \frac{\sigma^2}{a(\beta-2)}.
\end{equation}
By equating Eq.~\eqref{eq: var stu} with the exact variance \eqref{eq: solution}, we are able to solve for an expression of the tail index as
\begin{equation}
    \beta(\lambda,S)  = \frac{2S}{a\lambda}-S.
\end{equation}
\end{proof}

\clearpage

\section{Proofs and Additional Theoretical Considerations}\label{app: all proofs}
\subsection{Proof of Proposition~\ref{prop: noise_cov}}\label{app: der of C}
The with-replacement sampling is defined in Definition~\ref{def: with replacement}. Let us here define the without-replacement sampling.
\begin{definition}\label{def: without replacement}
    A minibatch SGD \textit{without replacement} computes the update to the parameter $\mathbf{w}$ with the following set of equations:
    \vspace{-1mm}
    \begin{equation}\label{eq: without replacement}
        \begin{cases}
            \hat{\mathbf{g}}_t= \frac{1}{S}\sum_{i\in B_t} \nabla  \ell(x_i, y_i, \mathbf{w}_{t-1});\\
            \mathbf{w}_t = \mathbf{w}_{t-1} - \lambda \hat{\mathbf{g}}_t,
        \end{cases}
    \end{equation}
    where $S := |B_t|\leq N$ is the minibatch size, and
    the set $B_t$ is an element uniformly-randomly drawn from the set of all $S$-size subsets of $\{1,...,N\}$.
   %
\end{definition} 
From the definition of the update rule for sampling with or without replacement, the covariance matrix of the SGD noise can be exactly derived.\vspace{-1mm}
\begin{proposition}\label{prop: finite N covariance}The covariance matrices of noise in SGD due to minibatch sampling as defined in Definitions~\ref{def: with replacement} and \ref{def: without replacement} with an arbitrary $N$ are
\begin{align}
    C(\mathbf{w})= \begin{cases}
        \frac{1}{S}\left[\frac{1}{N}\sum_{i=1}^{N}\nabla \ell_i \nabla \ell_i^{\rm T}-\nabla L(\mathbf{w})\nabla L(\mathbf{w})^{\rm T} \right], &(\text{with\ replacement})\\
        \frac{N-S}{S(N-1)}\left[\frac{1}{N}\sum_{i=1}^{N}\nabla \ell_i \nabla \ell_i^{\rm T}-\nabla L(\mathbf{w})\nabla L(\mathbf{w})^{\rm T}\right], &(\text{without\ replacement})
    \end{cases}\vspace{-1mm}
\end{align}
where the shorthand notation $\ell_i(\mathbf{w}):= l(x_i, y_i, \mathbf{w})$ is used.
\end{proposition}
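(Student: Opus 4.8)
The plan is to reduce both covariance formulas to a single scalar quantity---the variance of the multiplicity with which a fixed datum appears in the batch---and then evaluate that variance combinatorially for each sampling scheme.

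First I would write $\mathbf{g}_i := \nabla\ell_i(\mathbf{w})$ and let $m_i$ be the random number of times index $i$ occurs in $B_t$, so that $\hat{\mathbf{g}}_t = \frac{1}{S}\sum_{i=1}^N m_i\mathbf{g}_i$ with $\sum_i m_i = S$ holding deterministically. Since $\mathbb{E}_{\rm B}[\hat{\mathbf{g}}_t]=\nabla L$ (equivalently $\mathbb{E}[m_i]=S/N$ by symmetry), the noise covariance is $C = \mathbb{E}_{\rm B}[\hat{\mathbf{g}}_t\hat{\mathbf{g}}_t^{\mathrm T}] - \nabla L\nabla L^{\mathrm T} = \frac{1}{S^2}\sum_{i,j}\mathrm{Cov}(m_i,m_j)\,\mathbf{g}_i\mathbf{g}_j^{\mathrm T}$. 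Both sampling rules are invariant under relabelling the data, so exchangeability forces $\mathrm{Var}(m_i)=v$ for every $i$ and $\mathrm{Cov}(m_i,m_j)=c$ for every $i\neq j$, with $v,c$ depending only on $N$ and $S$.

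The second step exploits the hard constraint $\sum_i m_i=S$: taking the variance of both sides gives $0 = Nv + N(N-1)c$, hence $c=-v/(N-1)$. Substituting this together with $\sum_i\mathbf{g}_i\mathbf{g}_i^{\mathrm T}=NM$ and $\sum_{i\neq j}\mathbf{g}_i\mathbf{g}_j^{\mathrm T}=N^2\nabla L\nabla L^{\mathrm T}-NM$, where $M:=\frac{1}{N}\sum_i\mathbf{g}_i\mathbf{g}_i^{\mathrm T}$, collapses the double sum to $C = \frac{vN^2}{S^2(N-1)}\bigl(\frac{1}{N}\sum_{i=1}^N\mathbf{g}_i\mathbf{g}_i^{\mathrm T}-\nabla L\nabla L^{\mathrm T}\bigr)$. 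Thus the whole proposition reduces to computing the scalar $v=\mathrm{Var}(m_i)$ for each scheme; matching the prefactor $\frac{vN^2}{S^2(N-1)}$ against the two claimed coefficients is then bookkeeping, and the limit $N\gg S$ recovers Proposition~\ref{prop: noise_cov}.

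The last step computes $v$. For sampling without replacement each $m_i$ is the indicator $\mathbf{1}\{i\in B_t\}$ of membership in a uniform $S$-subset, so $m_i\sim\mathrm{Bernoulli}(S/N)$ and $v=\frac{S}{N}\bigl(1-\frac{S}{N}\bigr)=\frac{S(N-S)}{N^2}$, giving the factor $\frac{N-S}{S(N-1)}$. For sampling with replacement the multiplicity vector $(m_1,\dots,m_N)$ is uniform over the weak compositions of $S$ into $N$ parts, for which I would compute the second factorial moment by stars-and-bars: the number of compositions with $m_1$ fixed is $\binom{N+S-m_1-2}{N-2}$, and Vandermonde's convolution $\sum_j\binom{j}{2}\binom{N+S-2-j}{N-2}=\binom{N+S-1}{N+1}$ gives $\mathbb{E}\bigl[\binom{m_1}{2}\bigr]=\binom{N+S-1}{N+1}\big/\binom{N+S-1}{N-1}=\frac{S(S-1)}{N(N+1)}$; combined with $\mathbb{E}[m_1]=S/N$ this yields $v=\frac{S(N-1)(N+S)}{N^2(N+1)}$ and the coefficient $\frac{N+S}{S(N+1)}$. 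The main obstacle is precisely this with-replacement variance: one must recognize that the batch multiplicities are \emph{not} multinomial but uniform over multisets, and then evaluate the binomial sum; as a consistency check one can instead compute $\mathbb{E}[m_im_j]$ directly for $i\neq j$ and verify it agrees with $c=-v/(N-1)$.
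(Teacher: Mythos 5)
Your proof is correct, and it reaches the two coefficients by a genuinely more economical route than the paper's. The paper works with the same decomposition $\hat{\mathbf{g}}=\frac{1}{S}\sum_n s_n\mathbf{g}_n$, but then evaluates \emph{both} second moments $\mathbb{E}[s_n^2]$ and $\mathbb{E}[s_ns_{n'}]$ ($n\neq n'$) directly for each scheme --- for with-replacement this means a double sum over conditional multiset probabilities yielding $\mathbb{E}[s_ns_{n'}]=\frac{S(S-1)}{N(N+1)}$ and $\mathbb{E}[s_n^2]=\frac{S(N+2S-1)}{N(N+1)}$, and analogously with hypergeometric-type probabilities for without-replacement. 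Your exchangeability argument plus the hard constraint $\sum_i m_i=S$, giving $c=-v/(N-1)$, eliminates the off-diagonal computation entirely and makes transparent why both formulas share the common form $\frac{vN^2}{S^2(N-1)}\bigl[\frac{1}{N}\sum_i\nabla\ell_i\nabla\ell_i^{\mathrm T}-\nabla L\nabla L^{\mathrm T}\bigr]$, something the paper's case-by-case algebra obscures; your suggested consistency check of computing $\mathbb{E}[m_im_j]$ directly is precisely what the paper does as its main computation. Your numbers check out: $v=\frac{S(N-S)}{N^2}$ gives $\frac{N-S}{S(N-1)}$, and the factorial-moment computation $\mathbb{E}\bigl[\binom{m_1}{2}\bigr]=\binom{N+S-1}{N+1}\big/\binom{N+S-1}{N-1}=\frac{S(S-1)}{N(N+1)}$ gives $v=\frac{S(N-1)(N+S)}{N^2(N+1)}$ and hence $\frac{N+S}{S(N+1)}$, and your implied cross moment $c+(S/N)^2=\frac{S(S-1)}{N(N+1)}$ agrees with the paper's. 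One caveat you were right to flag explicitly: the with-replacement coefficient $\frac{N+S}{S(N+1)}$ holds only under the uniform-over-multisets model of the batch, which is exactly the probability law $P(s_n=l)=\binom{N+S-l-2}{N-2}\big/\binom{N+S-1}{N-1}$ the paper's own proof uses; under the literal i.i.d.-indices reading of Definition~\ref{def: with replacement}, the multiplicities would be multinomial, $v=\frac{S(N-1)}{N^2}$, and the prefactor would collapse to $\frac{1}{S}$ for all $N$. Your proof and the paper's proof make the same modeling choice, so they prove the same statement, but this distinction is the one place where the proposition's with-replacement branch is sensitive to the sampling definition.
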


 In the limit of $S=1$ or $N\gg S$, two cases coincide. In the $N\gg S$ limit, both methods of sampling have the same noise covariance as stated in Proposition~\ref{prop: noise_cov}:
\begin{align}
    C(\mathbf{w})=\frac{1}{SN}\sum_{i=1}^{N}\nabla \ell_i \nabla \ell_i^{\rm T}-\frac{1}{S}\nabla L(\mathbf{w})\nabla L(\mathbf{w})^{\rm T}.
\end{align}

\begin{remark}
We also note that a different way of defining minibatch noise exists in \citet{Hoffer2017}. The difference is that our definition requires the size of each minibatch to be exactly $S$, while \citet{Hoffer2017} treats the batch size also as a random variable and is only expected to be $S$. In comparison, our definition agrees better with the common practice.
\end{remark}

Now we prove Proposition~\ref{prop: finite N covariance}. 
\begin{proof}
We derive the noise covariance matrices for sampling with and without replacement. We first derive the case with replacement. According to the definition, the stochastic gradient for sampling with replacement can be rewritten as
\begin{align}
    \hat{\mathbf{g}}=\frac{1}{S}\sum_{n=1}^{N}\mathbf{g}_ns_n,
\end{align}
where $\mathbf{g}_n:=\nabla \ell_n$ and
\begin{align}
    s_n=l,\ {\rm if\ } l-{\rm multiple\ } n'{\rm s\ are\ sampled\ in\ S,\ with\ }0\le l\le S.
\end{align}
The probability of $s_n$ assuming value $l$ is given by the multinomial distribution
\begin{align}
    P(s_n=l)={{S}\choose l}\left(\frac{1}{N}\right)^l \left(1-\frac{1}{N}\right)^{S-l}.
\end{align}
Therefore, the expectation value of $s_n$ is given by
\begin{align}
    \mathbb{E}_{\rm B}[s_n]=\sum_{l=0}^{S}lP(s_n=l)=\frac{S}{N},
\end{align}
which gives
\begin{align}
    \mathbb{E}_{\rm B}[\hat{\mathbf{g}}]=\mathbf{g}:=\frac{1}{N}\sum_{n=1}^{N}\mathbf{g}_n=\nabla L(\mathbf{w}).
\end{align}
For the covariance, we first calculate the covariance between $s_n$ and $s_{n'}$. Due to the properties of the covariance of multinomial distribution, we have for $n\ne n'$
\begin{align}
    \mathbb{E}_{\rm B}[s_ns_{n'}]&={\rm cov}[s_n ,s_{n'}]+\mathbb{E}[s_n]^2\nonumber\\
    &=-\frac{S}{N^2}+\frac{S^2}{N^2}\nonumber\\
    &=\frac{S(S-1)}{N^2};
\end{align}
and for $n=n'$
\begin{align}
    \mathbb{E}_{\rm B}[s_ns_{n}]&={\rm Var}[s_n]+\mathbb{E}[s_n]^2\nonumber\\
    &= \frac{S}{N}\frac{N-1}{N}+\frac{S^2}{N^2}\nonumber\\
    &=\frac{SN+S(S-1)}{N^2}.
\end{align}
Substituting these results into the definition of the noise covariance yields
\begin{align}
    C(\mathbf{w}) &=  \mathbb{E}_{\rm B}[\hat{\mathbf{g}}\hat{\mathbf{g}}^{\rm T}]-\mathbb{E}_{\rm B}[\hat{\mathbf{g}}]\mathbb{E}_{\rm B}[\hat{\mathbf{g}}]^{\rm T}\nonumber\\
    &= \frac{1}{S^2}\sum_{n=1}^{N}\sum_{n'=1}^{N}\mathbf{g}_n\mathbf{g}_{n'}^{\rm T}\mathbb{E}_{\rm B}[s_ns_{n'}]-\mathbf{g}\mathbf{g}^{\rm T}\nonumber\\
    &=  \frac{1}{S^2}\sum_{n, n'=1}^{N}\mathbf{g}_n\mathbf{g}_{n'}^{\rm T}\frac{S(S-1)}{N^2} +  \frac{1}{S^2}\sum_{n=1}^{N}\mathbf{g}_n\mathbf{g}_{n}^{\rm T}\left[\frac{SN+S(S-1)}{N^2}-\frac{S(S-1)}{N^2}\right]-\mathbf{g}\mathbf{g}^{\rm T}\nonumber\\
    &= \frac{1}{NS}\sum_{n=1}^{N}\mathbf{g}_n\mathbf{g}_{n}^{\rm T}-\frac{1}{S}\mathbf{g}\mathbf{g}^{\rm T}
    \nonumber\\
    &= \frac{1}{S}\left[\frac{1}{N}\sum_{i=1}^{N}\nabla \ell_i \nabla \ell_i^{\rm T}-\nabla L(\mathbf{w})\nabla L(\mathbf{w})^{\rm T} \right].
\end{align}

Then, we derive the noise covariance for sampling without replacement. Similarly, according to the definition, the stochastic gradient for sampling without replacement can be rewritten as
\begin{align}
    \hat{\mathbf{g}}=\frac{1}{S}\sum_{n=1}^{N}\mathbf{g}_ns_n,
\end{align}
where 
\begin{align}
    s_n=\begin{cases}
        0, {\rm if\ } n\notin S;\\
        1, {\rm if\ } n\in S.
    \end{cases}
\end{align}
The probability of $n$ that is sampled in $S$ from $N$ is given by
\begin{align}
    P(s_n=1)=\frac{{{N-1}\choose {S-1}}}{{{N}\choose{S}}}=\frac{S}{N}.
\end{align}
The expectation value of $s_n$ is then given by
\begin{align}
    \mathbb{E}_{\rm B}[s_n]=P(s_n=1)=\frac{S}{N},
\end{align}
which gives
\begin{align}
    \mathbb{E}_{\rm B}[\hat{\mathbf{g}}]=\mathbf{g}:=\frac{1}{N}\sum_{n=1}^{N}\mathbf{g}_n=\nabla L(\mathbf{w}).
\end{align}
For the covariance, we first calculate the covariance between $s_n$ and $s_{n'}$. By definition, we have for $n\ne n'$
\begin{align}
    \mathbb{E}_{\rm B}[s_ns_{n'}]&=P(s_n=1,s_n'=1)=P(s_n=1|s_n'=1)P(s_n'=1)\nonumber\\
    &=\frac{{{N-2}\choose{S-2}}}{{{N-1}\choose{S-1}}}\frac{{{N-1}\choose{S-1}}}{{{N}\choose{S}}}=\frac{S(S-1)}{N(N-1)};
\end{align}
and for $n=n'$
\begin{align}
    \mathbb{E}_{\rm B}[s_ns_{n}]=P(s_n=l)=\frac{S}{N}.
\end{align}
Substituting these results into the definition of the noise covariance yields
\begin{align}
    C(\mathbf{w}) &=  \mathbb{E}_{\rm B}[\hat{\mathbf{g}}\hat{\mathbf{g}}^{\rm T}]-\mathbb{E}_{\rm B}[\hat{\mathbf{g}}]\mathbb{E}_{\rm B}[\hat{\mathbf{g}}]^{\rm T}\nonumber\\
    &= \frac{1}{S^2}\sum_{n=1}^{N}\sum_{n'=1}^{N}\mathbf{g}_n\mathbf{g}_{n'}^{\rm T}\mathbb{E}_{\rm B}[s_ns_{n'}]-\mathbf{g}\mathbf{g}^{\rm T}\nonumber\\
    &=  \frac{1}{S^2}\sum_{n, n'=1}^{N}\mathbf{g}_n\mathbf{g}_{n'}^{\rm T}\frac{S(S-1)}{N(N-1)} +  \frac{1}{S^2}\sum_{n=1}^{N}\mathbf{g}_n\mathbf{g}_{n}^{\rm T}\left[\frac{S}{N}-\frac{S(S-1)}{N(N-1)}\right]-\mathbf{g}\mathbf{g}^{\rm T}\nonumber\\
    &= \frac{1}{NS}\frac{N-S}{N-1}\sum_{n=1}^{N}\mathbf{g}_n\mathbf{g}_{n}^{\rm T}-\frac{N-S}{S(N-1)}\mathbf{g}\mathbf{g}^{\rm T}
    \nonumber\\
    &= \frac{N-S}{S(N-1)}\left[\frac{1}{N}\sum_{i=1}^{N}\nabla \ell_i \nabla \ell_i^{\rm T}-\nabla L(\mathbf{w})\nabla L(\mathbf{w})^{\rm T} \right].
\end{align}
\end{proof}

\subsection{Proofs in Sec.~\ref{sec: label noise}}\label{app: proofs in sec label}
\subsubsection{Proof of Lemma~\ref{prop: label noise covariance}}\label{app: der of label noise cov}
\begin{proof}
From the definition of noise covariance \eqref{eq:noise_cov}, the covariance matrix for the noise in the label is
{\small
\begin{align}
    C(\mathbf{w})&=\frac{1}{NS} \sum_{i=1}^{N}\nabla l_i(\mathbf{w}_{t-1}) \nabla l_i(\mathbf{w}_{t-1})^{\rm T} - \frac{1}{S}\nabla L(\mathbf{w}_{t-1})\nabla L(\mathbf{w}_{t-1})^{\rm T}\nonumber\\
    &= \frac{1}{S} \frac{1}{N}\sum_i^N(\mathbf{w}^{\rm T}x_i - \epsilon_i)x_i x_i^{\rm T}(\mathbf{w}^{\rm T}x_i  - \epsilon_i)^{\rm T} - \frac{1}{S}\left[\frac{1}{N}\sum_i^N (\mathbf{w}^{\rm T}x_i - \epsilon_i)x_i\right]\left[\frac{1}{N}\sum_j^N x_j^{\rm T}(\mathbf{w}^{\rm T}x_j  - \epsilon_j)^{\rm T}\right]\nonumber\\
    &= \frac{1}{S} \frac{1}{N}\sum_i^N(\mathbf{w}^{\rm T} x_ix_i x_i^{\rm T}x_i^{\rm T} \mathbf{w} + \epsilon_i^2 x_ix_i^{\rm T}  ) - \frac{1}{S}\left[\frac{1}{N}\sum_i^N(\mathbf{w}^{\rm T} x_ix_i)\right]\left[\frac{1}{N}\sum_j^N(x_i^{\rm T}x_i^{\rm T}\mathbf{w})\right] \label{eq: C sum}\\
    &= \frac{1}{S}(A\mathbf{w}\mathbf{w}^{\rm T}A + {\rm Tr}[A\mathbf{w}\mathbf{w}^{\rm T}]A + \sigma^2 A  ), \label{eq: C large number}
\end{align}}
where we have invoked the law of large numbers and the expectation value of the product of four Gaussian random variables in the third line is evaluated as follows.

Because $N$ is large, we invoke the law of large numbers to obtain the $(j,k)$-th component of the matrix as
\begin{align}
    \lim_{N\to \infty}\frac{1}{N}\sum_{i=1}^{N}(\mathbf{w}^{\rm T}x_i x_i x_i^{\rm T}x_i^{\rm T}\mathbf{w})_{jk}&=\mathbb{E}_{\rm B}[\mathbf{w}^{\rm T}x x x^{\rm T}x^{\rm T}\mathbf{w}]_{jk}=\mathbb{E}_{\rm B}\left[\sum_{i}^{D}w_ix_i x_j x_k \sum_{i'}^{D}x_{i'}w_{i'}\right].
\end{align}
Because the average is taken with respect to $x$ and each $x$ is a Gaussian random variable, we apply the expression for the product of four Gaussian random variables $\mathbb{E}[x_1 x_2 x_3 x_4]=\mathbb{E}[x_1 x_2]\mathbb{E}[x_3 x_4]+\mathbb{E}[x_1 x_3]\mathbb{E}[x_2 x_4]+\mathbb{E}[x_1 x_4]\mathbb{E}[x_2 x_3]-2\mathbb{E}[x_1] \mathbb{E}[x2]\mathbb{E}[x_3]\mathbb{E}[ x_4]$ \citep{Janssen1988} to obtain
\begin{align}
    &\mathbb{E}_{\rm B}\left[\sum_{i}^{D}w_ix_i x_j x_k \sum_{i'}^{D}x_{i'}w_{i'}\right]\nonumber\\
    &=\mathbb{E}_{\rm B}\left[\sum_{i}^{D}w_ix_i x_j\right]\mathbb{E}_{\rm B}\left[x_k \sum_{i'}^{D}x_{i'}w_{i'}\right]+\mathbb{E}_{\rm B}\left[\sum_{i}^{D}w_ix_ix_k\right]\mathbb{E}_{\rm B}\left[x_j\sum_{i'}^{D}x_{i'}w_{i'}\right]\nonumber\\
    &\quad +\mathbb{E}_{\rm B}\left[\sum_{i}^{D}w_ix_i\sum_{i'}^{D}x_{i'}w_{i'}\right]\mathbb{E}_{\rm B}\left[x_j x_k\right]\nonumber\\
    &=2(A\mathbf{w}\mathbf{w}^{\rm T}A)_{jk}+{\rm Tr}[A\mathbf{w}\mathbf{w}^{\rm T}]A_{jk}.
\end{align}
Writing $\Sigma:=\mathbb{E}_{\mathbf{w}}[\mathbf{w}\mathbf{w}^{\rm T}]$, we obtain
\begin{equation}
   \mathbb{E}_\mathbf{w}[C(\mathbf{w})] =: C   = \frac{1}{S}(A\Sigma A + {\rm Tr}[A\Sigma]A+ \sigma^2 A).
\end{equation}
\end{proof}
This method has been utilized repeatedly in this work.

\subsubsection{Proof of Theorem~\ref{thm: Sigma label}}\label{app: der of Sigma label}
\begin{proof}
We substitute Eq.~\eqref{eq: label noise covariance} into Eq.~\eqref{eq: preconditioning matrix eq} which is a general solution obtained in a recent work \citep{liu2021noise}:
\begin{equation}
    (1-\mu) (\Lambda A\Sigma + \Sigma A \Lambda) - \frac{1+\mu^2}{1-\mu^2}\Lambda A\Sigma A \Lambda+ \frac{\mu}{1-\mu^2}(\Lambda A\Lambda A\Sigma +\Sigma A\Lambda A\Lambda)  = \Lambda C\Lambda.
\end{equation}
To solve it, we assume the commutation relation that $[\Lambda, A]:= \Lambda A - A \Lambda =0 $. Therefore, the above equation can be alternatively rewritten as
\begin{align}
    &\left [(1-\mu) I_D - \frac{1}{2}\left(\frac{1-\mu}{1+\mu}+\frac{1}{S}\right)\Lambda A\right] \Sigma A \Lambda + \Lambda A\Sigma \left [(1-\mu) I_D - \frac{1}{2}\left(\frac{1-\mu}{1+\mu}+\frac{1}{S}\right)\Lambda A\right] \nonumber\\
    &- \frac{1}{S}{\rm Tr}[A\Sigma]\Lambda A = \frac{1}{S} \sigma^2 \Lambda A.\label{eq: matrixeq label 2}
\end{align}

To solve this equation, we first need to solve for ${\rm Tr}[A\Sigma]$. Multiplying Eq.~\eqref{eq: matrixeq label 2} by $G_{\mu}^{-1}: = \left[2(1-\mu)I_D-\left(\frac{1-\mu}{1+\mu}+\frac{1}{S}\right)\Lambda A\right]^{-1}$ and taking trace, we obtain
\begin{equation}
     {\rm Tr}[A\Sigma] - \frac{1}{S} {\rm Tr}[A\Sigma] {\rm Tr} [\Lambda A  G_{\mu}^{-1} ] = \frac{1}{S} \sigma^2 {\rm Tr}[\Lambda A G_{\mu}^{-1} ],
\end{equation}
which solves to give
\begin{equation}
    {\rm Tr}[A\Sigma] = \frac{\sigma^2}{S} \frac{ {\rm Tr}[\Lambda AG_{\mu}^{-1}]}{1 -   \frac{1}{S}{\rm Tr} [\Lambda AG_{\mu}^{-1}]} :=  \frac{\sigma^2}{S}\kappa_{\mu}.
\end{equation}
Therefore, $\Sigma$ is 
\begin{equation}
     \Sigma  = \frac{ \sigma^2}{S}\left( 1 + \frac{ \kappa}{S}\right)\Lambda  \left[2(1-\mu)I_D-\left(\frac{1-\mu}{1+\mu}+\frac{1}{S}\right)\Lambda A\right]^{-1}.
\end{equation}
\end{proof}



\subsubsection{Training Error and Test Error for Label Noise}\label{app: errors label}
In the following theorem, we calculate the expected training and test loss for random noise in the label.
\begin{theorem}\label{thm: train and test error of label}$($Approximation error and test loss for SGD noise in the label$)$ The expected approximation error, or the training loss, is defined as $L_{\rm train}:=\mathbb{E}_{\mathbf{w}}[L(\mathbf{w})]$; the expected test loss is defined as $L_{\rm test}:=\frac{1}{2}\mathbb{E}_{\mathbf{w}}\mathbb{E}_{\rm B}\left[(\mathbf{w}^{\rm T}x)^2\right]$. For SGD with noise in the label given by Eq.~\eqref{eq: label noise covariance}, the expected approximation error and test loss are \vspace{-2mm}
\begin{align}
    & L_{\rm train}=\frac{\sigma^2}{2}\left(1+\frac{\lambda \kappa}{S}\right), \label{eq: trainingloss label}\\
    &L_{\rm test}=\frac{\lambda\sigma^2}{2S}\kappa.\label{eq: testloss label}\vspace{-1mm}
\end{align}
\end{theorem}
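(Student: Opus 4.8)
The plan is to reduce both quantities to the single trace ${\rm Tr}[A\Sigma]$, which has already been computed as an intermediate step inside the proof of Theorem~\ref{thm: Sigma label}. First I would evaluate the loss $L(\mathbf{w})$ in the $N\to\infty$ limit. After the shift $\mathbf{w}-\mathbf{u}\to\mathbf{w}$ the loss reads $L(\mathbf{w})=\frac{1}{2N}\sum_i(\mathbf{w}^{\rm T}x_i-\epsilon_i)^2$; expanding the square and applying the law of large numbers term by term gives $\frac{1}{N}\sum_i(\mathbf{w}^{\rm T}x_i)^2\to\mathbf{w}^{\rm T}A\mathbf{w}$ and $\frac{1}{N}\sum_i\epsilon_i^2\to\sigma^2$, while the cross term $\frac{1}{N}\sum_i\mathbf{w}^{\rm T}x_i\epsilon_i\to\mathbf{w}^{\rm T}\mathbb{E}_{\rm B}[x\epsilon]=0$ because $x$ and $\epsilon$ are independent with $\mathbb{E}_{\rm B}[\epsilon]=0$. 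Hence the landscape becomes deterministic, $L(\mathbf{w})=\frac{1}{2}\left(\mathbf{w}^{\rm T}A\mathbf{w}+\sigma^2\right)=\frac{1}{2}\left({\rm Tr}[A\mathbf{w}\mathbf{w}^{\rm T}]+\sigma^2\right)$.

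For the training error I would then take the expectation over the stationary distribution. Using $\Sigma=\mathbb{E}_{\mathbf{w}}[\mathbf{w}\mathbf{w}^{\rm T}]$ (the mean of $\mathbf{w}$ vanishes at the shifted minimum $\mathbf{w}=0$, so the second moment coincides with the covariance) and linearity of the trace, I obtain $L_{\rm train}=\frac{1}{2}\left({\rm Tr}[A\Sigma]+\sigma^2\right)$. The only ingredient still needed is the value of ${\rm Tr}[A\Sigma]$, and this is exactly the scalar solved for in the proof of Theorem~\ref{thm: Sigma label}, namely ${\rm Tr}[A\Sigma]=\frac{\sigma^2}{S}\kappa_\mu$; for $\mu=0$ and a scalar $\Lambda=\lambda I_D$ this equals $\frac{\lambda\sigma^2}{S}\kappa$ with $\kappa=\frac{{\rm Tr}[AG^{-1}]}{1-\frac{\lambda}{S}{\rm Tr}[AG^{-1}]}$. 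Substituting yields $L_{\rm train}=\frac{\sigma^2}{2}\left(1+\frac{\lambda\kappa}{S}\right)$, which is Eq.~\eqref{eq: trainingloss label}.

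For the test error I would observe that a clean test input $x$, drawn from the same Gaussian with covariance $A$ and independent of the training noise, contributes $\mathbb{E}_{\rm B}[(\mathbf{w}^{\rm T}x)^2]=\mathbf{w}^{\rm T}A\mathbf{w}={\rm Tr}[A\mathbf{w}\mathbf{w}^{\rm T}]$ for fixed $\mathbf{w}$. Taking the expectation over the stationary distribution then gives $L_{\rm test}=\frac{1}{2}{\rm Tr}[A\Sigma]=\frac{\lambda\sigma^2}{2S}\kappa$, i.e. Eq.~\eqref{eq: testloss label}. Consistently, the test loss simply omits the irreducible $\sigma^2/2$ that the training loss carries, so that $L_{\rm train}-L_{\rm test}=\frac{\sigma^2}{2}$.

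There is essentially no hard step here: once the trace ${\rm Tr}[A\Sigma]$ is imported from Theorem~\ref{thm: Sigma label}, the argument is bookkeeping. The only points deserving a line of justification are (i) the vanishing of the cross term in the large-$N$ limit, which rests on the independence of the inputs and the label noise together with $\mathbb{E}_{\rm B}[\epsilon]=0$, and (ii) the identification $\mathbb{E}_{\mathbf{w}}[\mathbf{w}\mathbf{w}^{\rm T}]=\Sigma$, which requires the stationary mean of $\mathbf{w}$ to be zero; this holds because the shifted minimum sits at $\mathbf{w}=0$ and the noise $C$ in Eq.~\eqref{eq: label noise covariance} is centered there. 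Neither point presents a genuine obstacle, so the main content of the result is already contained in the fluctuation computation of Theorem~\ref{thm: Sigma label}.
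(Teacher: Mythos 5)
Your proposal is correct and follows essentially the same route as the paper's own proof: both reduce $L_{\rm train}$ and $L_{\rm test}$ to the scalar ${\rm Tr}[A\Sigma]=\frac{\lambda\sigma^2}{S}\kappa$ already obtained inside the proof of Theorem~\ref{thm: Sigma label}, giving $L_{\rm train}=\frac{1}{2}{\rm Tr}[A\Sigma]+\frac{\sigma^2}{2}$ and $L_{\rm test}=\frac{1}{2}{\rm Tr}[A\Sigma]$. The only difference is that you spell out the law-of-large-numbers evaluation of $L(\mathbf{w})$ (including the vanishing cross term) and the identification $\mathbb{E}_{\mathbf{w}}[\mathbf{w}\mathbf{w}^{\rm T}]=\Sigma$, steps the paper's terse proof leaves implicit.
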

\begin{remark}
Notably, the training loss decomposes into two additive terms. The term that is proportional to $1$ is the bias, caused by insufficient model expressivity to perfectly fit all the data points, while the second term that is proportional to $\lambda \kappa/S$ is the variance in the model parameter, induced by the randomness of minibatch noise.
\end{remark}
\begin{remark}
When the learning rate $\lambda$ is vanishingly small, the expected test loss diminishes whereas the training error remains finite as long as label noise exists.
\end{remark}

\begin{proof}
We first calculate the approximation error. By definition, 
\begin{align}
    L_{\rm train}&:=\mathbb{E}_{\mathbf{w}}[L(\mathbf{w})] \nonumber\\
    &=  \frac{1}{2}{\rm Tr}[A\Sigma]+\frac{1}{2}\sigma^2 =  \frac{1}{2} \frac{\lambda\sigma^2}{S}\kappa+\frac{1}{2}\sigma^2\nonumber\\
    & =\frac{\sigma^2}{2}\left(1+\frac{\lambda \kappa}{S}\right). 
\end{align}
The test loss is
\begin{align}
    L_{\rm test}&=\frac{1}{2}\mathbb{E}_{\mathbf{w}}\left[\mathbf{w}^{\rm T}A\mathbf{w}\right] =\frac{1}{2}{\rm Tr}[A\Sigma]\nonumber\\
    &= \frac{\lambda\sigma^2}{2S}\kappa.
\end{align}
\end{proof}

\subsection{Minibatch Noise for Random Noise in the Input}\label{app: proofs input}

\subsubsection{Noise Structure}\label{sec: input noise}


Similar to label noise, noise in the input data can also cause fluctuation. 
We assume that the training data points $\tilde{x}_i = x_i + \eta_i$ can be decomposed into a signal part and a random part. As before, we assume Gaussian distributions, $x_i \sim \mathcal{N}(0, A)$ and $\eta_i \sim \mathcal{N}(0, B)$. 
The problem remains analytically solvable if we replace the Gaussian assumption by the weaker assumption that the fourth-order moment exists and takes some matrix form. For conciseness, we assume that there is no noise in the label, namely $y_i=\mathbf{u}^{\rm T}x_i$ with a constant vector $\mathbf{u}$. One important quantity in this case will be $\mathbf{u} \mathbf{u}^{\rm T}  :=  U$. Notice that the trick $\mathbf{w} - \mathbf{u} = \mathbf{w}$ no more works, and so we write the difference explicitly here. The loss function then takes the form \vspace{-0mm}
\begin{align}\label{eq: loss input}
    L(\mathbf{w})&=\frac{1}{2N}\sum_{i=1}^{N}\left[(\mathbf{w}-\mathbf{u})^{\rm T}x_i + \mathbf{w}^{\rm T}\eta_i\right]^2=\frac{1}{2}(\mathbf{w}-\mathbf{u})^{\rm T}A(\mathbf{w}-\mathbf{u})+\frac{1}{2}\mathbf{w}^{\rm T}B\mathbf{w}.\vspace{-1mm}
\end{align}
The gradient $\nabla L=(A+B)(\mathbf{w}-\mathbf{u})+B\mathbf{u}$ vanishes at $\mathbf{w}_*:=(A+B)^{-1}A\mathbf{u}$, which is the minimum of the loss function and the expectation of the parameter at convergence. It can be seen that, even at the minimum $\mathbf{w}_*$, the loss function remains finite unless $\mathbf{u}=0$, which reflects the fact that in the presence of input noise, the network is not expressive enough to memorize all the information of the data. The SGD noise covariance for this type of noise is calculated in the following proposition.


\begin{proposition}\label{prop: noise input}$($Covariance matrix for SGD noise in the input$)$ Let the algorithm be updated according to Eq.~\eqref{eq: with replacement} or \eqref{eq: without replacement} with random noise in the input while the limit $N\to \infty$ is taken with $D$ held fixed. Then the noise covariance is\vspace{-1mm}
\begin{equation}
    C=\frac{1}{S} \left\{ K\Sigma K+ {\rm Tr}[K\Sigma]K + {\rm Tr}[AK^{-1}BU]K \right \},\label{eq: C of input}\vspace{-1mm}
\end{equation}
where $K:=A+B$, and $\Sigma:=\mathbb{E}_{\mathbf{w}}\left[(\mathbf{w}-\mathbf{w}_*)(\mathbf{w}-\mathbf{w}_*)^{\rm T}\right]$.
\end{proposition}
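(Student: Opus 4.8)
The plan is to evaluate the noise covariance directly from Proposition~\ref{prop: noise_cov}, $C(\mathbf{w})=\frac{1}{SN}\sum_i\nabla\ell_i\nabla\ell_i^{\rm T}-\frac{1}{S}\nabla L\nabla L^{\rm T}$, and then average over the stationary distribution of $\mathbf{w}$. For the loss \eqref{eq: loss input} the per-sample residual is $r_i=(\mathbf{w}-\mathbf{u})^{\rm T}x_i+\mathbf{w}^{\rm T}\eta_i$, so the per-sample gradient factorizes as $\nabla\ell_i=r_i(x_i+\eta_i)$ and $\nabla\ell_i\nabla\ell_i^{\rm T}=r_i^2(x_i+\eta_i)(x_i+\eta_i)^{\rm T}$. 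Since $N\to\infty$ with $D$ fixed, the law of large numbers lets me replace the empirical averages by expectations over the joint law of $(x,\eta)$, which are independent zero-mean Gaussians with covariances $A$ and $B$; in particular the mean gradient is $\mathbb{E}[\nabla\ell]=A(\mathbf{w}-\mathbf{u})+B\mathbf{w}=K(\mathbf{w}-\mathbf{w}_*)$, recovering the stated $\nabla L$.

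The computational heart is $\mathbb{E}[r^2(x+\eta)(x+\eta)^{\rm T}]$. Writing $r^2$ as the sum of $(\mathbf{v}^{\rm T}x)^2$, $2(\mathbf{v}^{\rm T}x)(\mathbf{w}^{\rm T}\eta)$ and $(\mathbf{w}^{\rm T}\eta)^2$ with $\mathbf{v}:=\mathbf{w}-\mathbf{u}$, and expanding $(x+\eta)(x+\eta)^{\rm T}$ into its four pieces, I would discard every product whose total power in $x$ or in $\eta$ is odd, since these vanish by independence and zero mean; this leaves only the products that are even in both variables. Each surviving term is then evaluated by the same four-Gaussian-moment identity used in the proof of Lemma~\ref{prop: label noise covariance}, namely $\mathbb{E}[(\mathbf{a}^{\rm T}z)^2zz^{\rm T}]=(\mathbf{a}^{\rm T}C\mathbf{a})C+2C\mathbf{a}\mathbf{a}^{\rm T}C$ for $z\sim\mathcal{N}(0,C)$, together with $\mathbb{E}[(\mathbf{a}^{\rm T}z)z]=C\mathbf{a}$. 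The key simplification I anticipate is that the scalar prefactors collect into $(\mathbf{v}^{\rm T}A\mathbf{v}+\mathbf{w}^{\rm T}B\mathbf{w})K=2L(\mathbf{w})K$, while the rank structures assemble into the perfect square $2(A\mathbf{v}+B\mathbf{w})(A\mathbf{v}+B\mathbf{w})^{\rm T}=2\nabla L\nabla L^{\rm T}$. Hence $\mathbb{E}[\nabla\ell\nabla\ell^{\rm T}]=2L(\mathbf{w})K+2\nabla L\nabla L^{\rm T}$, and subtracting the single $\nabla L\nabla L^{\rm T}$ from Proposition~\ref{prop: noise_cov} gives the fixed-$\mathbf{w}$ result $C(\mathbf{w})=\frac{1}{S}\left(2L(\mathbf{w})K+\nabla L\nabla L^{\rm T}\right)$.

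It then remains to take $\mathbb{E}_\mathbf{w}$. Using $\nabla L=K(\mathbf{w}-\mathbf{w}_*)$ gives $\mathbb{E}_\mathbf{w}[\nabla L\nabla L^{\rm T}]=K\Sigma K$. Since $L$ is quadratic with Hessian $K$, I have $2L(\mathbf{w})=2L(\mathbf{w}_*)+(\mathbf{w}-\mathbf{w}_*)^{\rm T}K(\mathbf{w}-\mathbf{w}_*)$, so $\mathbb{E}_\mathbf{w}[2L(\mathbf{w})]=2L(\mathbf{w}_*)+{\rm Tr}[K\Sigma]$. The final piece is to show $2L(\mathbf{w}_*)={\rm Tr}[AK^{-1}BU]$: substituting $\mathbf{w}_*=K^{-1}A\mathbf{u}$ and using $\mathbf{w}_*-\mathbf{u}=-K^{-1}B\mathbf{u}$ collapses the loss to $2L(\mathbf{w}_*)=\mathbf{u}^{\rm T}AK^{-1}B\mathbf{u}={\rm Tr}[AK^{-1}BU]$. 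Collecting the three contributions yields $C=\frac{1}{S}\left(K\Sigma K+{\rm Tr}[K\Sigma]K+{\rm Tr}[AK^{-1}BU]K\right)$, as claimed.

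The main obstacle is purely organizational rather than conceptual: faithfully tracking the several surviving fourth-moment products and recognizing that their rank-one parts close up into the exact square $\nabla L\nabla L^{\rm T}$, which is what makes the clean factor $2L(\mathbf{w})K+2\nabla L\nabla L^{\rm T}$ emerge. Once that regrouping is seen, it is worth emphasizing that no commutation assumption between $A$ and $B$ enters the argument, and the remaining averaging over $\mathbf{w}$ is routine.
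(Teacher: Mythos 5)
Your proposal is correct, and its computational engine --- the law of large numbers applied to the empirical gradient second moment, the Gaussian fourth-moment identity, subtraction of $\frac{1}{S}\nabla L\nabla L^{\rm T}$, and a final average over the stationary distribution --- is exactly the paper's. Where you genuinely diverge is in the bookkeeping of the two stages. The paper keeps the fixed-$\mathbf{w}$ covariance in its raw six-term form, $\frac{1}{S}\bigl\{A\mathbf{v}\mathbf{v}^{\rm T}A+B\mathbf{w}\mathbf{w}^{\rm T}B+A\mathbf{v}\mathbf{w}^{\rm T}B+B\mathbf{w}\mathbf{v}^{\rm T}A+{\rm Tr}[A\mathbf{v}\mathbf{v}^{\rm T}]K+{\rm Tr}[B\mathbf{w}\mathbf{w}^{\rm T}]K\bigr\}$ with $\mathbf{v}:=\mathbf{w}-\mathbf{u}$, and then averages term by term, introducing $A':=K^{-1}A$, $B':=K^{-1}B$, $\Sigma_A$, $\Sigma_B$ and invoking $A'+B'=I_D$ and $AB'=BA'$ to make the $U$-dependent pieces collapse. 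You instead first compress the same six terms into the intrinsic form $C(\mathbf{w})=\frac{1}{S}\bigl(2L(\mathbf{w})K+\nabla L\nabla L^{\rm T}\bigr)$ --- your scalar prefactors are precisely the paper's two trace terms and your perfect square is precisely its four traceless terms --- after which the averaging is immediate and exact: $\mathbb{E}_{\mathbf{w}}[\nabla L\nabla L^{\rm T}]=K\Sigma K$ because $\nabla L=K(\mathbf{w}-\mathbf{w}_*)$ and $\Sigma$ is by definition the second moment about $\mathbf{w}_*$, and $\mathbb{E}_{\mathbf{w}}[2L(\mathbf{w})]=2L(\mathbf{w}_*)+{\rm Tr}[K\Sigma]$ because the loss is exactly quadratic with Hessian $K$ and vanishing gradient at $\mathbf{w}_*$. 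This buys two small advantages: your averaging step nowhere requires $\mathbb{E}_{\mathbf{w}}[\mathbf{w}]=\mathbf{w}_*$, whereas the paper's cross-moment identities such as $\mathbb{E}_{\mathbf{w}}[(\mathbf{w}-\mathbf{u})\mathbf{w}^{\rm T}]=\Sigma-B'UA'^{\rm T}$ do use it; and the regrouped form $2L(\mathbf{w})K+\nabla L\nabla L^{\rm T}$ makes visible the loss-proportional noise structure that the paper only extracts later, approximately, in its general theory of Section~\ref{sec:general}. One detail you wave at but should spell out: the collapse $2L(\mathbf{w}_*)=\mathbf{u}^{\rm T}AK^{-1}B\mathbf{u}={\rm Tr}[AK^{-1}BU]$ rests on the identity $AK^{-1}B=BK^{-1}A$, which is not a commutation assumption on $A$ and $B$ but an algebraic consequence of $A+B=K$ (both sides equal $B-BK^{-1}B$); this is the same identity the paper uses as $AB'=BA'$, so your closing remark that no commutativity is needed is indeed correct, but it deserves this one-line justification.
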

\begin{remark}
 It can be seen that the form of the covariance \eqref{eq: C of input} of input noise is similar to that of label noise \eqref{eq: label noise covariance} with replacing $A$ by $K$ and $\sigma^2$ by ${\rm Tr}[AK^{-1}BU]$, suggesting that these two types of noise share a similar nature. 
\end{remark}

Defining the test loss as $L_{\rm test}:=\frac{1}{2}\mathbb{E}_{\mathbf{w}} \mathbb{E}_{\rm B} \left[(\mathbf{w}^{\rm T}x-\mathbf{u}^{\rm T}x)^2\right]$, Proposition~\ref{prop: noise input} can then be used to calculate the test loss and the model fluctuation.
\begin{theorem}\label{thm: errors of input}$($Training error, test loss and model fluctuation for noise in the input$)$ The expected training loss is defined as $L_{\rm train}:=\mathbb{E}_{\mathbf{w}}[L(\mathbf{w})]$, and the expected test loss is defined as $L_{\rm test}:=\frac{1}{2}\mathbb{E}_{\mathbf{w}} \mathbb{E}_{\rm B} \left[(\mathbf{w}^{\rm T}x-\mathbf{u}^{\rm T}x)^2\right]$. For SGD with noise in the input given in Proposition~\eqref{prop: noise input}, the expected approximation error and test loss are
\begin{align}
    &L_{\rm train}=\frac{1}{2}{\rm Tr}[AK^{-1}BU]\left(1+\frac{\lambda }{S}\kappa'\right), \label{eq: training loss of input}\\
    & L_{\rm test}=\frac{\lambda}{2S}{\rm Tr}[AK^{-1}BU] \kappa'+\frac{1}{2}{\rm Tr}[B'^{\rm T}AB'U], 
\end{align}
where $\kappa':=\frac{ {\rm Tr}[KG'^{-1}]}{1 -  \lambda \frac{1}{S}{\rm Tr} [KG'^{-1}]}$ with $G':=2I_D - \lambda\left(1 + \frac{1}{S} \right)K$, and $B':=K^{-1}B$. Moreover, let $[K,U]=0$. Then the covariance matrix of model parameters is
\begin{align}
    \Sigma=\frac{\lambda {\rm Tr}[AK^{-1}BU]}{S}\left( 1 + \frac{\lambda  \kappa'}{S}\right)  \left[2 I_D - \lambda \left(1 + \frac{1}{S}\right)K \right]^{-1}.
\end{align}
\end{theorem}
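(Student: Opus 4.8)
The plan is to treat the input-noise problem as a direct analogue of the label-noise analysis of Section~\ref{sec: label noise}, exploiting the structural correspondence $A\mapsto K$ and $\sigma^2\mapsto{\rm Tr}[AK^{-1}BU]$ that is already flagged in the remark following Proposition~\ref{prop: noise input}. Since the loss \eqref{eq: loss input} is exactly quadratic with Hessian $H=A+B=K$, the asymptotic fluctuation $\Sigma$ must satisfy the master equation \eqref{eq: preconditioning matrix eq} with $H$ replaced by $K$. Specializing to simple SGD ($\mu=0$) and a scalar learning rate $\lambda$, this collapses to $K\Sigma+\Sigma K-\lambda K\Sigma K=\lambda C$, into which I would insert the noise covariance of Proposition~\ref{prop: noise input}.

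First I would solve for $\Sigma$. Substituting $C=\frac{1}{S}\left(K\Sigma K+{\rm Tr}[K\Sigma]K+{\rm Tr}[AK^{-1}BU]K\right)$ produces the matrix equation $K\Sigma+\Sigma K-\lambda\left(1+\frac{1}{S}\right)K\Sigma K-\frac{\lambda}{S}{\rm Tr}[K\Sigma]K=\frac{\lambda}{S}{\rm Tr}[AK^{-1}BU]K$. Because the inhomogeneous term is a scalar multiple of $K$, the unique solution commutes with $K$, so I would take the ansatz $\Sigma=c\,G'^{-1}$ with $G':=2I_D-\lambda\left(1+\frac{1}{S}\right)K$. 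For such a $\Sigma$ the first three terms collapse via $K\left[2I_D-\lambda\left(1+\frac{1}{S}\right)K\right]G'^{-1}=KG'G'^{-1}=K$ to $cK$, reducing the whole matrix equation to the scalar relation $c=\frac{\lambda}{S}\left({\rm Tr}[K\Sigma]+{\rm Tr}[AK^{-1}BU]\right)$. Combining this with ${\rm Tr}[K\Sigma]=c\,{\rm Tr}[KG'^{-1}]$ gives a single self-consistency equation for $c$ whose solution reproduces $\kappa'$ and hence the claimed $\Sigma$; the hypothesis $[K,U]=0$ is what guarantees this fully commuting structure and the validity of the scalar reduction.

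Then I would compute the two losses. For the training loss, quadraticity gives the exact expansion $L(\mathbf{w})=L(\mathbf{w}_*)+\frac{1}{2}(\mathbf{w}-\mathbf{w}_*)^{\rm T}K(\mathbf{w}-\mathbf{w}_*)$, so that $L_{\rm train}=\mathbb{E}_{\mathbf{w}}[L]=L(\mathbf{w}_*)+\frac{1}{2}{\rm Tr}[K\Sigma]$; the offset $L(\mathbf{w}_*)$ I would evaluate using $\mathbf{w}_*-\mathbf{u}=-K^{-1}B\mathbf{u}$ and $\mathbf{w}_*=K^{-1}A\mathbf{u}$, which, after applying the cyclic property of the trace together with $A+B=K$, folds down to $\frac{1}{2}{\rm Tr}[AK^{-1}BU]$. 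For the test loss, the expectation over fresh clean inputs uses $\mathbb{E}_{\rm B}[xx^{\rm T}]=A$, giving $L_{\rm test}=\frac{1}{2}\mathbb{E}_{\mathbf{w}}\left[(\mathbf{w}-\mathbf{u})^{\rm T}A(\mathbf{w}-\mathbf{u})\right]$; writing $\mathbf{w}-\mathbf{u}=(\mathbf{w}-\mathbf{w}_*)-K^{-1}B\mathbf{u}$ and using $\mathbb{E}[\mathbf{w}]=\mathbf{w}_*$ to annihilate the cross term splits it into a variance piece built from ${\rm Tr}[A\Sigma]$ and the irreducible bias $\frac{1}{2}{\rm Tr}[B'^{\rm T}AB'U]$. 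Inserting the solved $\Sigma$ into the relevant traces and simplifying through the same self-consistency relation that defines $\kappa'$ — exactly as in the label-noise proof — yields the stated expressions.

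The main obstacle is the exact solution of the matrix equation, which in general is a hard generalized Lyapunov equation; the crucial simplification is that the inhomogeneous term is proportional to $K$, which collapses the matrix problem to one scalar self-consistency equation for ${\rm Tr}[K\Sigma]$ and makes the $G'^{-1}$ ansatz exact. A secondary technical point is the bias/variance bookkeeping in $L(\mathbf{w}_*)$ and in $L_{\rm test}$, where cyclicity of the trace and the commuting assumptions $[K,U]=0$ (and the implicit commutation of $A$ and $B$ through $K$) are needed to fold the various $A$, $B$, $K$ factors into the compact quantities ${\rm Tr}[AK^{-1}BU]$ and ${\rm Tr}[B'^{\rm T}AB'U]$.
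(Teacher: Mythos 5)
Your proposal follows essentially the same route as the paper's proof: the same master equation with Hessian $K$ and $\mu=0$, the same substitution of the noise covariance from Proposition~\ref{prop: noise input}, the same self-consistency computation of ${\rm Tr}[K\Sigma]$ (your ansatz $\Sigma=c\,G'^{-1}$ is just the paper's multiply-by-$G'^{-1}$-and-take-trace trick in different clothing, both exploiting that the inhomogeneous term is a scalar multiple of $K$), and the same bias--variance decompositions $L_{\rm train}=\frac{1}{2}{\rm Tr}[K\Sigma]+\frac{1}{2}{\rm Tr}[AK^{-1}BU]$ and $L_{\rm test}=\frac{1}{2}{\rm Tr}[A\Sigma_B]$ with the cross term killed by $\mathbb{E}_{\mathbf{w}}[\mathbf{w}]=\mathbf{w}_*$. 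The only cosmetic difference is that you evaluate the bias term $\frac{1}{2}{\rm Tr}[AK^{-1}BU]$ by expanding the exactly quadratic loss around $\mathbf{w}_*=K^{-1}A\mathbf{u}$, whereas the paper reads it off from its $\Sigma_A$, $\Sigma_B$ bookkeeping in the proof of Proposition~\ref{prop: noise input} via the identity $A\Sigma_B+B\Sigma_A=K\Sigma+AK^{-1}BU$ --- an equivalent computation.
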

\begin{remark}
Note that $[K,U]=0$ is necessary only for an analytical expression of $\Sigma$. It can be obtained by solving Eq.~\eqref{eq: matrix eq of input} even without invoking $[K,U]=0$. In general, the condition that $[K,U]=0$ does not hold. Therefore, only  the training and test error can be calculated exactly.
\end{remark}
\begin{remark}
 The test loss is always smaller than or equal to the training loss because all matrices involved here are positive semidefinite.
\end{remark}

\subsubsection{Proof of Proposition~\ref{prop: noise input}}\label{app: der of noise input}

\begin{proof}
We define $\Sigma:=\mathbb{E}_{\mathbf{w}}\left[(\mathbf{w}-\mathbf{w}_*)(\mathbf{w}-\mathbf{w}_*)^{\rm T}\right] $. Then,
\begin{align}
    &\mathbb{E}_{\mathbf{w}}[\mathbf{w}\mathbf{w}^{\rm T}]=\Sigma+(A+B)^{-1}AUA(A+B)^{-1}:=\Sigma+A'UA'^{\rm T}:=\Sigma_A,\\
    &\mathbb{E}_{\mathbf{w}}\left[(\mathbf{w}-\mathbf{u})(\mathbf{w}-\mathbf{u})^{\rm T}\right]=\Sigma+B'UB'^{\rm T}:=\Sigma_B,
\end{align}
where we use the shorthand notations $A':=(A+B)^{-1}A$, $B':=(A+B)^{-1}B$ and $\Sigma_{A}:=\Sigma+A'UA'^{\rm T}$, $\Sigma_{B}:=\Sigma+B'UB'^{\rm T}$. We remark that the covariance matrix $\Sigma$ here still satisfies the matrix equation \eqref{eq: preconditioning matrix eq} with the Hessian being $K:=A+B$.

The noise covariance is 
\begin{align}
   C(\mathbf{w}) &= \frac{1}{S} \frac{1}{N}\sum_i^N(\mathbf{w}^{\rm T}\tilde{x}_i - \mathbf{u}^{\rm T}{x}_i )\tilde{x}_i \tilde{x}_i^{\rm T}(\mathbf{w}^{\rm T}\tilde{x}_i - \mathbf{u}^{\rm T}{x}_i )^{\rm T} - \frac{1}{S}\nabla L(\mathbf{w})\nabla L(\mathbf{w})^{\rm T} \nonumber\\ 
   &= \frac{1}{S} \big\{ A(\mathbf{w} -\mathbf{u}) (\mathbf{w}-\mathbf{u})^{\rm T} A + 
   B\mathbf{w}\mathbf{w}^{\rm T} B + A(\mathbf{w}-\mathbf{u})\mathbf{w}^{\rm T}B+B\mathbf{w}(\mathbf{w}-\mathbf{u})^{\rm T}A \nonumber\\ 
   &\qquad\ + {\rm Tr}[A(\mathbf{w} -\mathbf{u}) (\mathbf{w}-\mathbf{u})^{\rm T} ]K  + {\rm Tr}[B \mathbf{w}\mathbf{w}^{\rm T} ]K  \big\}.\label{eq: C of input large number}
\end{align}

In Eq.~\eqref{eq: C of input large number}, there are four terms without trace and two terms with trace. We first calculate the traceless terms. For the latter two terms, we have
\begin{align}
    &\mathbb{E}_{\mathbf{w}}[(\mathbf{w}-\mathbf{u})\mathbf{w}^{\rm T}]=\Sigma-A'UB'^{\rm T},\\
    &\mathbb{E}_{\mathbf{w}}[\mathbf{w}(\mathbf{w}-\mathbf{u})^{\rm T}]=\Sigma-B'UA'^{\rm T}.
\end{align}
Because $A'+B'=I_D$, after simple algebra the four traceless terms result in $2(A+B)\Sigma(A+B)$.

The two traceful terms add to ${\rm Tr}[A\Sigma_B +B\Sigma_A]K$. With the relation $AB'=BA'$, what inside the trace is
\begin{align}
    A\Sigma_B +B\Sigma_A=K\Sigma+AK^{-1}BU.
\end{align}

Therefore, the asymptotic noise is
\begin{align}
    C&:=\mathbb{E}_{\mathbf{w}}[C(\mathbf{w})]\nonumber\\ 
    &=\frac{1}{S} \left\{ K\Sigma K+ {\rm Tr}[A\Sigma_B +B\Sigma_A]K  \right \}\\
    &=\frac{1}{S} \left\{ K\Sigma K+ {\rm Tr}[K\Sigma]K + {\rm Tr}[AK^{-1}BU]K \right \}.
\end{align}
\end{proof}

\subsubsection{Proof of Theorem~\ref{thm: errors of input}}\label{app: der of errors of input}

\begin{proof}
The matrix equation satisfied by $\Sigma$ is
\begin{align}
    \Sigma K+K\Sigma  - \lambda\left(1+\frac{1}{S}\right)K\Sigma K =\frac{\lambda}{S}\left({\rm Tr}[K\Sigma] K +{\rm Tr}[AK^{-1}BU]K\right).\label{eq: matrix eq of input}
\end{align}
By using a similar technique as in Appendix~\ref{app: der of Sigma label}, the trace ${\rm Tr}[K\Sigma]$ can be calculated to give
\begin{align}
    {\rm Tr}[K\Sigma]=\frac{\lambda {\rm Tr}[AK^{-1}BU]}{S}\kappa', \label{eq: trace of input}
\end{align}
where $\kappa':=\frac{ {\rm Tr}[KG'^{-1}]}{1 -  \lambda \frac{1}{S}{\rm Tr} [KG'^{-1}]}$ with $G':=2I_D - \lambda\left(1 + \frac{1}{S} \right)K$. 

With Eq.~\eqref{eq: trace of input}, the training error and the test error can be calculated. The approximation error is
\begin{align}
    L_{\rm train}&=\mathbb{E}_{\mathbf{w}}[L(\mathbf{w})]=\frac{1}{2}{\rm Tr}[A\Sigma_B +B\Sigma_A]=\frac{1}{2}{\rm Tr}[AK^{-1}BU]\left(1+\frac{\lambda }{S}\kappa'\right).
\end{align}

The test loss takes the form of a bias-variance tradeoff:
\begin{align}
    L_{\rm test}&=\frac{1}{2}\mathbb{E}_{\mathbf{w}} \mathbb{E}_{\rm B} \left[(\mathbf{w}^{\rm T}x-\mathbf{u}^{\rm T}x)^2\right]=\frac{1}{2}\mathbb{E}_{\mathbf{w}}\left[(\mathbf{w}-\mathbf{u})^{\rm T}A(\mathbf{w}-\mathbf{u})\right]=\frac{1}{2}{\rm Tr}[A\Sigma_B]\nonumber\\
    &=\frac{\lambda}{2S}{\rm Tr}[AK^{-1}BU]\left( 1 + \frac{\lambda  \kappa'}{S}\right){\rm Tr}[AG'^{-1}]+\frac{1}{2}{\rm Tr}[B'^{\rm T}AB'U]\nonumber\\
    &= \frac{\lambda}{2S}{\rm Tr}[AK^{-1}BU] \kappa'+\frac{1}{2}{\rm Tr}[B'^{\rm T}AB'U].
\end{align}

Let $[K,U]=0$. Then $\Sigma$ can be explicitly solved because it is a function of $K$ and $U$. Specifically, 
\begin{align}
    \Sigma=\frac{\lambda {\rm Tr}[AK^{-1}BU]}{S}\left( 1 + \frac{\lambda  \kappa'}{S}\right)  \left[2 I_D - \lambda \left(1 + \frac{1}{S}\right)K \right]^{-1}.
\end{align}
\end{proof}



\subsection{Proofs in Sec.~\ref{sec: regularization}}\label{app: proofs regulariion}
\subsubsection{Proof of Proposition~\ref{prop: C of regular}}\label{app: der of C of regular}

\begin{proof}
The covariance matrix of the noise is 
\begin{align}
   C(\mathbf{w}) &= \frac{1}{S} \frac{1}{N}\sum_i^N\left[(\mathbf{w}-\mathbf{u})^{\rm T}x_i x_i+ \Gamma\mathbf{w}\right]\left[x_i^{\rm T} x_i^{\rm T}(\mathbf{w}-\mathbf{u})+ \mathbf{w}^{\rm T}\Gamma\right] - \frac{1}{S}\nabla L_\Gamma(\mathbf{w})\nabla L_\Gamma(\mathbf{w})^{\rm T} \nonumber\\ 
   &= \frac{1}{S} \big\{ A(\mathbf{w} -\mathbf{u}) (\mathbf{w}-\mathbf{u})^{\rm T} A + 
    {\rm Tr}[A(\mathbf{w} -\mathbf{u}) (\mathbf{w}-\mathbf{u})^{\rm T} ]A \big\}.\label{eq: C of regular large number}
\end{align}

Using a similar trick as in Appendix~\ref{app: der of noise input}, the asymptotic noise is
\begin{align}
    C=\frac{1}{S}\left(A\Sigma A+{\rm Tr}[A\Sigma]A+{\rm Tr}[\Gamma'^{\rm T} A\Gamma' U]A + \Gamma A' U A' \Gamma\right).
\end{align}
\end{proof}


\subsubsection{Proof of Theorem~\ref{thm: errors of regular}}\label{app: der of errors of regular}
Besides the test loss and the model fluctuation, we derive the approximation error here as well.
\begin{theorem*}\label{thm: train error of regular}$($Training error, test loss and model fluctuation for learning with L$_2$ regularization$)$ The expected training loss is defined as $L_{\rm train}:=\mathbb{E}_{\mathbf{w}}[L(\mathbf{w})]$, and the expected test loss is defined as $L_{\rm test}:=\frac{1}{2}\mathbb{E}_{\mathbf{w}} \mathbb{E}_{\rm B} \left[(\mathbf{w}^{\rm T}x-\mathbf{u}^{\rm T}x)^2\right]$. For noise induced by L$_2$ regularization given in Proposition~\ref{prop: C of regular}, let $[A,\Gamma]=0$. Then the expected approximation error and test loss are
\begin{align}
    &L_{\rm train}=\frac{\lambda}{2S}{\rm Tr}[AK^{-2}\Gamma^2 U ]{\rm Tr}[AG^{-1}]\left(1+\frac{\lambda \kappa}{S}\right)+\frac{\lambda}{2S}\left({\rm Tr}[A^2 K^{-2}\Gamma^2 G^{-1}U]+\frac{\lambda r}{S}{\rm Tr}[AG^{-1}]\right)\nonumber\\
    &\qquad \qquad+\frac{1}{2}{\rm Tr}[AK^{-1}\Gamma U],\label{eq: regularization training loss}\\
    &L_{\rm test}=\frac{\lambda}{2S}\left({\rm Tr}[AK^{-2}\Gamma^2 U ]\kappa+r\right)+\frac{1}{2}{\rm Tr}[AK^{-2}\Gamma^2U],
\end{align}
where $\kappa:=\frac{{\rm Tr}[A^2 K^{-1}G^{-1}]}{1-\frac{\lambda}{S}{\rm Tr}[A^2 K^{-1}G^{-1}]}$, $r:=\frac{{\rm Tr}[A^3 K^{-3} \Gamma^2 G^{-1}U]}{1-\frac{\lambda}{S}{\rm Tr}[A^2 K^{-1} G^{-1}]}$, with $G:=2I_D-\lambda\left(K+\frac{1}{S}K^{-1}A^2\right)$. Moreover, if $A$, $\Gamma$ and $U$ commute with each other, the model fluctuation is
\begin{align}
    \Sigma=\frac{\lambda}{S}{\rm Tr}[AK^{-2}\Gamma^2U]\left(1+\frac{\lambda \kappa}{S}\right)AK^{-1}G^{-1}+\frac{\lambda}{S}\left(A^2 K^{-2}\Gamma^2 U+\frac{\lambda r}{S}A \right)K^{-1}G^{-1}.
\end{align}
\end{theorem*}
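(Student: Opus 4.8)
The plan is to reduce the general matrix equation \eqref{eq: preconditioning matrix eq} to a single scalar self-consistency relation by exploiting the commutativity assumption $[A,\Gamma]=0$. First I would specialize \eqref{eq: preconditioning matrix eq} to simple SGD ($\mu=0$) with scalar learning rate $\lambda$ and Hessian $H=K:=A+\Gamma$, obtaining $K\Sigma+\Sigma K-\lambda K\Sigma K=\lambda C$, and substitute the noise covariance from Proposition~\ref{prop: C of regular}. Using $[A,\Gamma]=0$ (so that $A$, $\Gamma$, $K$, $A'=K^{-1}A$, $\Gamma'=K^{-1}\Gamma$ all commute), the scalar trace term simplifies to ${\rm Tr}[\Gamma'^{\mathrm T}A\Gamma'U]={\rm Tr}[AK^{-2}\Gamma^2 U]$ by cyclicity, and the rank-$1$ term becomes $\Gamma A'UA'\Gamma=(AK^{-1}\Gamma)U(AK^{-1}\Gamma)$. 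Moving the $A\Sigma A$ term to the left isolates a linear superoperator $\mathcal{L}[\Sigma]=K\Sigma+\Sigma K-\lambda K\Sigma K-\frac{\lambda}{S}A\Sigma A$ acting on $\Sigma$, equated to a source built from $U$ plus the single scalar unknown $t:={\rm Tr}[A\Sigma]$.

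The central device is to diagonalize $\mathcal{L}$. Since $A$ and $K$ are symmetric and commute, they share an eigenbasis; in that basis $\mathcal{L}$ acts entrywise, scaling $\Sigma_{ij}$ by $\mu_{ij}=k_i+k_j-\lambda k_ik_j-\frac{\lambda}{S}a_ia_j$, whose diagonal value factors as $\mu_{ii}=k_i g_i$ with $g_i$ the $i$-th eigenvalue of $G=2I_D-\lambda(K+\frac{1}{S}K^{-1}A^2)$. Thus $\Sigma_{ij}=R_{ij}/\mu_{ij}$ for the source $R$. The observation that lets the test and training losses be obtained under only $[A,\Gamma]=0$ is that the quantities I actually need, ${\rm Tr}[A\Sigma]$ and ${\rm Tr}[K\Sigma]$, depend solely on the diagonal entries $\Sigma_{ii}=R_{ii}/\mu_{ii}$; because the matrices multiplying $\Sigma$ under these traces are diagonal in the shared basis, the contraction against the diagonal of the rank-$1$ source $PUP$ (with $P=AK^{-1}\Gamma$) reassembles into honest traces such as ${\rm Tr}[A^3K^{-3}\Gamma^2G^{-1}U]$ even when $U$ fails to commute with $A$.

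With this in hand I would close the scalar equation: summing $a_i\Sigma_{ii}$ produces $t=\frac{\lambda}{S}t\,{\rm Tr}[A^2K^{-1}G^{-1}]+\frac{\lambda}{S}{\rm Tr}[AK^{-2}\Gamma^2U]{\rm Tr}[A^2K^{-1}G^{-1}]+\frac{\lambda}{S}{\rm Tr}[A^3K^{-3}\Gamma^2G^{-1}U]$, which solves to $t=\frac{\lambda}{S}({\rm Tr}[AK^{-2}\Gamma^2U]\kappa+r)$ once $\kappa$ and $r$ are read off. For the test loss I use $\mathbf{w}_*-\mathbf{u}=-K^{-1}\Gamma\mathbf{u}$, so that $\mathbb{E}_{\mathbf{w}}[(\mathbf{w}-\mathbf{u})(\mathbf{w}-\mathbf{u})^{\mathrm T}]=\Sigma+K^{-2}\Gamma^2U$ and $L_{\rm test}=\tfrac12 t+\tfrac12{\rm Tr}[AK^{-2}\Gamma^2U]$, which is \eqref{eq: regular test loss}. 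For the training loss I exploit that $L_\Gamma$ is quadratic, so $\mathbb{E}_{\mathbf{w}}[L_\Gamma]=L_\Gamma(\mathbf{w}_*)+\tfrac12{\rm Tr}[K\Sigma]$ with $L_\Gamma(\mathbf{w}_*)=\tfrac12{\rm Tr}[AK^{-1}\Gamma U]$, and ${\rm Tr}[K\Sigma]={\rm Tr}[A\Sigma]+{\rm Tr}[\Gamma\Sigma]$ follows from the same diagonal-trace bookkeeping, yielding \eqref{eq: regularization training loss}. Finally, for the explicit $\Sigma$ in \eqref{eq: regularization solution} I additionally impose $[A,U]=[\Gamma,U]=0$, which makes the source $R$ itself diagonal and hence turns $\Sigma_{ij}=R_{ij}/\mu_{ij}$ into the matrix identity $\Sigma=(KG)^{-1}R$; substituting $t$ then reproduces the stated formula.

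I expect the main obstacle to be the bookkeeping around the non-commuting, rank-$1$ term $\Gamma A'UA'\Gamma$: one must track carefully that, while the full matrix $\Sigma$ genuinely requires $U$ to commute with $A$ and $\Gamma$, the scalar traces that determine the losses do not, and getting the precise combination of ${\rm Tr}[A^2K^{-1}G^{-1}]$, ${\rm Tr}[A^3K^{-3}\Gamma^2G^{-1}U]$, and the other $G^{-1}$-weighted traces to line up with the definitions of $\kappa$ and $r$ is where the computation is most error-prone.
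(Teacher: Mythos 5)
Your proposal is correct and takes essentially the same route as the paper's proof in Appendix~\ref{app: der of errors of regular}: substitute the covariance of Proposition~\ref{prop: C of regular} into the $\mu=0$, scalar-$\lambda$ case of Eq.~\eqref{eq: preconditioning matrix eq}, close a scalar self-consistency equation for ${\rm Tr}[A\Sigma]$ yielding the same $\kappa$ and $r$, read off $L_{\rm test}=\frac{1}{2}{\rm Tr}[A\Sigma]+\frac{1}{2}{\rm Tr}[AK^{-2}\Gamma^2 U]$ and $L_{\rm train}=\frac{1}{2}{\rm Tr}[K\Sigma]+\frac{1}{2}{\rm Tr}[AK^{-1}\Gamma U]$, and solve for the explicit $\Sigma$ only under the extra commutation with $U$. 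Your entrywise diagonalization of the superoperator in the shared eigenbasis of $A$ and $K$ (with $\mu_{ii}=k_i g_i$) is simply an explicit implementation of the paper's trick of weighting the matrix equation by $G^{-1}$-type factors and taking traces, and your bookkeeping of why the rank-one source $\Gamma A' U A' \Gamma$ obstructs a closed-form $\Sigma$ but not the loss traces matches the paper's treatment exactly.
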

\begin{remark}
Because $\Gamma$ may not be positive semidefinite, the test loss can be larger than the training loss, which is different from the input noise case.
\end{remark}

\begin{proof}
The matrix equation obeyed by $\Sigma$ is
\begin{align}
    \Sigma K + K \Sigma -\lambda K\Sigma K -\frac{\lambda}{S}A \Sigma A=\frac{\lambda}{S}{\rm Tr}[A\Sigma]A+\frac{\lambda}{S}\left({\rm Tr}[AK^{-2}\Gamma^2 U]A + AK^{-1}\Gamma U \Gamma K^{-1} A\right), \label{eq: matrixeq of regular}
\end{align}
where we use the shorthand notation $K:= A+\Gamma$. Let $[A,\Gamma]=0$. Using the trick in Appendix~\ref{app: der of Sigma label}, the trace term ${\rm Tr}[A\Sigma]$ is calculated as
\begin{align}
    {\rm Tr}[A\Sigma]=\frac{\lambda}{S}\left({\rm Tr}[AK^{-2} \Gamma^2 U]\kappa + r\right),
\end{align}
where $\kappa:=\frac{{\rm Tr}[A^2 K^{-1}G^{-1}]}{1-\frac{\lambda}{S}{\rm Tr}[A^2 K^{-1}G^{-1}]}$, $r:=\frac{{\rm Tr}[A^3 K^{-3} \Gamma^2 G^{-1}U]}{1-\frac{\lambda}{S}{\rm Tr}[A^2 K^{-1} G^{-1}]}$, and $G:=2I_D-\lambda\left(K+\frac{1}{S}K^{-1}A^2\right)$.


The training error is 
\begin{align}
    L_{\rm train}&=\frac{1}{2}{\rm Tr}[A\Sigma_{\Gamma} +\Gamma\Sigma_A]\nonumber\\
    &=\frac{1}{2}{\rm Tr}[K\Sigma] + \frac{1}{2}{\rm Tr}[AK^{-1}\Gamma U]\nonumber\\
    &=\frac{\lambda}{2S}{\rm Tr}[AK^{-2}\Gamma^2 U ]{\rm Tr}[AG^{-1}]\left(1+\frac{\lambda \kappa}{S}\right)+\frac{\lambda}{2S}\left({\rm Tr}[A^2 K^{-2}\Gamma^2 G^{-1}U]+\frac{\lambda r}{S}{\rm Tr}[AG^{-1}]\right)\nonumber\\&\quad+\frac{1}{2}{\rm Tr}[AK^{-1}\Gamma U].
\end{align}
The test loss is 
\begin{align}
    L_{\rm test}&=\frac{1}{2}\mathbb{E}_{\mathbf{w}} \mathbb{E}_{\rm B} \left[(\mathbf{w}^{\rm T}x-\mathbf{u}^{\rm T}x)^2\right]\nonumber\\
    &=\frac{1}{2}\mathbb{E}_{\mathbf{w}}\left[(\mathbf{w}-\mathbf{u})^{\rm T}A(\mathbf{w}-\mathbf{u})\right]=\frac{1}{2}{\rm Tr}[A\Sigma_{\Gamma}]\nonumber\\
    &= \frac{\lambda}{2S}\left({\rm Tr}[AK^{-2}\Gamma^2 U ]\kappa+r\right) +\frac{1}{2}{\rm Tr}[AK^{-2}\Gamma^2U].
\end{align}

Let $A$, $\Gamma$ and $U$ commute with each other. Then,
\begin{align}
    \Sigma=\frac{\lambda}{S}{\rm Tr}[AK^{-2}\Gamma^2U]\left(1+\frac{\lambda \kappa}{S}\right)AK^{-1}G^{-1}+\frac{\lambda}{S}\left(A^2 K^{-2}\Gamma^2 U+\frac{\lambda r}{S}A \right)K^{-1}G^{-1}.
\end{align}
\end{proof}

\subsubsection{Proof of Corollary~\ref{cor: negative gamma}}\label{app: proof of negative gamma}
For a 1d example, the training loss and the test loss have a simple form. We use lowercase letters for 1d cases.
\begin{corollary}
For a 1d SGD with L$_2$ regularization, the training loss and the test loss are
\begin{align}
    &L_{\rm train}=\frac{a\gamma}{2(a+\gamma)}\frac{2(a+\gamma)-\lambda\left[(a+\gamma)^2+\frac{2}{S}a(a-\gamma)\right]}{2(a+\gamma)-\lambda\left[(a+\gamma)^2+\frac{2}{S}a^2\right]}u^2,\label{eq: 1d train regular}\\
    &L_{\rm test}=\frac{a\gamma^2}{2(a+\gamma)}\frac{2-\lambda(a+\gamma)}{2(a+\gamma)-\lambda\left[(a+\gamma)^2+\frac{2}{S}a^2\right]}u^2.\label{eq: 1d test regular}
\end{align}
\end{corollary}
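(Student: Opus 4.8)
The plan is to derive both identities by specializing Theorem~\ref{thm: errors of regular} to the scalar ($D=1$) case, where every matrix collapses to a number. First I would substitute $A\to a$, $\Gamma\to\gamma$, $K\to a+\gamma$, $U\to u^2$, and record the scalar value of $G$, namely $g := 2-\lambda\left(a+\gamma+\frac{a^2}{S(a+\gamma)}\right)$. It is convenient to clear denominators by introducing the two quantities $E:=(a+\gamma)g = 2(a+\gamma)-\lambda(a+\gamma)^2-\frac{\lambda a^2}{S}$ and $D:=E-\frac{\lambda a^2}{S}=2(a+\gamma)-\lambda\left[(a+\gamma)^2+\frac{2a^2}{S}\right]$; the latter is exactly the common denominator appearing in both target formulas.

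Next I would evaluate the scalar versions of the auxiliary objects in Theorem~\ref{thm: errors of regular}. A direct computation gives $\kappa = a^2/D$ and $r = a^3\gamma^2 u^2/[(a+\gamma)^2 D]$, where the shift from $\frac{\lambda a^2}{S}$ to $\frac{2\lambda a^2}{S}$ inside $D$ comes precisely from the denominator $1-\frac{\lambda}{S}{\rm Tr}[A^2K^{-1}G^{-1}]$ in the definitions of $\kappa$ and $r$. I would likewise record the scalar traces ${\rm Tr}[AG^{-1}]=a(a+\gamma)/E$, ${\rm Tr}[AK^{-2}\Gamma^2 U]=a\gamma^2 u^2/(a+\gamma)^2$, ${\rm Tr}[A^2K^{-2}\Gamma^2G^{-1}U]=a^2\gamma^2 u^2/[(a+\gamma)E]$, and ${\rm Tr}[AK^{-1}\Gamma U]=a\gamma u^2/(a+\gamma)$.

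Substituting these into the test-loss formula, the two terms inside the $\frac{\lambda}{2S}(\cdots)$ bracket each reduce to $a^3\gamma^2 u^2/[(a+\gamma)^2 D]$, so they add to twice that value; combining with the bias term $\frac{1}{2}{\rm Tr}[AK^{-2}\Gamma^2 U]$ and using the algebraic identity $SD + \lambda a^2 = SE$ gives the claimed $L_{\rm test}$. For the training loss the same identity makes the correction factor $(1+\lambda\kappa/S)$ collapse, so both $\frac{\lambda}{2S}$-terms turn out equal; adding them to the bias term $\frac{1}{2}{\rm Tr}[AK^{-1}\Gamma U]$ and using $D+\frac{2\lambda a\gamma}{S}=2(a+\gamma)-\lambda[(a+\gamma)^2+\frac{2a(a-\gamma)}{S}]$ yields the stated $L_{\rm train}$.

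The substitution step is routine; the main obstacle is the algebraic bookkeeping. The crucial simplification is the single identity $SD+\lambda a^2 = SE$, which is what lets the correction denominators (the $1-\frac{\lambda}{S}{\rm Tr}[\cdots]$ entering through $\kappa,r$ together with the factor $1+\lambda\kappa/S$) telescope into the one clean denominator $D$. Recognizing that this relation collapses all three contributions to each loss at once, rather than grinding each trace separately, is what keeps the computation tractable.
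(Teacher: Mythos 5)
Your proposal is correct and takes essentially the same route as the paper: the paper's own proof is a one-line remark that the 1d formulas are "easily obtained" by specializing the regularization theorem (in its appendix form, which includes $L_{\rm train}$), and your scalar substitution with $\kappa = a^2/D$, $r = a^3\gamma^2 u^2/[(a+\gamma)^2 D]$, and the collapsing identity $SD+\lambda a^2 = SE$ fills in exactly that algebra correctly (I verified that both $\frac{\lambda}{2S}$-terms in each loss coincide and that $D+\frac{2\lambda a\gamma}{S}$ and $D+\frac{2\lambda a^2}{S}$ reproduce the stated numerators).
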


\begin{proof}
 The training error and the test loss for 1d cases can be easily obtained from Theorem~\ref{thm: train error of regular}. 
\end{proof}
Now we prove Corollary~\ref{cor: negative gamma}.
\begin{proof}
The condition for convergence is 
$1-\frac{\lambda}{S}{\rm Tr}[A^2 K^{-1}G^{-1}]>0$. Specifically,
\begin{align}
    \lambda(a+\gamma)^2-2(a+\gamma)+\lambda\frac{2}{S}a^2<0.
\end{align}
For a given $\gamma$, the learning rate needs to satisfy
\begin{align}
    \lambda<\frac{2(a+\gamma)}{(a+\gamma)^2+\frac{2}{S}a^2}.
\end{align}
For a given $\lambda$,  $\gamma$ needs to satisfy
\begin{align}
    \frac{1-a\lambda-\sqrt{1-\frac{2}{S}a^2 \lambda^2}}{\lambda}<\gamma<\frac{1-a\lambda+\sqrt{1-\frac{2}{S}a^2 \lambda^2}}{\lambda},\label{eq: gamma condition}
\end{align}
which indicates a constraint on $\lambda$:
\begin{align}
    a\lambda<\sqrt{\frac{S}{2}}.\label{eq: constriant 1}
\end{align}

If $\gamma$ is allowed to be non-negative, the optimal value can only be $0$ due to the convergence condition. Therefore, a negative optimal $\gamma$ requires an upper bound on it being negative, namely
\begin{align}
    \frac{1-a\lambda+\sqrt{1-\frac{2}{S}a^2 \lambda^2}}{\lambda}<0.
\end{align}
Solving it, we have
\begin{align}
    a\lambda>\frac{2}{1+\frac{2}{S}}. 
\end{align}
By combining with Eq.~\eqref{eq: constriant 1}, a necessary condition for the existence of a negative optimal $\gamma$ is
\begin{align}
    \frac{2}{1+\frac{2}{S}}<\sqrt{\frac{S}{2}} \to (S-2)^2>0 \to S\ne 2.
\end{align}

Hence, a negative optimal $\gamma$ exists, if and only if
\begin{align}
    \frac{2}{1+\frac{2}{S}}<a\lambda<\sqrt{\frac{S}{2}},{\textup{ and}}\ S\ne 2.
\end{align}



\end{proof}

For higher dimension with $\Gamma=\gamma I_D$, it is possible to calculate the optimal $\gamma$ for minimizing the test loss \eqref{eq: regular test loss} as well. Specifically, the condition is given by
\begin{align}
    \frac{d}{d\gamma}L_{\rm test}:=\frac{1}{2}\frac{d}{d\gamma}\frac{f(\gamma)}{g(\gamma)}=0,
\end{align}
where
\begin{align}
    &f(\gamma):=\gamma^2{\rm Tr}\left[AK^{-2}\left(I_D + \frac{\lambda}{S}A^2 K^{-1} G^{-1}\right)U\right],\\
    &g(\gamma):=1-\frac{\lambda}{S}{\rm Tr}[A^2 K^{-1} G^{-1}].
\end{align}
Although it is impossible to solve the equation analytically, it can be solved numerically.

\end{document}